\documentclass{article}

 \usepackage[main, final]{neurips_2026}

\usepackage[utf8]{inputenc} 
\usepackage[T1]{fontenc}    
\usepackage[colorlinks=true,linkcolor=red,citecolor=blue,urlcolor=red]{hyperref}
\usepackage{url}            
\usepackage{booktabs}       
\usepackage{amsfonts}       
\usepackage{nicefrac}       
\usepackage{microtype}      
\usepackage{xcolor}         
\usepackage{aliascnt} 
\usepackage{wrapfig}
\usepackage{nicefrac}
\usepackage{booktabs}
\usepackage{graphicx}
\usepackage{float}
\usepackage{amsmath}
\usepackage{amssymb}
\usepackage{mathtools}
\usepackage{amsthm}
\usepackage{makecell}
\usepackage{placeins}
\usepackage{xspace}
\usepackage{xcolor}
\usepackage{cancel}
\usepackage[capitalize,noabbrev]{cleveref}

\newcommand{\policy}{\ensuremath{\pi(\mathbf{a}|\mathbf{s})}\xspace}
\newcommand{\policyOpt}{\ensuremath{\pi^{\star}(\mathbf{a}|\mathbf{s})}\xspace}

\newcommand{\policyIter}[1]{\ensuremath{\pi^{(#1)}(\mathbf{a}|\mathbf{s})}\xspace}
\newcommand{\rewardIter}[1]{\ensuremath{r^{(#1)}(\mathbf{s},\mathbf{a})}\xspace}

\newcommand{\policyDist}{\ensuremath{\rho_{\pi}(\mathbf{s})}\xspace}
\newcommand{\policyDistOpt}{\ensuremath{\rho_{\pi^{\star}}(\mathbf{s})}\xspace}
\newcommand{\policyDistIter}[1]{\ensuremath{\rho_{\pi^{(#1)}}(\mathbf{s})}\xspace}

\newcommand{\policyOcc}{\ensuremath{\rho_{\pi}(\mathbf{s},\mathbf{a})}\xspace}
\newcommand{\policyOccIter}[1]{\ensuremath{\rho_{\pi^{(#1)}}(\mathbf{s},\mathbf{a})}\xspace}
\newcommand{\policyOccOpt}{\ensuremath{\rho_{\pi^{\star}}(\mathbf{s},\mathbf{a})}\xspace}
\newcommand{\expOcc}{\ensuremath{\rho_{E}(\mathbf{s},\mathbf{a})}\xspace}
\newcommand{\updateOp}[1]{\ensuremath{\mathcal{U}_{\epsilon}^{\rho^{(#1)}}}\xspace}

\newcommand{\policyLR}{\omega}

\theoremstyle{plain}
\newtheorem{theorem}{Theorem}[section]

\newaliascnt{proposition}{theorem}
\newtheorem{proposition}[proposition]{Proposition}
\aliascntresetthe{proposition}
\crefname{proposition}{Proposition}{Propositions}
\Crefname{proposition}{Proposition}{Propositions}

\newtheorem{lemma}[theorem]{Lemma}
\newtheorem{corollary}[theorem]{Corollary}
\theoremstyle{definition}

\newtheorem{assumption}[theorem]{Assumption}
\theoremstyle{remark}
\newtheorem{remark}[theorem]{Remark}

\crefname{lemma}{Lemma}{Lemmas}
\Crefname{lemma}{Lemma}{Lemmas}
\crefname{assumption}{Assumption}{Assumptions}
\Crefname{assumption}{Assumption}{Assumptions}

\theoremstyle{definition}
\newaliascnt{fact}{theorem}
\newtheorem{fact}[fact]{Fact}
\aliascntresetthe{fact}
\crefname{fact}{fact}{facts}
\Crefname{fact}{Fact}{Facts}

\newcounter{thmstep}[theorem]
\renewcommand{\thethmstep}{\thetheorem\Alph{thmstep}}
\crefname{thmstep}{theorem}{theorems}
\Crefname{thmstep}{Theorem}{Theorems}
\newcommand{\ThmStepLabel}[1]{%
  \refstepcounter{thmstep}\label{#1}%
}

\allowdisplaybreaks
\newcounter{assumpstep}[theorem]

\crefname{assumpstep}{assumption}{assumptions}
\Crefname{assumpstep}{Assumption}{Assumptions}

\newcommand{\AssumpStepLabel}[1]{%
  \refstepcounter{assumpstep}%
  \label{#1}%
}

\newenvironment{tightquote}
  {\begin{list}{}{\leftmargin=1em \rightmargin=0pt \topsep=0.3em \itemsep=0pt \parsep=0pt}\item[]}
  {\end{list}}
\newcommand{\coloremph}[1]{\textcolor[HTML]{7B1FA2}{\emph{#1}}}

\title{A Geometric Perspective on Reward Function Updates in Inverse Reinforcement Learning}

\author{%
  Anish Diwan$^{1,2}$ \quad
  Jan Peters$^{1,2,3,4}$ \quad
  Oleg Arenz$^{1,2}$ \\[0.5em]
  $^{1}$Technical University of Darmstadt \quad
  $^{2}$Robotics Institute Germany (RIG) \\
  $^{3}$hessian.AI \quad
  $^{4}$German Research Center for AI (DFKI) \\[0.3em]
  \texttt{anish.diwan@robot-learning.de}
}

\begin{document}

\maketitle

\begin{abstract}
Inverse Reinforcement Learning (IRL) is classically solved using Lagrangian optimisation, by jointly optimising a primal-dual problem with gradient descent to obtain the optimal reward function parameters and corresponding optimal policy. While this algorithmic view of IRL has been highly influential, gradient-based reward updates only paint a partial picture, largely ignoring the geometry of the problem in occupancy space. In this paper, we present an occupancy-space view of reward updates in IRL. We leverage the insight that both the optimal solution and its geometry in occupancy space are known, and show that the geometry along the e-geodesic (straight line in occupancy space) toward the optimal occupancy directly affects the gradient of the IRL dual problem in reward parameter space. This perspective naturally suggests preconditioning the dual gradient with the average Fisher information along the occupancy trajectory. We show that this preconditioned gradient is just a convex interpolation of the old reward and a new target reward, yielding efficient natural-gradient-style updates with little additional computational overhead. We then analyse these occupancy-space reward updates from a Mirror Descent (MD) perspective, and show that policy search over the resulting rewards induces approximate entropic MD on the policy. Finally, we conclude with convergence results of such geometry-aware IRL algorithms\footnote{Our code is available at \href{https://github.com/anishhdiwan/geometric-perspective}{https://github.com/anishhdiwan/geometric-perspective}.}.
\end{abstract}

\section{Introduction}
\label{sec:introduction}
Inverse Reinforcement Learning (IRL) \citep{russell1998learning,ng2000algorithms} is the problem of inferring a reward function under which some observed behaviour is optimal. IRL is classically formulated as entropy-regularized distribution matching, where the goal is to maximise causal entropy while simultaneously minimising the divergence between the occupancy measures of the expert and the learning agent, $\rho_E$ and $\rho_\pi$. We are interested in the specific problem setting of entropy-regularized reverse-KL divergence minimisation between the expert's and agent's occupancy measures. In this setting, we start with the following observation:
\begin{tightquote}
\textit{The optimal solution to the IRL problem admits an explicit characterization in the space of occupancy measures. An approximation of this optimal occupancy ($\rho_{\hat{\pi}}$) \textbf{is known} based on the current reward function. Reward updates in IRL can be seen as facilitating $\rho_\pi \rightarrow \rho_{\hat{\pi}}$.}
\end{tightquote}
\begin{wrapfigure}{r}{0.5\columnwidth}
    \vspace{-0.5em}
    \centering
    \includegraphics[width=\linewidth]{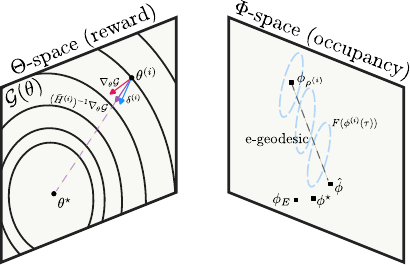}
    \caption{We illustrate the geometry of the IRL problem by showing the dual of \Cref{eq:IRLDM} $\mathcal{G}(\theta)$ in reward parameter space, and highlighting the \textcolor[HTML]{DF1659}{vanilla gradient}, the ideal \textcolor[HTML]{9B59B6}{one-step optimisation direction}, and our \textcolor[HTML]{1E90FF}{integral FIM-preconditioned gradient}. The vanilla gradient is normal to the contour of $\mathcal{G}(\theta)$ and potentially leads to poor reward updates. In contrast, our FIM-preconditioned direction considers the geometry along the e-geodesic in occupancy space, leading to geometry-aware reward updates.}
    \label{fig:hero}
    \vspace{-0.5em}
\end{wrapfigure}
Thus, rather than treating the IRL reward update as a generic gradient descent step on a parametrised objective, we can view it as an attempt to move the current policy occupancy $\rho_\pi$ towards an estimate of the optimal one, $\rho_{\hat{\pi}}$. This changes the nature of the analysis. The central object is no longer only the reward parameter error, but the induced discrepancy in occupancy parameters, which measures progress in the space where the optimum is explicitly characterized. The non-trivial aspect of IRL, however, is that occupancy-space updates are not directly realizable and this change must be induced through policy optimisation on an updated reward function. Consequently, the geometry of the occupancy parameter space and the mapping from occupancy update directions to reward function gradients play a crucial role in the effectiveness of reward function updates in IRL. 

The goal of this paper is to make this geometric structure explicit (illustrated in \Cref{fig:hero}). We show that the gradient-based reward update direction can be greatly improved by preconditioning the gradient with the average Fisher Information Matrix (FIM) along the e-geodesic (occupancy trajectory) of the IRL problem. We analyse the effectiveness of this average FIM preconditioning by relating it to an average Hessian preconditioner, which admits the best-possible (closed form) solution with stepsize 1. We argue that preconditioning with this average FIM yields a natural gradient-style reward function update, and show that this update is equivalent to a convex interpolation of the previous reward function and a new estimate of the optimal reward. Hence, the proposed FIM preconditioned reward update resembles function-space reward updates \citep{arenz2016optimal,diwan2026trust}, and can be realised efficiently without the explicit computation of the FIM \citep{peters2008natural}. Finally, we analyse such geometry-aware reward updates from a Mirror Descent (MD) \citep{nemirovskij1983problem,beck2003mirror} perspective, and show that  policy search over them is equivalent to entropic MD. Our interpretation hence connects inverse reinforcement learning, natural gradient descent, and mirror descent. By centering the analysis on the known occupancy-space optimum, we obtain a more transparent view of reward learning: \coloremph{reward updates are best understood not simply as steps in reward parameter space, but as geometry-induced movements toward an optimal occupancy representation}.
\paragraph{Original Contributions.}
We present a rigorous theoretical analysis of reward learning in IRL through entropy-regularized reverse-KL distribution matching. Our main result is that translating the occupancy space updates (towards a known distribution-matching optimum) into reward space naturally admits geometry-awareness and a reward update strategy that is often better than simple gradient descent on reward parameters. Our main contributions are as follows:
\begin{itemize}
    \item In \Cref{sec:occupancy_space_rew_updates}, we show that the occupancy-space update $\rho_\pi \rightarrow \rho_{\hat{\pi}}$ translates to a convex interpolation of reward functions in reward parameter space
    \begin{align}
        ( \updateOp{i} ) r^{(i)} \equiv r^{(i+1)} = r^{(i)} - \epsilon \delta^{(i)} \label{eq:functional_update}
    \end{align}
    where $\delta^{(i)} = \theta^{(i)} - \beta \log (\nicefrac{\rho_E}{\rho_{\pi^{(i)}}})$ is the update direction. We use the operator $( \updateOp{i} ) r^{(i)}$ to define this interpolation and call it a ``function-space reward update". We note that such updates have been previously studied by \citep{arenz2016optimal,diwan2026trust} and have been empirically shown to be far superior to gradient updates. Our geometric interpretation is the first to explain \textit{why} such function-space updates are effective. We show that
    \begin{itemize}
        \item[(i)] with minimal assumptions, $\delta^{(i)}$ can be interpreted as a gradient preconditioned by the average Fisher Information over the occupancy trajectory, resulting in a natural-gradient style reward update.
        \item[(ii)] $( \updateOp{i} ) r^{(i)}$ is equivalent to Mirror Descent (MD) with a time-varying proximal term.
    \end{itemize}
    \item In \Cref{sec:trirl_analysis} we present a detailed analysis of policy search on such geometry-aware reward updates, showing that it is equivalent to entropic Mirror Descent. Hence, an IRL algorithm built on functional reward updates has a unique geometric interpretation: the policy update performs MD in occupancy space, while the reward update performs MD in the geometry induced by the reverse-KL distribution-matching problem.
    \item In \Cref{sec:convergence_analysis} we present convergence results for such IRL methods.
\end{itemize}

\section{Background}
\label{sec:background}
\paragraph{Notation.} Following standard notation, we define $\Delta_{X}$ as the set of probabilities over $X$ and $\Delta_{X}^{Y}$ as the set of conditional probabilities over $X$ conditioned on $Y$. For two vectors $\mathbf{x},\mathbf{y} \in\mathbb{R}^d$, we denote $\langle \mathbf{x},\mathbf{y} \rangle$ as their inner product. A function $f:\mathbb{R}^d\to\mathbb{R}$ is $L_1$-Lipschitz continuous if $\left \lvert f(\mathbf{x})-f(\mathbf{x'}) \right\rvert \leq L_1 \left\lVert \mathbf{x} - \mathbf{x'} \right\rVert_2$, and it is $L_2$-smooth if it is differentiable and $\left \lVert \nabla_\mathbf{x} f(\mathbf{x})-\nabla_\mathbf{x} f(\mathbf{x'}) \right\rVert_2 \leq L_2 \left\lVert \mathbf{x} - \mathbf{x'} \right\rVert_2$ for all $\mathbf{x} , \mathbf{x'} \in\mathbb{R}^d$. For some function $f$, we denote $f^{(i)}$ for the function at iteration ${(i)}$ and $f^{\text{conj}}$ as the function's convex conjugate. 
\paragraph{Preliminaries.} We model the environment as a Markov Decision Process (MDP) defined by a tuple $(\mathcal{S}, \mathcal{A}, \mu_0, \mathcal{P}, r, \gamma)$, where $\mathcal{S}$ is the state space, $\mathcal{A}$ is the action space, $\mu_0$ is the initial state distribution, $\mathcal{P} \in \Delta_{\mathcal{S}}^{\mathcal{S} \times \mathcal{A}}$ represents the transition dynamics, $r \in \mathbb{R}^{\mathcal{S} \times \mathcal{A}}$ is the (unknown) reward function, and $\gamma \in [0,1)$ is the discount factor. $\Pi \triangleq \Delta_{\mathcal{A}}^{\mathcal{S}}$ is the set of all stationary stochastic policies mapping states in $\mathcal{S}$ to actions in $\mathcal{A}$. We define the normalized discounted occupancy measure \footnote{In some cases, we use $\rho^{(i)}$ as shorthand for $\rho_{\pi^{(i)}}$.} $\policyOcc = (1 - \gamma) \pi(\mathbf{a} \vert \mathbf{s}) \sum_{t=0}^\infty \gamma^t \mu^\pi_t(\mathbf{s})$, where $\mu_{t+1}^\pi(\mathbf{s}')= \sum_{\mathbf{s}} \mu^\pi_{t}(\mathbf{s}) \sum_{\mathbf{a}} \pi(\mathbf{a} \vert \mathbf{s})\mathcal{P}(\mathbf{s}' \vert \mathbf{s}, \mathbf{a})$ is the state distribution for $t>0$, with $\mu^\pi_0(\mathbf{s})=\mu_0(\mathbf{s})$. We work in the $\gamma$-discounted infinite horizon setting and use an expectation with respect to a policy $\pi \in \Pi$ to denote an expectation with respect to the trajectory it generates; $\mathbb{E}_{\pi}[r(\mathbf{s}, \mathbf{a})] \triangleq \mathbb{E}_{(\mathbf{s}_0\sim \mu_0, \mathbf{a}_t \sim \pi, \mathbf{s}_{t+1} \sim \mathcal{P})} [\sum_{t=0}^\infty \gamma^{t} r(\mathbf{s}_t, \mathbf{a}_t)]$. We refer to the (unknown) expert policy as $\pi_E$ and its occupancy measure as $\expOcc$.
\paragraph{Entropy-Regularized Inverse RL.} Given a set of demonstration trajectories $\mathcal{D} = \{ (\mathbf{s}_i,\mathbf{a}_i) \}_{i=1}^N$ sampled from an expert, MaxCausalEnt-IRL \citep{ziebart2010modeling} aims to find a reward function $r \in \mathbb{R}^{\mathcal{S} \times \mathcal{A}}$ and a policy $\pi \in \Pi$ that solve the optimisation problem: $\max_{\pi} \min_{r} \left( \mathbb{E}_{\rho_\pi}[r(\mathbf{s}, \mathbf{a})] + H(\pi) \right) - \mathbb{E}_{\rho_{E}}[r(\mathbf{s}, \mathbf{a})]$. Here $H(\pi) \triangleq \mathbb{E}_{\pi}[-\log \pi(\mathbf{a} \vert \mathbf{s})]$ is the discounted causal entropy of the policy in state $\mathbf{s}$. On formulating and solving the problem's Lagrangian, the optimal policy satisfies the Boltzmann distribution $\pi^{\star}(\mathbf{a} \vert \mathbf{s}) = \nicefrac{1}{Z(\mathbf{s})}\exp{(Q^{\star}(\mathbf{s}, \mathbf{a}))}$ where $V^{\star}(\mathbf{s}) = \log \sum_{\mathbf{a}'} \exp(Q^{\star}(\mathbf{s}, \mathbf{a}'))$ and $Q^{\star}(\mathbf{s}, \mathbf{a}) =  r(\mathbf{s}, \mathbf{a}) + \gamma \mathbb{E}_{\mathbf{s}' \sim \mathcal{P}} \left[\log \sum_{\mathbf{a}'} \exp(Q^{\star}(\mathbf{s}', \mathbf{a}')) \right]$ are the optimal soft value functions and $Z(\mathbf{s})$ is a normalization term.

\label{sec:distribution_matching}
\paragraph{IRL by Reverse-KL Divergence Minimisation.}
Our analysis focuses on the setting of inverse RL by entropy-regularized \footnote{Strictly speaking, in MaxCausalEnt-IRL the causal entropy is the objective rather than a regulariser; regularisation arises from relaxing the feature-matching constraint. Common relaxations bound the absolute or Euclidean feature-matching error, which corresponds to $\ell_1$ or $\ell_2$ penalties on the reward parameters in the dual~\citep{dudik2007maximum}. We instead relax the constraint via a reverse-KL divergence between occupancies. Rather than penalising the reward parameters, this relaxation modifies the target distribution: the agent no longer matches $\rho_E$ but the reward-dependent target $\rho_{\hat{\pi}}$.} reverse-KL divergence minimisation between the expert's and agent's state-action occupancies $\policyOcc$ and $\expOcc$. Divergence minimisation has been central to several prior works like \citep{fu2017learning,kostrikov2019imitation,arenz2020non,al2023ls,watson2023coherent,diwan2026trust} and is represented by the following objective:
\begin{align}
   \label{eq:IRLDM}
\mathcal{J}(\policy) &= \mathbb{E}_{\policyDist} \left[ H(\policy) \right] - \beta \mathbb{E}_{\policyOcc} \left[ \log \frac{\policyOcc}{\expOcc} \right]
\end{align}
where $\beta$ trades off between entropy regularization and the objective of matching the expert's occupancy. This problem is typically solved using Lagrangian optimisation, by formulating its Lagrangian $\mathcal{L}(\pi, r)$, deriving the Lagrangian dual $\mathcal{G}(r)$, and obtaining the optimal solution by setting derivatives to zero. It can be shown that, in the limit $\beta \rightarrow \infty$, the optimal policy and value functions are the same as for MCE-IRL, and the optimal reward function is $r^{\star} = \beta \log \left( \nicefrac{\expOcc}{\rho_{\pi^\star}(\mathbf{s},\mathbf{a})} \right)$, where $(\pi^\star, r^\star)$ is the same saddle point as in MCE-IRL \citep[Appendix I,II]{arenz2016optimal}. Primal-dual algorithms like \citet{ziebart2010modeling} find this optimal solution by alternating between optimising the primal variables to get $\pi_r$, and taking gradient descent steps on the dual variable $r$.
\section{Occupancy-Space Reward Function Updates}
\label{sec:occupancy_space_rew_updates}
We are now ready to formalize the occupancy-space update $\rho_\pi \rightarrow \rho_{\hat{\pi}}$. We begin by stating our central modelling assumption.
\begin{assumption}[EF Occupancies]
\label[assumption]{assumption:boltzmann_occupancies}
The expert and agent occupancies are exponential family distributions with energy functions that are linear in features $\psi(\mathbf{s}, \mathbf{a})$. Such that, $\expOcc = Z^{-1}_{E} \exp{\left( \phi_{E}^\intercal \; \psi(\mathbf{s}, \mathbf{a}) \right)}$ and $\policyOccIter{i} = Z^{-1}_{\rho^{(i)}} \exp{\left( \phi_{\rho^{(i)}}^\intercal \; \psi(\mathbf{s}, \mathbf{a}) \right)}$.
\end{assumption}
Where, $\phi_{E} , \phi_{\rho}$ $\in \Phi \triangleq \mathbb{R}^{d}$ are the natural parameters of the exponential family distribution. \Cref{assumption:boltzmann_occupancies} is borrowed from \citet{arenz2016optimal} and allows us to analyse expressive function-space updates. While we do not assume the features $\psi(\mathbf{s}, \mathbf{a})$ to be known, in \Cref{appendix:assumptions} we also provide a method to learn them from a finite demonstration dataset. \Cref{assumption:boltzmann_occupancies} directly implies that the reward function is linear in features as $r^{(i)} = \theta^{(i) \intercal} \psi(\mathbf{s}, \mathbf{a})$, where $\theta \in \Theta \triangleq \mathbb{R}^{d}$ are reward parameters (hence, we can interchange $\mathcal{G}(r)$ and $\mathcal{G}(\theta)$). In this paper, we refer to $\Theta$ as the \coloremph{reward parameter space} and $\Phi$ as the \coloremph{occupancy parameter space}. From \Cref{sec:distribution_matching} we know the optimal reward function $r^{\star} = \beta \log \left( \nicefrac{\expOcc}{\rho_{\pi^\star}(\mathbf{s},\mathbf{a})} \right)$. From this, and the current reward iterate $r^{(i)}$, it follows that:
\begin{align}
\label{eq:phi_hat}
    \rho_{\hat{\pi}}(\mathbf{s}, \mathbf{a}) \propto \exp{\left( \log \expOcc - \frac{1}{\beta} r^{(i)} \right)} &= \exp{\left( \left(\phi_{E} - \frac{\theta^{(i)}}{\beta}\right)^\intercal \psi(\mathbf{s}, \mathbf{a}) \right)} = \exp{\left(\hat{\phi}^\intercal \psi(\mathbf{s}, \mathbf{a}) \right)} \nonumber \\
    &\text{\textit{where,}} \quad \hat{\phi} = \left(\phi_{E} - \frac{\theta^{(i)}}{\beta}\right)
\end{align}
and $\rho_{\hat{\pi}}$ is an estimate of the optimal solution based on the current reward function, and serves as an effective target occupancy---similar to $\rho_{E}$ in the unregularized IRL setting. From \Cref{assumption:boltzmann_occupancies} we also get an alternate definition of $\delta^{(i)}$ (from \Cref{eq:functional_update})
\begin{align*}
    \beta \log \frac{\expOcc}{\policyOccIter{i}} &= \beta \left( \phi_{E} -  \phi_{\rho^{(i)}} \right)^\intercal \psi(\mathbf{s}, \mathbf{a})  + \textrm{const.} \\
    \text{\textit{Substituting}} &\text{\textit{ in \Cref{eq:functional_update} and writing in reward parameters}} \\
    \delta^{(i)} &= \theta^{(i)} - \beta \left( \phi_{E} -  \phi_{\rho^{(i)}} \right) \nonumber \\
    &= \beta \left( \phi_{\rho^{(i)}} - \hat{\phi} \right).
\end{align*}
Here, $\hat{\phi} = \phi_{E} - \left( \nicefrac{\theta^{(i)}}{\beta} \right)$ is the natural parameter of $\rho_{\hat{\pi}}$ and the optimisation target in $\Phi$ space. In other words, the optimisation target in occupancy-parameter space is fully determined by the expert features $\phi_{E}$ and the current reward $\theta^{(i)}$. Our occupancy space interpretation of IRL is that reward updates must facilitate policies that indeed aim to push $\phi_{\rho^{(i)}}$ towards $\hat{\phi}$. However, as highlighted before, a direct step in $\Phi$ space is non-trivial since there is no reliable way to obtain policies from occupancies. Instead, we (i) show that the direct $\Phi$-space direction $\phi_{\rho^{(i)}} - \hat{\phi}$ is equivalent to a geometry-aware gradient in $\Theta$-space (resembling the natural gradient \citep{amari1998natural,peters2008natural}), (ii) the geometry-aware gradient in $\Theta$-space is in-turn equivalent to a convex interpolation of reward functions and is hence directly computable without the approximation of hessians/FIMs, (iii) this geometry-aware reward update is often better than just using the standard parametric gradient. 

\subsection{Assumptions}
\label{sec:reward_assumptions}
Before diving further into the study of $\Phi$-space reward updates, we first list and motivate the properties and assumptions necessary for our analysis. 
\begin{assumption}[Lipschitz \& Bounded Dual]
\label[assumption]{assumption:lipschitzness_and_boundedness}
$\mathcal{G}$ is $L_{\mathcal{G}}$-Lipschitz Continuous on the set of feasible reward parameters $\Theta$. $\mathcal{G}$ is lower bounded.
\begin{align*}
    \left\lvert \mathcal{G}(\theta) - \mathcal{G}(\theta') \right\rvert \leq L_{\mathcal{G}} \left\lVert \theta - \theta' \right\rVert_2 \quad \forall \; \theta, \theta' \in \Theta
\end{align*}
\end{assumption}
\begin{assumption}[Curvature]
\label[assumption]{assumption:covariance_conditions}
At iteration $i$, for $\tau \in [0,1]$, let $\phi^{(i)}(\tau) \triangleq \hat{\phi} + \tau (\phi_{\rho^{(i)}} - \hat{\phi})$ be the optimisation trajectory in $\Phi$-space (along the e-geodesic between $\phi_{\rho^{(i)}}$ and $\hat{\phi}$), ${\text{Cov}}_{\rho_{\phi}}[.]$ be the covariance matrix, and there be constants $0 < m \leq M < \infty$. Then,
\AssumpStepLabel{assumption:nonflat_curvature}
\AssumpStepLabel{assumption:uniform_covariate_conditioning}
\[
\begin{array}{@{}ll@{\qquad}ll@{}}
\textnormal{\textbf{(A)}} &
\underset{\rho_{\phi^{(i)} (\tau)}}{\text{Cov}}[\psi(\mathbf{s}, \mathbf{a})] \succ 0,
&
\textnormal{\textbf{(B)}} &
mI \preceq \underset{\rho_{\phi^{(i)} (\tau)}}{\text{Cov}}[\psi(\mathbf{s}, \mathbf{a})] \preceq MI .
\end{array}
\]
\end{assumption}
\Cref{assumption:nonflat_curvature} is fairly common in RL \citep{kakade2001natural,peters2008natural,liu2020improved,neu2021online} and implies that the feature covariance along the path between $\phi_{\rho^{(i)}}$ and $\hat{\phi}$ is non-singular and hence the curvature of $\Phi$ is non-flat in the region in which we are interested. It also implies that ${\text{Cov}}_{\rho_{\phi}}[.]$ is symmetric positive definite, and hence invertible. \Cref{assumption:uniform_covariate_conditioning} is a stronger version of \Cref{assumption:nonflat_curvature} and is also fairly common in prior works \citep[Assumptions~2.1,5.2,2,6]{liu2020improved,zhang2020global,viano2022proximal,chen2022sample}. It can be interpreted as a bound on the eigenvalues of ${\text{Cov}}_{\rho_{\phi}}[.]$. \Cref{assumption:uniform_covariate_conditioning} is not necessary in all cases \footnote{Used in \Cref{prop:first-ord-accurate,theorem:rewardRegret,sec:connection_to_pseudo_gradient}.} and, unless stated otherwise, we default to \Cref{assumption:nonflat_curvature}.
\subsection{Reward Update Analysis}
\label{sec:reward_analysis}
This section presents our first set of main results. We connect the $\Phi$-space update, $\phi_{\rho^{(i)}} - \hat{\phi}$ to a variant of natural gradient descent, then show that it translates to a function-space update in $\Theta$-space, and quantify its benefits and compare it with standard gradient descent updates. Finally, we show that this $\Phi$-space reward update is equivalent to Mirror Descent with a time-varying proximal term. We start by characterizing the one-step direct optimisation of $\mathcal{G}$, which is the ideal update direction in $\Theta$-space.
\begin{proposition}[Direct Dual Optimisation]
\label[proposition]{prop:direct-dual-opt}
The reward-parameter space direct optimisation step is given by
\begin{align*}
    \theta^{\star} = \theta^{(i)} - \left(\bar{H}^{(i)}\right)^{-1} \nabla_{\theta} \mathcal{G}
\end{align*}
where $\bar{H}^{(i)} = \int_0^1 \nabla^2_{\theta} \mathcal{G}(\theta^{(i)}(\tau)) d\tau \succ 0$, and $\theta^{(i)}(\tau) \triangleq \theta^{\star} + \tau \left(\theta^{(i)} - \theta^{\star} \right)$ is a line in the space of reward parameters from the globally optimal parameter to the current parameter.
\end{proposition}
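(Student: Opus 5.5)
The plan is to apply the fundamental theorem of calculus to the gradient of the dual along the segment $\theta^{(i)}(\tau)$, and then use first-order optimality at $\theta^{\star}$. Write $\nabla_\theta \mathcal{G}$ for $\nabla_\theta \mathcal{G}(\theta^{(i)})$. Define $g(\tau) \triangleq \nabla_\theta \mathcal{G}(\theta^{(i)}(\tau))$, so that $g(0) = \nabla_\theta\mathcal{G}(\theta^{\star})$ and $g(1) = \nabla_\theta \mathcal{G}$. By the chain rule, $g'(\tau) = \nabla^2_\theta \mathcal{G}(\theta^{(i)}(\tau))\,(\theta^{(i)} - \theta^{\star})$. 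Integrating over $[0,1]$ then gives
\begin{align*}
\nabla_\theta \mathcal{G} - \nabla_\theta \mathcal{G}(\theta^{\star}) = \left(\int_0^1 \nabla^2_\theta \mathcal{G}(\theta^{(i)}(\tau))\, d\tau\right)(\theta^{(i)} - \theta^{\star}) = \bar{H}^{(i)} (\theta^{(i)} - \theta^{\star}).
\end{align*}
Since $\theta^{\star}$ is the global minimiser of the lower-bounded (\Cref{assumption:lipschitzness_and_boundedness}), differentiable dual, we have $\nabla_\theta\mathcal{G}(\theta^{\star}) = 0$. Provided $\bar H^{(i)}$ is invertible, rearranging yields $\theta^{\star} = \theta^{(i)} - (\bar H^{(i)})^{-1}\nabla_\theta \mathcal{G}$.

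The remaining work is to justify the regularity and to establish $\bar H^{(i)} \succ 0$. Under \Cref{assumption:boltzmann_occupancies} with linear rewards $r = \theta^\intercal\psi$, the dual has a log-partition structure. The occupancy induced by $\theta$ corresponds to the exponential-family parameter $\phi(\theta)$, and the dual gradient is a moment-matching difference of the form $\mathbb{E}_{\rho}[\psi] - \mathbb{E}_{\rho_E}[\psi]$, which is smooth in $\theta$. I would compute $\nabla^2_\theta \mathcal{G}$ explicitly using the standard exponential-family identity $\nabla^2 \log Z = \mathrm{Cov}[\psi]$. This expresses the Hessian as a positive multiple (involving $\beta$ and the Jacobian of $\theta \mapsto \phi$) of a feature covariance.

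Positive definiteness then follows from \Cref{assumption:nonflat_curvature}, provided the reward segment $\theta^{(i)}(\tau)$ maps to the $\Phi$-space segment $\phi^{(i)}(\tau)$ on which the assumption is stated. Because $\hat\phi = \phi_E - \theta^{(i)}/\beta$ is affine in $\theta$, I expect the map between the two lines to be affine. In that case each Hessian along the line is a positive scalar times $\mathrm{Cov}_{\rho_{\phi^{(i)}(\tau)}}[\psi] \succ 0$. An integral of positive definite matrices over $[0,1]$ is positive definite, because $v^\intercal \bar H^{(i)} v = \int_0^1 v^\intercal \nabla^2 \mathcal{G}\, v\, d\tau > 0$ for $v \neq 0$, by continuity of the integrand. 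Hence $\bar H^{(i)}$ is invertible.

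The main obstacle is this identification step: showing that the Hessian of $\mathcal{G}$ along the reward-space line is governed by the covariance along the e-geodesic in \Cref{assumption:covariance_conditions}. If that correspondence fails to be exact, I would fall back on a weaker route. The identity $\nabla_\theta \mathcal{G} = \bar H^{(i)}(\theta^{(i)}-\theta^\star)$ holds for any $C^2$ function regardless. Strict convexity of $\mathcal{G}$, which follows from the log-partition form and \Cref{assumption:nonflat_curvature} applied pointwise, then suffices to obtain $\bar H^{(i)} \succ 0$.
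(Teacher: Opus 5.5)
Your derivation of the identity is exactly the paper's proof: the fundamental theorem of calculus for $\nabla_\theta\mathcal{G}$ along $\theta^{(i)}(\tau)$, stationarity $\nabla_\theta\mathcal{G}(\theta^\star)=0$, and inversion of $\bar H^{(i)}$. The paper never proves $\bar H^{(i)}\succ 0$; it simply writes ``assuming it's invertible''.

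Where you go beyond the paper, your main route fails. The dual gradient is not $\mathbb{E}_\rho[\psi]-\mathbb{E}_{\rho_E}[\psi]$. It is $\mu(\phi_\rho)-\mu(\hat\phi)$, with a $\theta$-dependent target $\hat\phi=\phi_E-\theta/\beta$ (\Cref{prop:vanillaGradient}). So, as the paper itself computes in the appendix,
\[
\nabla^2_\theta\mathcal{G}(\theta)=F(\phi_\rho)\,\nabla_\theta\phi_\rho+\tfrac{1}{\beta}F(\phi_E-\theta/\beta).
\]
This is a sum of two terms. The first contains the Jacobian of the generally non-affine map $\theta\mapsto\phi_\rho$, which passes through the soft-optimal policy and the dynamics, and it is not a covariance.

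Moreover, the reward segment is not carried onto $\phi^{(i)}(\tau)$. Its endpoint $\theta^\star$ corresponds to $\phi_{\rho^\star}=\phi_E-\theta^\star/\beta$. In general this is neither endpoint of the e-geodesic from $\hat\phi=\phi_E-\theta^{(i)}/\beta$ to $\phi_{\rho^{(i)}}$. Along $\theta^{(i)}(\tau)$, the second term is evaluated on the segment from $\phi_{\rho^\star}$ to $\hat\phi$, and the first at $\phi_\rho(\theta^{(i)}(\tau))$. The paper relies on this mismatch. In \Cref{prop:functional_update_contraction}, $(H^{(i)})^{-1}\bar H^{(i)}$ is a nontrivial residual curvature, with condition numbers well above $1$ in \Cref{tab:contraction_rates}. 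And $\nabla^2_\theta\mathcal{G}$ reduces to $\frac{1}{\beta}F$ only at $\theta^\star$, and only under the extra \Cref{assumption:local_phi_invariance}.

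Your fallback points in the right direction, but as written it has two problems. It applies \Cref{assumption:covariance_conditions}(A) at points where it is not assumed. And the ``log-partition form'' does not fix the sign of the Jacobian term. A version that works:
\begin{enumerate}
\item[(i)] Let $\pi_\theta$ be the soft-optimal policy for $r_\theta$. The first term equals $\nabla_\theta\mathbb{E}_{\rho_{\pi_\theta}}[\psi]$, which is $(1-\gamma)$ times the Hessian of the soft-optimal value. That value is convex in $\theta$, being a pointwise maximum over policies of functions affine in $\theta$. So the term is positive semidefinite.
\item[(ii)] On the finite set $\mathcal{S}\times\mathcal{A}$ every member of the family has full support. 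Hence $v^\intercal\mathrm{Cov}_{\rho_\phi}[\psi]\,v=\mathrm{Var}_{\rho_\phi}[v^\intercal\psi]$ vanishes iff $v^\intercal\psi$ is constant, a condition independent of $\phi$. So (A) at the single point $\hat\phi$ already gives $F(\phi)\succ 0$ for every $\phi$.
\end{enumerate}
Hence $\nabla^2_\theta\mathcal{G}\succeq\frac{1}{\beta}F(\phi_E-\theta/\beta)\succ 0$ on the whole reward segment, and your integral argument then gives $\bar H^{(i)}\succ 0$.

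This repair uses exactly the $\theta$-dependence of the target that your version of the gradient dropped. With the target frozen at $\rho_E$, only the policy term survives, and it can be singular. For example, it is singular along feature directions that act as potential shaping and leave the soft-optimal policy unchanged.
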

$\bar{H}^{(i)}$ is the hypothetical ideal preconditioner, which cannot be computed directly since it requires knowledge of the optimal parameter $\theta^{\star}$. The remainder of this section derives a tractable surrogate using only quantities along the known optimisation trajectory toward $\hat{\phi}$. Finally, \Cref{sec:direct_optimisation} quantifies how close this surrogate gets to direct optimisation.
\begin{proposition}[Vanilla Gradient]
\label[proposition]{prop:vanillaGradient}
Let $\mu(\phi) \triangleq \mathbb{E}_{\rho_{\phi}} \left[ \psi(\mathbf{s}, \mathbf{a}) \right]$ be the mean map for some occupancy determined by natural parameter $\phi$, where $A(\phi) = \log Z_{\rho_{\phi}} = \log \sum_{\mathbf{s},\mathbf{a}} \exp{(\phi^\intercal \; \psi(\mathbf{s}, \mathbf{a}))}$ is the log partition function and $F(\phi)$ is the Fisher Information Matrix. Then,
\[
\begin{gathered}
\mu(\phi) = \nabla_{\phi} A(\phi), \qquad \nabla_{\phi} \mu(\phi) = \nabla_{\phi}^2 A(\phi) = \underset{\rho_{\phi}}{\text{Cov}}[\psi(\mathbf{s}, \mathbf{a})] = F(\phi)
\\
\nabla_{\theta} \mathcal{G} = \mathbb{E}_{\policyOccIter{i}} \left[ \psi(\mathbf{s}, \mathbf{a}) \right] - \mathbb{E}_{\rho_{\hat{\pi}}} \left[ \psi(\mathbf{s}, \mathbf{a}) \right] = \mu(\phi_{\rho^{(i)}}) - \mu(\hat{\phi})
\end{gathered}
\]
\end{proposition}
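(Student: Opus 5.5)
The statement splits into two parts: the exponential-family identities for $A$, and the gradient of the dual $\mathcal{G}$. I would handle them in that order.

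\emph{Exponential-family identities.} Write $A(\phi)=\log\sum_{\mathbf{s},\mathbf{a}}\exp(\phi^\intercal\psi(\mathbf{s},\mathbf{a}))$ and differentiate under the finite sum. The first derivative is
\[
\nabla_\phi A=\sum_{\mathbf{s},\mathbf{a}}\rho_\phi(\mathbf{s},\mathbf{a})\psi(\mathbf{s},\mathbf{a})=\mu(\phi).
\]
Differentiating again uses $\nabla_\phi\rho_\phi=\rho_\phi(\psi-\mu(\phi))$, which gives
\[
\nabla_\phi\mu=\mathbb{E}_{\rho_\phi}[\psi\psi^\intercal]-\mu\mu^\intercal=\mathrm{Cov}_{\rho_\phi}[\psi].
\]
For the Fisher information, the score is $\nabla_\phi\log\rho_\phi=\psi-\mu(\phi)$, so $F(\phi)=\mathbb{E}[(\psi-\mu)(\psi-\mu)^\intercal]$, which is the same covariance. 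This part is routine.

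\emph{Dual gradient.} The main work is identifying $\mathcal{G}(\theta)$ explicitly. I would form the Lagrangian of \Cref{eq:IRLDM}, following \citet{arenz2016optimal}, with the reward as the multiplier on the feature/occupancy matching. The key identity is
\[
-\beta\,\mathbb{E}_{\rho_\pi}\Bigl[\log\tfrac{\rho_\pi}{\rho_E}\Bigr]=\beta H(\rho_\pi)+\beta\,\mathbb{E}_{\rho_\pi}[\log\rho_E],
\]
with $\log\rho_E=\phi_E^\intercal\psi-\log Z_E$. Adding a reward term $\theta^\intercal\psi$ to the expected objective, the inner maximisation over $\rho_\pi$ (subject to Bellman flow, with policy entropy) yields the Boltzmann solution with reward $r=\theta^\intercal\psi$.

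The dual should then decompose into two pieces. The first is a soft-value / log-partition term of the policy, whose gradient by Danskin's theorem is $\mathbb{E}_{\rho_{\pi^{(i)}}}[\psi]$, the occupancy of the policy optimal for $r^{(i)}$. The second is a term depending on $\theta$ only through $\beta A(\phi_E-\theta/\beta)$, coming from the conjugate of the entropy-regularised reverse-KL part. Using the first identity, its gradient is
\[
\beta\cdot(-1/\beta)\,\mu(\phi_E-\theta/\beta)=-\mu(\hat\phi),
\]
by the definition of $\hat\phi$ in \Cref{eq:phi_hat}. Adding the two gives $\nabla_\theta\mathcal{G}=\mu(\phi_{\rho^{(i)}})-\mu(\hat\phi)$. \Cref{assumption:boltzmann_occupancies} then lets me write $\mathbb{E}_{\rho^{(i)}}[\psi]=\mu(\phi_{\rho^{(i)}})$.

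The main obstacle is writing the dual so that the reverse-KL regulariser appears as a log-partition function of $\phi_E-\theta/\beta$. I would try splitting the primal into an auxiliary distribution $q$ coupled to $\rho_\pi$ by the constraint $\mathbb{E}_q[\psi]=\mathbb{E}_{\rho_\pi}[\psi]$ with multiplier $\theta$. Maximising $-\beta\mathrm{KL}(q\|\rho_E)-\theta^\intercal\mathbb{E}_q[\psi]$ over $q$ in closed form gives $q\propto\rho_E e^{-\theta^\intercal\psi/\beta}=\rho_{\hat\pi}$, and the optimal value is $\beta A(\hat\phi)$ up to constants. Signs must be tracked carefully so that the result matches the stated orientation of the gradient.
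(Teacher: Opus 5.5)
Your proposal is correct. For the exponential-family identities it follows the paper's argument: the paper simply asserts $\mu=\nabla_\phi A$ and $\nabla_\phi\mu=\nabla^2_\phi A=\mathrm{Cov}_{\rho_\phi}[\psi]=F(\phi)$ from $\rho_\phi=Z^{-1}\exp(\phi^\intercal\psi)$, and your differentiation under the finite sum and your score computation fill in the routine steps.

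The real difference is the dual gradient. The paper never writes $\mathcal{G}$ down. It takes $\nabla_\theta\mathcal{G}=\mathbb{E}_{\rho_{\pi^{(i)}}}[\psi]-\mathbb{E}_{\rho_{\hat\pi}}[\psi]$ as known from \citet{arenz2016optimal}, as it says explicitly in the proof of \Cref{theorem:erroneous_functional_update}. It then only rewrites the two expectations as $\mu(\phi_{\rho^{(i)}})$ and $\mu(\hat\phi)$ via \Cref{assumption:boltzmann_occupancies} and \Cref{eq:phi_hat}.

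You instead reconstruct the dual through the $(\pi,q)$ splitting. This gives $\mathcal{G}(\theta)=\max_\pi\{\mathbb{E}_{\rho_\pi(\mathbf{s})}[H(\pi)]+\theta^\intercal\mathbb{E}_{\rho_\pi}[\psi]\}+\beta A(\phi_E-\theta/\beta)-\beta A(\phi_E)$, and you read off the two terms from Danskin and the chain rule through $A$. With the Lagrangian term $+\theta^\intercal(\mathbb{E}_{\rho_\pi}[\psi]-\mathbb{E}_q[\psi])$, the signs come out in the stated orientation.

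Your route buys a self-contained proof. It explains where $\rho_{\hat\pi}$ comes from: it is the closed-form maximiser of the reverse-KL piece, i.e.\ the ``modified target'' of the paper's footnote. It also surfaces two hypotheses the paper leaves tacit. First, $\pi^{(i)}$ must be the exact soft-optimal policy for $r^{(i)}$; this is what Danskin needs, with uniqueness coming from strict concavity of the causal entropy in the occupancy. Second, $\mathcal{G}(\theta)$ is the full dual $\mathcal{G}(r)$, with multiplier $r$ on the constraint $q=\rho_\pi$, restricted to $r=\theta^\intercal\psi$. That is what the paper's remark about interchanging $\mathcal{G}(r)$ and $\mathcal{G}(\theta)$ implicitly means. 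The paper's route is shorter but outsources the only substantive step.

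One clean-up: your claim that the inner maximisation over $\rho_\pi$ ``yields the Boltzmann solution with reward $r=\theta^\intercal\psi$'' holds only after the split. If the reverse-KL term stays inside the $\rho_\pi$-maximisation, the maximiser is not Boltzmann in $r$ alone. You can therefore drop the ``key identity'' paragraph in favour of the splitting construction.
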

\begin{remark}
\label{remark:geodesics}
\Cref{assumption:boltzmann_occupancies,prop:vanillaGradient} together imply that, in the dually-flat geometry of the exponential family distribution, the vanilla gradient $\nabla_{\theta} \mathcal{G}$ is an m-geodesic (in expectation parameters) whereas the occupancy-space direction $\phi_{\rho^{(i)}} - \hat{\phi}$ is an e-geodesic (in natural parameters).
\end{remark}
Having determined this $\Theta$-space ``vanilla" gradient, we now wish to characterize the $\Theta$-space geometry to better identify our surrogate for the average Hessian from \Cref{prop:direct-dual-opt}. We start by considering an appropriate notion of distance in this space. Rather than considering the standard Euclidean norm, we consider the more general Mahalanobis norm $\left\lVert \mathbf{x} \right\rVert_{H} = \sqrt{\mathbf{x}^\intercal H \mathbf{x}}$ where $H \in \mathbb{R}^{d \times d}$ is a symmetric positive definite matrix (covariance matrix) determining the geometry of the space. Setting $H = I$ returns the Euclidean norm (we will shortly clarify the role of $H$). We then define a quadratic mirror map $\Omega^{(i)}(\theta) : \Theta \to \mathbb{R}$ to recover this norm as a Bregman divergence.
\begin{proposition}[Bregman Div.\ \& Mahalanobis Norm]
\label[proposition]{prop:mahaBregman}
Given some potentially time-varying symmetric positive definite matrix $H^{(i)}$, the Bregman divergence under a quadratic mirror map $\Omega^{(i)}(\theta) \triangleq \nicefrac{\left< \theta , H^{(i)} \theta \right>}{2 \beta}$ is the scaled squared Mahalanobis norm.
\end{proposition}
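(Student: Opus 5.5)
The plan is a direct computation from the definition of the Bregman divergence,
\[
D_{\Omega^{(i)}}(\theta,\theta') \triangleq \Omega^{(i)}(\theta) - \Omega^{(i)}(\theta') - \left< \nabla_\theta \Omega^{(i)}(\theta'), \theta - \theta' \right>,
\]
specialised to the quadratic mirror map $\Omega^{(i)}(\theta) = \nicefrac{\left< \theta, H^{(i)}\theta\right>}{2\beta}$. The goal is to show that this equals $\frac{1}{2\beta}\left\lVert \theta-\theta'\right\rVert_{H^{(i)}}^2$.

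First, I would record that $\Omega^{(i)}$ is a legitimate mirror map. Since $H^{(i)}$ is symmetric positive definite and $\beta>0$, the map is strictly convex and differentiable on $\Theta=\mathbb{R}^d$. Its gradient is $\nabla_\theta\Omega^{(i)}(\theta') = \nicefrac{H^{(i)}\theta'}{\beta}$. The symmetry of $H^{(i)}$ is used here: in general the gradient is $\nicefrac{(H^{(i)}+H^{(i)\intercal})\theta'}{2\beta}$, and symmetry collapses this.

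Second, I would substitute the gradient into the definition:
\[
D_{\Omega^{(i)}}(\theta,\theta') = \tfrac{1}{2\beta}\left( \theta^\intercal H^{(i)}\theta - \theta'^\intercal H^{(i)}\theta' - 2\,\theta'^\intercal H^{(i)}(\theta-\theta') \right).
\]
Expanding and using $\theta'^\intercal H^{(i)}\theta = \theta^\intercal H^{(i)}\theta'$, which again follows from symmetry, the bracket becomes
\[
\theta^\intercal H^{(i)}\theta - 2\theta'^\intercal H^{(i)}\theta + \theta'^\intercal H^{(i)}\theta' = (\theta-\theta')^\intercal H^{(i)}(\theta-\theta').
\]
Hence $D_{\Omega^{(i)}}(\theta,\theta') = \frac{1}{2\beta}\left\lVert\theta-\theta'\right\rVert_{H^{(i)}}^2$.

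Third, I would note that positive definiteness of $H^{(i)}$ makes $\left\lVert\cdot\right\rVert_{H^{(i)}}$ a genuine norm, so the Bregman divergence is nonnegative and vanishes only when $\theta=\theta'$. Taking $H^{(i)}=I$ recovers the Euclidean case. The time dependence on $i$ plays no role, because the identity holds for each fixed $i$.

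There is no real obstacle here. The only point requiring care is to invoke the symmetry of $H^{(i)}$ at both steps: when computing the gradient, and when merging the cross terms.
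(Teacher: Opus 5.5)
Your proposal is correct and follows the paper's proof: both compute $\nabla_\theta \Omega^{(i)}(\theta') = H^{(i)}\theta'/\beta$, substitute it into the definition of the Bregman divergence, and merge the cross terms to obtain $\frac{1}{2\beta}\lVert \theta - \theta' \rVert_{H^{(i)}}^2$. You also state explicitly the two places where the symmetry of $H^{(i)}$ is used, which the paper leaves implicit.
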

\Cref{prop:mahaBregman} shows that the proximal term in our subsequent Mirror Descent update is just the squared Mahalanobis norm. Having defined this proximal term, we now (i) define the Mahalanobis norm's covariance matrix $H^{(i)}$ in our specific case and derive the $\Phi$-space update $\phi_{\rho^{(i)}} - \hat{\phi}$ in terms of $H^{(i)}$, and (ii) show that this update indeed performs mirror descent.
\begin{lemma}[Preconditioned Gradient]
\label[lemma]{lemma:natural_gradient}
Let $H^{(i)} \triangleq \int_0^1 F(\phi^{(i)}(\tau)) d\tau$ be the integral Fisher Information Matrix (FIM) over $\phi^{(i)} (\tau)$, the optimisation trajectory in $\Phi$-space. Then,
\begin{align}
    \delta^{{(i)}} = \beta \left( \phi_{\rho^{(i)}} - \hat{\phi} \right) &= \beta \left( H^{(i)} \right)^{-1} \nabla_{\theta} \mathcal{G} \label{eq:delta_nablaG}
\end{align}
\end{lemma}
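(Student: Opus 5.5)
The plan is to use the fundamental theorem of calculus applied to the mean map $\mu(\phi)$ along the e-geodesic $\phi^{(i)}(\tau)$, with the identities of \Cref{prop:vanillaGradient}. By that proposition, $\nabla_{\theta}\mathcal{G} = \mu(\phi_{\rho^{(i)}}) - \mu(\hat{\phi})$. The endpoints of the trajectory $\phi^{(i)}(\tau) = \hat{\phi} + \tau(\phi_{\rho^{(i)}} - \hat{\phi})$ are exactly these two natural parameters: $\phi^{(i)}(0) = \hat{\phi}$ and $\phi^{(i)}(1) = \phi_{\rho^{(i)}}$. So the gradient difference can be written as the integral of a derivative along this straight segment in $\Phi$.

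Concretely, I will define $g(\tau) \triangleq \mu(\phi^{(i)}(\tau))$. The log-partition $A$ is a finite (or suitably convergent) log-sum-exp of linear functions of $\phi$, so it is $C^\infty$. Hence $g$ is continuously differentiable, and the chain rule gives
\begin{align*}
g'(\tau) = \nabla_{\phi}\mu(\phi^{(i)}(\tau))\,\frac{d\phi^{(i)}(\tau)}{d\tau} = F(\phi^{(i)}(\tau))\left(\phi_{\rho^{(i)}} - \hat{\phi}\right),
\end{align*}
using $\nabla_\phi \mu = \nabla^2_\phi A = F$ from \Cref{prop:vanillaGradient}. Integrating over $[0,1]$ and pulling the constant vector $\phi_{\rho^{(i)}} - \hat{\phi}$ out of the integral then yields
\begin{align*}
\nabla_{\theta}\mathcal{G} = g(1) - g(0) = \left(\int_0^1 F(\phi^{(i)}(\tau))\,d\tau\right)\left(\phi_{\rho^{(i)}} - \hat{\phi}\right) = H^{(i)}\left(\phi_{\rho^{(i)}} - \hat{\phi}\right).
\end{align*}

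Next I will establish invertibility of $H^{(i)}$. By \Cref{assumption:nonflat_curvature}, each $F(\phi^{(i)}(\tau)) = \mathrm{Cov}_{\rho_{\phi^{(i)}(\tau)}}[\psi] \succ 0$. The map $\tau \mapsto F(\phi^{(i)}(\tau))$ is continuous, so for any $x \neq 0$ the function $\tau \mapsto x^\intercal F(\phi^{(i)}(\tau))x$ is continuous and strictly positive on the compact interval $[0,1]$. Its integral is therefore strictly positive, so $H^{(i)}$ is symmetric positive definite and hence invertible. Multiplying both sides of the previous display by $\beta (H^{(i)})^{-1}$ gives $\beta(\phi_{\rho^{(i)}} - \hat{\phi}) = \beta (H^{(i)})^{-1}\nabla_\theta \mathcal{G}$. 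Combined with the identity $\delta^{(i)} = \beta(\phi_{\rho^{(i)}} - \hat{\phi})$ derived just before \Cref{sec:reward_assumptions}, this is exactly \Cref{eq:delta_nablaG}.

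The main obstacle is minor and concerns rigor rather than computation. First, the integral of a matrix-valued function must be justified as a componentwise Riemann integral, which is valid because $F$ is smooth. Second, strict positive definiteness must survive integration; the continuity-plus-compactness argument above handles this. If the state-action space were infinite, I would additionally need differentiation under the sum defining $A$, which I would handle by assuming $A$ is finite in a neighbourhood of the segment, the standard regularity condition for exponential families.
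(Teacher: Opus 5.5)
Your proposal is correct and follows essentially the same route as the paper. You apply the fundamental theorem of calculus to the mean map $\mu$ along the e-geodesic $\phi^{(i)}(\tau)$ from $\hat{\phi}$ to $\phi_{\rho^{(i)}}$, use $\nabla_{\phi}\mu = F$ to get $\nabla_{\theta}\mathcal{G} = H^{(i)}(\phi_{\rho^{(i)}} - \hat{\phi})$, and then invert $H^{(i)}$. The paper only invokes the invertibility of $H^{(i)}$ implicitly, so your explicit argument that positive definiteness at each $\tau$ survives integration is a welcome addition.
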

where we term the $\beta$-scaled function-space update direction as $\delta^{{(i)}}$ (from \Cref{eq:functional_update}). \Cref{lemma:natural_gradient} relates $\delta^{(i)}$ to the gradient of the dual of \Cref{eq:IRLDM} $\nabla_{\theta} \mathcal{G}$ through the average FIM along the line from $\phi_{\rho^{(i)}}$ to $\hat{\phi}$. This implies that the FIM preconditioned gradient can directly be computed as a convex interpolation of the previous reward function and a new estimate of the optimal reward (i.e. the log-density-ratio $\beta \log (\nicefrac{\rho_E}{\rho_{\pi^{(i)}}})$). This trick allows us to effectively compute natural-gradient-like update directions \emph{without explicitly computing the Hessian/FIM}. We elaborate on this in \Cref{sec:natural_gradient}. We now, leverage this definition of $\delta^{(i)}$ and the Bregman divergence from \Cref{prop:mahaBregman} to show that a mirror descent update in $\Theta$-space is equivalent to $( \updateOp{i} ) r^{(i)}$.
\begin{theorem}[Reward-MD]
\label{theorem:rewardMD}
Let $r^{(i+1)} = ( \updateOp{i} ) r^{(i)} =  r^{(i)} - \epsilon \delta^{(i)}$ be a reward update where $\delta^{(i)}$ is the scaled $\Phi$-space update direction. $( \updateOp{i} ) r^{(i)}$ is equivalent to mirror descent with a time-varying, squared Mahalanobis proximal term.
\end{theorem}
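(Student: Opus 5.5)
The plan is to write down the mirror descent step on $\mathcal{G}$ in $\Theta$-space explicitly, using the quadratic mirror map $\Omega^{(i)}(\theta) = \nicefrac{\langle \theta, H^{(i)}\theta\rangle}{2\beta}$ from \Cref{prop:mahaBregman} with $H^{(i)} = \int_0^1 F(\phi^{(i)}(\tau))\,d\tau$ the integral FIM of \Cref{lemma:natural_gradient}. I will then show that its unique solution coincides with $\theta^{(i)} - \epsilon\delta^{(i)}$. Concretely, I will consider the proximal (linearised) MD problem
\begin{align*}
\theta^{(i+1)} = \arg\min_{\theta\in\Theta} \; \epsilon \left\langle \nabla_\theta \mathcal{G}(\theta^{(i)}), \theta \right\rangle + D_{\Omega^{(i)}}(\theta, \theta^{(i)}).
\end{align*}
By \Cref{prop:mahaBregman}, the Bregman term is $\frac{1}{2\beta}\lVert \theta - \theta^{(i)}\rVert^2_{H^{(i)}}$. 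Since $\Theta = \mathbb{R}^d$, the problem is unconstrained.

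\textbf{Step 1 (well-posedness).} By \Cref{assumption:nonflat_curvature}, each $F(\phi^{(i)}(\tau)) = \text{Cov}_{\rho_{\phi^{(i)}(\tau)}}[\psi] \succ 0$. Integrating over $\tau\in[0,1]$ preserves symmetry and positive definiteness, so $H^{(i)} \succ 0$. I need the integral to be well defined; continuity of $\tau \mapsto F(\phi^{(i)}(\tau))$ follows from smoothness of the log-partition function $A$. Positive definiteness makes the objective strictly convex, so the minimiser is unique and is characterised by the first-order condition.

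\textbf{Step 2 (closed form).} Setting the gradient to zero gives $\epsilon\nabla_\theta\mathcal{G} + \frac{1}{\beta}H^{(i)}(\theta - \theta^{(i)}) = 0$, that is, $\theta^{(i+1)} = \theta^{(i)} - \epsilon\beta (H^{(i)})^{-1}\nabla_\theta\mathcal{G}$. Equivalently, in mirror-map form, $\nabla\Omega^{(i)}(\theta^{(i+1)}) = \nabla\Omega^{(i)}(\theta^{(i)}) - \epsilon\nabla_\theta\mathcal{G}$, and inverting $\nabla\Omega^{(i)} = H^{(i)}/\beta$ gives the same result.

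\textbf{Step 3 (identification).} Invoking \Cref{lemma:natural_gradient}, $\beta(H^{(i)})^{-1}\nabla_\theta\mathcal{G} = \delta^{(i)}$, so $\theta^{(i+1)} = \theta^{(i)} - \epsilon\delta^{(i)}$. Because $r = \theta^\intercal\psi$ is linear in $\theta$ (\Cref{assumption:boltzmann_occupancies}), this is exactly $r^{(i+1)} = r^{(i)} - \epsilon\delta^{(i)} = (\updateOp{i}) r^{(i)}$, up to the additive constant already absorbed in the definition of $\delta^{(i)}$.

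The proximal term is time-varying because $H^{(i)}$ depends on $\phi_{\rho^{(i)}}$ and on $\hat\phi$, and $\hat\phi$ itself depends on $\theta^{(i)}$. The main subtlety is to be honest about this dependence. The mirror map at iteration $i$ is fixed once the iterate is fixed, so the argument is MD with an iteration-dependent (adaptive) Bregman divergence rather than a single fixed geometry. I expect the only real care to be needed in stating this interpretation and in justifying the use of \Cref{lemma:natural_gradient}. That lemma relies on the mean-value identity $\mu(\phi_{\rho^{(i)}}) - \mu(\hat\phi) = H^{(i)}(\phi_{\rho^{(i)}} - \hat\phi)$ along the e-geodesic, obtained by integrating $\nabla\mu = F$ from \Cref{prop:vanillaGradient}. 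Beyond that, the proof is a direct computation.
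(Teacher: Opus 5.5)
Your proposal is correct and follows essentially the same route as the paper: you write the linearised mirror-descent step on $\mathcal{G}$ with the quadratic mirror map $\Omega^{(i)}$ of \Cref{prop:mahaBregman}, solve the first-order condition to get $\theta^{(i)} - \epsilon\beta (H^{(i)})^{-1}\nabla_{\theta}\mathcal{G}$, and identify this with $\theta^{(i)} - \epsilon\delta^{(i)}$ via \Cref{lemma:natural_gradient}. (The paper puts a step size $\omega$ in the proximal term and then sets $\omega=\epsilon$, which is the same objective up to scaling.) Your extra check that $H^{(i)} \succ 0$ under \Cref{assumption:nonflat_curvature}, which guarantees a unique minimiser, is a small detail the paper leaves implicit.
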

Interestingly, the function-space reward update $( \updateOp{i} ) r^{(i)}$ has previously been studied by \citet{arenz2016optimal} and shown to be significantly better than standard gradient steps in reward parameters. \Cref{theorem:rewardMD} provides the first geometric explanation for this empirical superiority: \coloremph{function-space reward updates are Mirror Descent under a time-varying Mahalanobis proximal induced by an average Fisher metric along the e-geodesic to the known optimum in occupancy-space}. The resulting problem also fits the analytical setting of AdaGrad-style time-varying mirror maps \citep[Proposition~3]{duchi2011adaptive,orabona2015generalized} which we leverage in \Cref{sec:convergence_analysis}.
\begin{fact}[Theorem~1 of \citet{arenz2016optimal}]
\label{fact:update_alignment}
The update direction $\delta^{(i)}$ aligns with the ``vanilla" gradient $\nabla_{\theta} \mathcal{G}$.
\end{fact}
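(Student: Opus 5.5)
We read ``aligns'' as: $\delta^{(i)}$ makes an acute angle with $\nabla_{\theta}\mathcal{G}$, i.e. $\langle \delta^{(i)}, \nabla_{\theta}\mathcal{G}\rangle \geq 0$, with strict inequality whenever $\nabla_{\theta}\mathcal{G}\neq 0$. Then $-\epsilon\delta^{(i)}$ is a descent direction for the dual. The plan is to get this directly from \Cref{lemma:natural_gradient}, and to give a second argument from convexity of the log-partition function that needs no curvature assumption.

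\textbf{First route.} By \Cref{lemma:natural_gradient}, $\delta^{(i)} = \beta (H^{(i)})^{-1}\nabla_{\theta}\mathcal{G}$ with $H^{(i)} = \int_0^1 F(\phi^{(i)}(\tau))\,d\tau$. Under \Cref{assumption:nonflat_curvature}, each $F(\phi^{(i)}(\tau)) = \text{Cov}_{\rho_{\phi^{(i)}(\tau)}}[\psi(\mathbf{s},\mathbf{a})]$ is symmetric positive definite for every $\tau\in[0,1]$. For any $x\neq 0$ the map $\tau\mapsto x^\intercal F(\phi^{(i)}(\tau))x$ is continuous and strictly positive, so its integral $x^\intercal H^{(i)} x$ is strictly positive. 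Hence $H^{(i)}\succ 0$ and $(H^{(i)})^{-1}\succ 0$. Therefore
\[
\langle \delta^{(i)}, \nabla_{\theta}\mathcal{G}\rangle = \beta\, \nabla_{\theta}\mathcal{G}^\intercal (H^{(i)})^{-1}\nabla_{\theta}\mathcal{G} > 0
\]
whenever $\nabla_{\theta}\mathcal{G}\neq 0$, since $\beta>0$.

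\textbf{Second route.} Using \Cref{prop:vanillaGradient}, write $\nabla_{\theta}\mathcal{G} = \nabla A(\phi_{\rho^{(i)}}) - \nabla A(\hat{\phi})$ and $\delta^{(i)} = \beta(\phi_{\rho^{(i)}}-\hat{\phi})$. Then
\[
\langle \delta^{(i)}, \nabla_{\theta}\mathcal{G}\rangle = \beta\,\bigl\langle \phi_{\rho^{(i)}}-\hat{\phi},\, \nabla A(\phi_{\rho^{(i)}})-\nabla A(\hat{\phi})\bigr\rangle \geq 0,
\]
by monotonicity of the gradient of the convex function $A$; convexity holds because $\nabla^2 A$ is a covariance matrix. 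Strictness follows from \Cref{assumption:nonflat_curvature}, which makes $A$ strictly convex along the segment.

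\textbf{Quantitative refinement.} Under \Cref{assumption:uniform_covariate_conditioning}, $mI\preceq H^{(i)}\preceq MI$. The Kantorovich-type bound for a symmetric positive definite matrix with condition number at most $\kappa = M/m$ then gives
\[
\cos\angle\bigl(\delta^{(i)},\nabla_{\theta}\mathcal{G}\bigr) \geq \frac{2\sqrt{\kappa}}{1+\kappa}.
\]
This bounds the angle uniformly away from $\pi/2$.

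The main obstacle is only interpretive: fixing what ``aligns'' means. The positive-definiteness of the integrated FIM is the one technical step, and it is handled by the continuity argument above.
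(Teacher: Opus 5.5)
Your proposal is correct. The paper gives no proof of its own here; it imports the statement from \citet{arenz2016optimal}. The argument it relies on internally is exactly your first route: \Cref{lemma:natural_gradient} writes $\delta^{(i)} = \beta (H^{(i)})^{-1}\nabla_{\theta}\mathcal{G}$. \Cref{lemma:pseudograd} then uses $mI \preceq H^{(i)} \preceq MI$ to obtain $\langle \nabla_{\theta}\mathcal{G}, \delta^{(i)}\rangle \geq \frac{\beta}{M}\lVert\nabla_{\theta}\mathcal{G}\rVert^2$ and $\lVert\delta^{(i)}\rVert \leq \frac{\beta}{m}\lVert\nabla_{\theta}\mathcal{G}\rVert$.

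Your second route is genuinely different and avoids the integral FIM altogether. The monotonicity gap it uses is in fact an exact identity for exponential families,
\[
\langle \delta^{(i)}, \nabla_{\theta}\mathcal{G}\rangle = \beta\bigl\langle \phi_{\rho^{(i)}} - \hat{\phi},\, \mu(\phi_{\rho^{(i)}}) - \mu(\hat{\phi})\bigr\rangle = \beta\bigl[\text{KL}(\rho_{\pi^{(i)}} \,\|\, \rho_{\hat{\pi}}) + \text{KL}(\rho_{\hat{\pi}} \,\|\, \rho_{\pi^{(i)}})\bigr].
\]
So you do not need \Cref{assumption:nonflat_curvature} even for strictness. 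The symmetrised KL vanishes only when the two occupancies coincide, and then $\nabla_{\theta}\mathcal{G} = 0$ by \Cref{prop:vanillaGradient}. This makes the second route fully assumption-free beyond the exponential-family structure. The first route, and the paper's \Cref{lemma:pseudograd}, need invertibility of $H^{(i)}$.

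Your Kantorovich refinement, $\cos\angle(\delta^{(i)},\nabla_{\theta}\mathcal{G}) \geq 2\sqrt{\kappa}/(1+\kappa)$, is sharper than the $m/M$ obtained by combining the two bounds of \Cref{lemma:pseudograd}. The paper still needs its scale-carrying form downstream, since an angle bound alone does not control $\lVert\delta^{(i)}\rVert$. For example, the norm bound feeds \Cref{lemma:H_bound}.
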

\Cref{fact:update_alignment} shows that $\delta^{(i)}$ is a sound update direction and $\mathcal{G}$ improves monotonically on the repeated application of $( \updateOp{i} ) r^{(i)}$. Without making any further assumptions, this also implies that for a sufficiently small step size, an IRL method with optimal primal optimisation converges asymptotically when using $( \updateOp{i} ) r^{(i)}$ for the reward update. We now elaborate on this connection by analysing $H^{(i)}$, the integral FIM over the $\Phi$-space trajectory.
\subsubsection{Connection to Natural Gradient}
\label{sec:natural_gradient}
\Cref{eq:delta_nablaG} closely resembles the natural gradient \citep{amari1998natural,kakade2001natural,peters2008natural} $\tilde{\nabla_{\theta}} \mathcal{G} = \left( F(\phi) \right)^{-1} \nabla_{\theta} \mathcal{G}$. In our case, instead of representing the FIM at a single point, $H^{(i)}$ instead represents an integral FIM along the path from the current occupancy parameter $\phi^{(i)}$ to the regularized target occupancy parameter $\hat{\phi}$. Intuitively, this averaging smoothens the FIM, making it less sensitive to local artifacts, while accounting for the upcoming geometry, vaguely resembling implicit gradient updates. Hence, our function-space updates can be viewed as performing a variant of natural gradient descent by stepping towards the steepest descent direction with respect to an \textit{iteration-dependent average Fisher metric} $H^{(i)}$. \Cref{lemma:natural_gradient} shows that this FIM preconditioned gradient is indeed just a convex interpolation of known quantities, allowing us to efficiently compute a geometry-aware update direction without having to estimate the FIM. \Cref{fig:dual_geometry} visualizes the geometry of $\theta$ and $\phi$ parameters, highlighting the functional update direction and how it closely resembles the natural gradient. \Cref{sec:direct_optimisation} further elaborates on the benefits of this update step, comparing $H^{(i)}$ to the best-possible preconditioner. 
\begin{figure}[t!]
    \centering
    \includegraphics[width=\columnwidth]{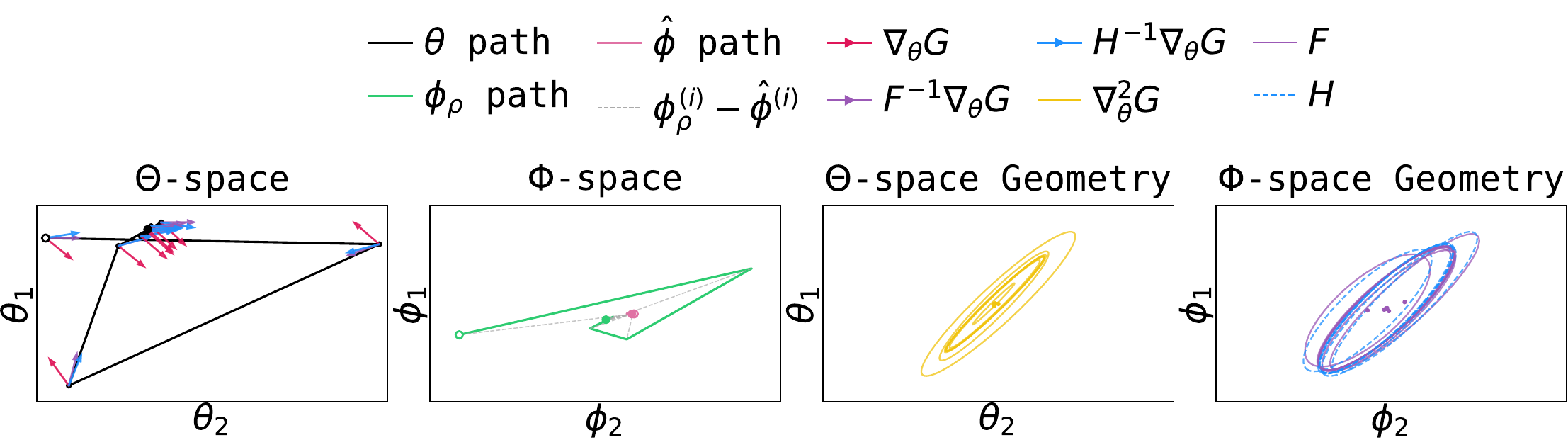}
    \caption{We visualise the vanilla gradient $\nabla_{\theta}\mathcal{G}$, the natural gradient $( F(\phi) )^{-1} \nabla_{\theta} \mathcal{G}$, and our integral FIM preconditioned gradient $\delta^{{(i)}} = \beta ( H^{(i)} )^{-1} \nabla_{\theta} \mathcal{G}$ in a gridworld IRL task. We see that the functional update direction is geometry-aware and finds a good change in reward parameters amidst anisotropic curvature. We also see that the geometry-aware update often aligns with the natural gradient, is intuitively more robust due to the integral over the e-geodesic, while also being much cheaper to compute. Finally, this experiment also confirms \Cref{fact:update_alignment} empirically. We provide experimental details in \Cref{appendix:experiment_details}.}
    \label{fig:dual_geometry}
\end{figure}
\subsubsection{Connection to Gradient Descent \& Direct Optimisation}
\label{sec:direct_optimisation}
In this section we expand on the connection between function-space reward updates and direct optimisation. We argue that the integral FIM preconditioned gradient is often better than the vanilla gradient by comparing the deviation of the obtained iterates from the optimal reward parameters. From \Cref{theorem:rewardMD,eq:delta_nablaG} we know that $\delta^{(i)} = \beta ( H^{(i)} )^{-1} \nabla_{\theta} \mathcal{G}$ is the reward update direction where $H^{(i)} \triangleq \int_0^1 F(\phi^{(i)}(\tau)) d\tau$ is an integral Fisher metric from the current occupancy parameter $\phi_{\rho^{(i)}}$ to an the regularized target occupancy parameter $\hat{\phi}$. The ``vanilla" gradient descent step just uses the identity matrix $I$ as a preconditioner.
\[
\begin{array}{ccc}
\textbf{Direct Optimisation} & \textbf{Functional Update} & \textbf{Vanilla Gradient Update} \\[2pt]
\theta^{\star} = \theta^{(i)} - \left(\bar{H}^{(i)}\right)^{-1} \nabla_{\theta}\mathcal{G}
&
\theta^{(i+1)} = \theta^{(i)} - \epsilon \beta \left(H^{(i)}\right)^{-1} \nabla_{\theta}\mathcal{G}
&
\theta^{(i+1)} = \theta^{(i)} - \alpha I \nabla_{\theta}\mathcal{G}
\end{array}
\]
We can then compare the next iterates $\theta^{(i+1)}$ from the functional update and vanilla gradient steps with $\theta^{\star}$, that is obtained by direct optimisation.
\begin{proposition}[Functional Update Contraction]
\label[proposition]{prop:functional_update_contraction}
With the optimal step size $\epsilon^{\star}$ (\Cref{eq:epsilon_star}), the functional update is first-order accurate with a contraction factor governed by the condition number of the preconditioned curvature matrix $\kappa((H^{(i)})^{-1}\bar H^{(i)})$.
\begin{align*}
    \left\lVert \theta^{(i+1)} - \theta^{\star} \right\rVert_{H^{(i)}} &\leq \frac{\kappa \left((H^{(i)})^{-1}\bar H^{(i)}\right) - 1}{\kappa \left((H^{(i)})^{-1}\bar H^{(i)}\right) + 1} \left\lVert\theta^{(i)} - \theta^\star\right\rVert_{H^{(i)}}
\end{align*}
\end{proposition}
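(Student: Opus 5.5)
Writing $e^{(i)} \triangleq \theta^{(i)} - \theta^\star$ and using the direct optimisation characterisation (\Cref{prop:direct-dual-opt}), we have $\nabla_\theta \mathcal{G} = \bar H^{(i)} e^{(i)}$. This is the integral form of the mean-value theorem applied to $\nabla_\theta\mathcal{G}$ along the segment $\theta^{(i)}(\tau)$; the first-order condition at the optimum gives $\nabla_\theta\mathcal{G}(\theta^\star)=0$. Substituting into the functional update $\theta^{(i+1)} = \theta^{(i)} - \epsilon\beta (H^{(i)})^{-1}\nabla_\theta\mathcal{G}$ from \Cref{lemma:natural_gradient} gives the exact linear error recursion
\begin{align*}
e^{(i+1)} = \left(I - \epsilon\beta\, (H^{(i)})^{-1}\bar H^{(i)}\right) e^{(i)} .
\end{align*}
This is the ``first-order accurate'' part: the error map is affine in $e^{(i)}$ with no remainder, because the averaged Hessian absorbs all higher-order terms.

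Next we change variables to make the iteration matrix symmetric. Under \Cref{assumption:nonflat_curvature}, $H^{(i)}\succ 0$, and by \Cref{prop:direct-dual-opt}, $\bar H^{(i)}\succ 0$. Set $u = (H^{(i)})^{1/2} e$, so that $\lVert e\rVert_{H^{(i)}} = \lVert u\rVert_2$. Then
\begin{align*}
u^{(i+1)} = \left(I - \epsilon\beta S\right) u^{(i)}, \qquad S \triangleq (H^{(i)})^{-1/2}\bar H^{(i)} (H^{(i)})^{-1/2}.
\end{align*}
The matrix $S$ is symmetric positive definite and similar to $(H^{(i)})^{-1}\bar H^{(i)}$. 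It therefore has the same eigenvalues $0<\lambda_{\min}\le\lambda_{\max}$, and $\kappa \triangleq \lambda_{\max}/\lambda_{\min} = \kappa((H^{(i)})^{-1}\bar H^{(i)})$. It follows that
\begin{align*}
\lVert e^{(i+1)}\rVert_{H^{(i)}} \le \max_{\lambda\in[\lambda_{\min},\lambda_{\max}]} \lvert 1-\epsilon\beta\lambda\rvert \; \lVert e^{(i)}\rVert_{H^{(i)}} .
\end{align*}

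Finally, we optimise the step size. The bound is minimised by equalising the endpoint values $1-\epsilon\beta\lambda_{\min}$ and $\epsilon\beta\lambda_{\max}-1$. This gives
\begin{align*}
\epsilon^\star\beta = \frac{2}{\lambda_{\min}+\lambda_{\max}},
\end{align*}
which I expect matches \Cref{eq:epsilon_star}. The resulting factor is $(\lambda_{\max}-\lambda_{\min})/(\lambda_{\max}+\lambda_{\min}) = (\kappa-1)/(\kappa+1)$, which is the claimed contraction.

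The main obstacle is the first step. The identity $\nabla_\theta\mathcal{G} = \bar H^{(i)} e^{(i)}$ requires $\mathcal{G}$ to be twice differentiable along the segment, and it requires $\nabla_\theta \mathcal{G}$ to be the gradient evaluated at $\theta^{(i)}$ with the exact optimal policy. I would take both from \Cref{prop:direct-dual-opt}. I would also check that the $\nabla_\theta\mathcal{G}$ of \Cref{prop:vanillaGradient} coincides with the one used there. A second subtlety is that $\epsilon^\star$ must lie in $(0,1]$ for the update to be a convex interpolation. The contraction bound itself does not need this, so I would note it only as a remark.
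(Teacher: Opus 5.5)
Your proposal is correct and follows essentially the same route as the paper: the exact error recursion $\theta^{(i+1)}-\theta^\star = (I-\epsilon\beta (H^{(i)})^{-1}\bar H^{(i)})(\theta^{(i)}-\theta^\star)$ from the averaged-Hessian identity, a spectral bound on the iteration matrix in the $H^{(i)}$-norm, and the endpoint-balancing step size $\epsilon^\star = 2/(\beta(\lambda_{\min}+\lambda_{\max}))$. Your symmetrisation $S=(H^{(i)})^{-1/2}\bar H^{(i)}(H^{(i)})^{-1/2}$ is a concrete form of the paper's observation that $(H^{(i)})^{-1}\bar H^{(i)}$ is self-adjoint in the $H^{(i)}$-inner product, and hence diagonalisable in an $H^{(i)}$-orthogonal basis with positive real eigenvalues.
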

\begin{proposition}[Vanilla Gradient Contraction]
\label[proposition]{prop:vanilla_gradient_contraction}
With the optimal step size $\alpha^{\star}$ (\Cref{eq:alpha_star}), the vanilla gradient step is first-order accurate with a contraction factor governed by the condition number of the local curvature matrix $\kappa(\bar{H}^{(i)})$. 
\begin{align*}
    \left\lVert \theta^{(i+1)} - \theta^{\star} \right\rVert_2 &\leq \frac{\kappa\!\left(\bar{H}^{(i)}\right) - 1}{\kappa\!\left(\bar{H}^{(i)}\right) + 1} \left\lVert \theta^{(i)} - \theta^{\star} \right\rVert_2
\end{align*}
\end{proposition}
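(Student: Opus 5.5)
The plan is to reduce the vanilla step to a linear iteration on the error $e^{(i)} \triangleq \theta^{(i)} - \theta^\star$ and then apply the classical Chebyshev/Richardson bound for a symmetric positive definite operator.

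\textbf{Step 1: express the gradient through the error.} By \Cref{prop:direct-dual-opt}, the exact relation $\nabla_\theta \mathcal{G} = \bar H^{(i)}(\theta^{(i)} - \theta^\star)$ holds. This is the integral form of the mean value theorem: the gradient of $\mathcal{G}$ vanishes at the global optimum $\theta^\star$, so
\[
\nabla_\theta\mathcal{G}(\theta^{(i)}) = \int_0^1 \nabla^2_\theta \mathcal{G}(\theta^{(i)}(\tau))\, d\tau \,(\theta^{(i)}-\theta^\star).
\]
Note that the relation is exact, not a Taylor truncation. This is the sense in which the update is ``first-order accurate''.

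\textbf{Step 2: write the vanilla iteration as a linear map.} Substituting Step 1 into $\theta^{(i+1)} = \theta^{(i)} - \alpha \nabla_\theta\mathcal{G}$ gives
\[
\theta^{(i+1)} - \theta^\star = \bigl(I - \alpha \bar H^{(i)}\bigr)(\theta^{(i)} - \theta^\star).
\]
Hence $\lVert \theta^{(i+1)}-\theta^\star\rVert_2 \le \lVert I - \alpha\bar H^{(i)}\rVert_2 \,\lVert\theta^{(i)}-\theta^\star\rVert_2$.

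\textbf{Step 3: bound the spectral norm.} The matrix $\bar H^{(i)}$ is symmetric. It is positive definite by \Cref{prop:direct-dual-opt}, since it is an average of symmetric PSD Hessians and positive definite as stated. Let its extreme eigenvalues be $\lambda_{\min}$ and $\lambda_{\max}$. Then
\[
\lVert I-\alpha\bar H^{(i)}\rVert_2 = \max\bigl(|1-\alpha\lambda_{\min}|,\ |1-\alpha\lambda_{\max}|\bigr).
\]
This quantity is minimized by equalizing the two terms, which gives $\alpha^\star = 2/(\lambda_{\min}+\lambda_{\max})$; I expect this to coincide with \Cref{eq:alpha_star}. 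The optimal value is
\[
\frac{\lambda_{\max}-\lambda_{\min}}{\lambda_{\max}+\lambda_{\min}} = \frac{\kappa(\bar H^{(i)})-1}{\kappa(\bar H^{(i)})+1},
\]
where $\kappa = \lambda_{\max}/\lambda_{\min}$. This is the claimed contraction.

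\textbf{Main obstacle.} The delicate step is justifying Step 1 rigorously. It needs $\mathcal{G}$ to be twice differentiable along the segment $\theta^{(i)}(\tau)$ and needs $\nabla_\theta\mathcal{G}(\theta^\star)=0$, so that $\theta^\star$ must be an interior stationary point. Both come from the setting of \Cref{prop:direct-dual-opt}, which I take as given. I would also remark that $\alpha^\star$ depends on the unknown $\bar H^{(i)}$, so the bound is a best case rather than an implementable rate. This parallels \Cref{prop:functional_update_contraction}, which has the same argument carried out in the $H^{(i)}$-norm with $(H^{(i)})^{-1}\bar H^{(i)}$ in place of $\bar H^{(i)}$.
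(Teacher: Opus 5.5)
Your proposal is correct and matches the paper's proof step for step. Both use the exact integral identity $\nabla_\theta\mathcal{G} = \bar H^{(i)}(\theta^{(i)}-\theta^\star)$ from \Cref{prop:direct-dual-opt}, the error recursion through $I-\alpha\bar H^{(i)}$, and the eigenvalue-balancing step size $\alpha^\star = 2/(\lambda_{\min}+\lambda_{\max})$, which gives the factor $(\kappa-1)/(\kappa+1)$.
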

Identity preconditioning ignores the geometry of the optimisation landscape and therefore inherits any anisotropy in $\bar{H}^{(i)}$. In contrast, $(H^{(i)})^{-1}$ in \Cref{prop:functional_update_contraction} rescales the update according to the average Fisher information along the e-geodesic to the target, so $(H^{(i)})^{-1}\bar{H}^{(i)}$ reflects only the residual curvature after this geometric preconditioning. We argue that most IRL problems have anisotropic geometry ($\kappa(\bar{H}^{(i)}) \gg 1$), and hence it is expected that functional updates often outperform vanilla gradient updates. Our conclusions align with the arguments for natural gradients made in \citet{kakade2001natural,peters2008natural} and are confirmed by the numerical analysis in \Cref{tab:angle_deviation,tab:contraction_rates,fig:dual_geometry}.
\section{Policy Search over Functional Reward Updates}
\label{sec:trirl_analysis}
We begin by recalling the standard algorithmic view of IRL. Primal-dual IRL methods alternate between updating the reward and optimising the policy induced by that reward, often running the policy optimisation step to convergence after each reward update. Our analysis so far has focused on the geometry of the reward update in occupancy space. We now turn to the policy side and ask: what policy update is induced when optimising the corresponding geometry-aware reward iterates? As before, we first introduce some assumptions and clarify some facts about the problem setting.
\subsection{Assumptions \& Remarks}
\begin{assumption}[Lipschitz Primal]
\label[assumption]{assumption:lipschitz_primal}
$\mathcal{J}(\policyOcc)$ is $L_{\mathcal{J}}$-Lipschitz Continuous w.r.t the $\ell1$ norm in the set $\varrho$, where $\varrho \subset \Delta_{\mathcal{S} \times \mathcal{A}}$ is the set of valid occupancies (satisfying the Bellman flow constraints).
\begin{align*}
    \left\lvert \mathcal{J}(\policyOcc) - \mathcal{J}(\rho_{\pi'}(\mathbf{s},\mathbf{a})) \right\rvert \leq L_{\mathcal{J}} \left\lVert \policyOcc - \rho_{\pi'}(\mathbf{s},\mathbf{a}) \right\rVert_1 \quad \forall \; \rho_{\pi}, \rho_{\pi'} \in \varrho
\end{align*}
\end{assumption}
\begin{fact}[Policy-Occupancy Relationship]
\label[fact]{fact:J_rho}
There is a unique relationship between a policy $\policy$ and its state-action occupancy $\policyOcc$, wherein $\policyOcc = \policy \policyDist$. Proof in \citep[Lemmas 3.2, 3.3]{ho2016generative} and \citep[Theorem 2]{syed2008apprenticeship}. 
\end{fact}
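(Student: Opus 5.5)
The plan is to prove the two halves of \Cref{fact:J_rho} separately: first the factorisation $\policyOcc = \policy\policyDist$, then the bijection between stationary policies and valid occupancies in $\varrho$. Throughout, $\policyDist \triangleq (1-\gamma)\sum_{t=0}^\infty \gamma^t \mu_t^\pi(\mathbf{s})$ denotes the normalised discounted state occupancy.

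\emph{Factorisation.} This step is immediate from the definition in \Cref{sec:background}. Since $\pi$ is stationary, the factor $\pi(\mathbf{a}|\mathbf{s})$ does not depend on $t$. It can therefore be pulled out of the discounted sum, which leaves exactly $\policyDist$. Summing over $\mathbf{a}$ confirms that $\policyDist$ is the state marginal of $\policyOcc$.

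\emph{Occupancies satisfy the flow constraints.} From the recursion for $\mu_{t+1}^\pi$, multiply by $(1-\gamma)\gamma^{t+1}$ and sum over $t$. This gives the Bellman flow equation
\[
\sum_{\mathbf{a}}\rho(\mathbf{s}',\mathbf{a}) = (1-\gamma)\mu_0(\mathbf{s}') + \gamma\sum_{\mathbf{s},\mathbf{a}}\mathcal{P}(\mathbf{s}'|\mathbf{s},\mathbf{a})\rho(\mathbf{s},\mathbf{a}),
\]
so every $\rho_\pi$ lies in $\varrho$.

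\emph{Uniqueness of the flow solution.} Fix $\pi$ and write $\rho(\mathbf{s},\mathbf{a})=\pi(\mathbf{a}|\mathbf{s})d(\mathbf{s})$. The flow equation becomes $d = (1-\gamma)\mu_0 + \gamma P_\pi^\intercal d$, where $P_\pi$ is the state transition matrix under $\pi$. Because $P_\pi^\intercal$ preserves the $\ell_1$ norm on nonnegative vectors and $\gamma<1$, the map $d\mapsto(1-\gamma)\mu_0+\gamma P_\pi^\intercal d$ is a $\gamma$-contraction in $\ell_1$. Equivalently, $I-\gamma P_\pi^\intercal$ is invertible via a Neumann series. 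Hence there is exactly one solution, namely $d=\rho_\pi(\mathbf{s})$.

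\emph{Inverse map.} Given any $\rho\in\varrho$, define $\pi_\rho(\mathbf{a}|\mathbf{s}) = \rho(\mathbf{s},\mathbf{a})/\sum_{\mathbf{a}'}\rho(\mathbf{s},\mathbf{a}')$ wherever the denominator is positive, and an arbitrary fixed distribution elsewhere. Then $\rho(\mathbf{s},\mathbf{a}) = \pi_\rho(\mathbf{a}|\mathbf{s})\sum_{\mathbf{a}'}\rho(\mathbf{s},\mathbf{a}')$, and $\rho$ satisfies the flow equation for $\pi_\rho$. By the uniqueness step, $\rho=\rho_{\pi_\rho}$, so the map $\pi\mapsto\rho_\pi$ is onto $\varrho$. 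Conversely, $\pi_{\rho_\pi}=\pi$ on every state with $\rho_\pi(\mathbf{s})>0$, by the factorisation. Together these give the claimed one-to-one correspondence.

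\emph{Main obstacle.} The delicate point is states with zero occupancy. On such states the policy is not identifiable from $\rho_\pi$, so the bijection holds only modulo behaviour on unvisited states. I would handle this by identifying policies that agree on the support of $\policyDist$. Alternatively, one can note that under \Cref{assumption:boltzmann_occupancies} every occupancy has full support, since exponential family densities are strictly positive, so the ambiguity vanishes. This is the convention used in \citep[Lemmas 3.2, 3.3]{ho2016generative}, and I would state it explicitly.
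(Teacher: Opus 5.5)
Your argument is correct and matches the standard proof the paper defers to by citation (Syed et al.'s Theorem 2, restated by Ho \& Ermon): factorise $\rho_\pi(\mathbf{s},\mathbf{a})=\pi(\mathbf{a}|\mathbf{s})\rho_\pi(\mathbf{s})$, derive the Bellman flow equations, get uniqueness from the invertibility of $I-\gamma P_\pi^\intercal$, and recover the policy as $\pi_\rho=\rho(\mathbf{s},\mathbf{a})/\sum_{\mathbf{a}'}\rho(\mathbf{s},\mathbf{a}')$. Your caveat is appropriate: the correspondence is one-to-one only modulo behaviour on zero-occupancy states, or under full support such as the exponential-family assumption gives. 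The cited statements gloss over this, so it is right to state it explicitly.
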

\begin{proposition}[Occupancy Mirror Map]
\label[proposition]{prop:rho_q_map}
$\nabla \Omega(\rho) : \Delta_{S \times A} \rightarrow \mathbb{R}^{S \times A}$ maps occupancies $\policyOcc \in \varrho$ to action-value functions $q(\mathbf{s}, \mathbf{a}) \in Q$ and, for a fixed state occupancy $\policyDist$, the gradient of its convex conjugate $\nabla \Omega^{\text{conj}}$ is the inverse map that maps action-value functions back to occupancies.
\end{proposition}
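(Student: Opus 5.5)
We take the mirror map to be the negative conditional entropy of the occupancy for a fixed state marginal. Concretely, with $\rho_\pi(\mathbf{s},\mathbf{a}) = \pi(\mathbf{a}|\mathbf{s})\rho_\pi(\mathbf{s})$ from \Cref{fact:J_rho}, set
\[
\Omega(\rho) \triangleq \sum_{\mathbf{s},\mathbf{a}} \rho(\mathbf{s},\mathbf{a}) \log \frac{\rho(\mathbf{s},\mathbf{a})}{\sum_{\mathbf{a}'}\rho(\mathbf{s},\mathbf{a}')},
\]
which is $-\mathbb{E}_{\rho_\pi(\mathbf{s})}[H(\pi(\cdot|\mathbf{s}))]$. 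This is the regulariser appearing in \Cref{eq:IRLDM}.

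The plan has three steps.
\begin{itemize}
\item[(i)] Compute $\nabla\Omega$ explicitly and identify its image with (soft) action-value functions.
\item[(ii)] Compute the convex conjugate $\Omega^{\text{conj}}$ restricted to occupancies with the prescribed state marginal $\rho_\pi(\mathbf{s})$.
\item[(iii)] Check that $\nabla\Omega^{\text{conj}}\circ\nabla\Omega$ is the identity on $\varrho$ with that marginal, and that $\nabla\Omega\circ\nabla\Omega^{\text{conj}}$ is the identity on $Q$ modulo state-dependent constants.
\end{itemize}

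For step (i), differentiate directly. The derivative of $\rho\log\rho$ contributes $\log\rho(\mathbf{s},\mathbf{a})+1$. The derivative of $-\rho(\mathbf{s})\log\rho(\mathbf{s})$ with respect to each $\rho(\mathbf{s},\mathbf{a})$ contributes $-\log\rho(\mathbf{s})-1$. Hence $\nabla\Omega(\rho)(\mathbf{s},\mathbf{a}) = \log\pi(\mathbf{a}|\mathbf{s})$. By the Boltzmann form $\pi(\mathbf{a}|\mathbf{s}) = \exp(q(\mathbf{s},\mathbf{a}) - V(\mathbf{s}))$ from the MaxCausalEnt background, this equals $q(\mathbf{s},\mathbf{a}) - V(\mathbf{s})$, i.e.\ a soft action-value function (an advantage), determined up to the state-wise normaliser. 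We therefore interpret $Q$ as action-value functions modulo state-dependent shifts, or normalised so that $\log\sum_{\mathbf{a}}\exp q = 0$. This makes the map well defined and injective.

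For step (ii), solve $\Omega^{\text{conj}}(q) = \sup_\rho \langle q,\rho\rangle - \Omega(\rho)$ subject to $\sum_{\mathbf{a}}\rho(\mathbf{s},\mathbf{a}) = \rho_\pi(\mathbf{s})$. Introduce a Lagrange multiplier for each state. The stationarity condition gives $\rho(\mathbf{s},\mathbf{a}) = \rho_\pi(\mathbf{s})\exp(q(\mathbf{s},\mathbf{a}))/\sum_{\mathbf{a}'}\exp(q(\mathbf{s},\mathbf{a}'))$. Substituting back yields
\[
\Omega^{\text{conj}}(q) = \sum_{\mathbf{s}} \rho_\pi(\mathbf{s}) \log\sum_{\mathbf{a}}\exp q(\mathbf{s},\mathbf{a}),
\]
a weighted log-sum-exp. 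Its gradient is exactly the softmax occupancy $\rho_\pi(\mathbf{s})\,\mathrm{softmax}(q)(\mathbf{a}|\mathbf{s})$ by Danskin's theorem, or by direct differentiation.

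Step (iii) then follows. Composing the two maps, the softmax of $\log\pi$ returns $\pi$, so $\nabla\Omega^{\text{conj}}\circ\nabla\Omega$ recovers $\rho_\pi$. The reverse composition $\nabla\Omega\circ\nabla\Omega^{\text{conj}}$ recovers $q$ up to the state-wise constant. Strict convexity of $\Omega$ in the conditional $\pi$ for each $\mathbf{s}$ makes this a genuine Legendre pair on the fixed-marginal slice.

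The main obstacle is the fixed state occupancy. $\Omega$ is not strictly convex on all of $\Delta_{\mathcal{S}\times\mathcal{A}}$, since it is invariant to rescaling along $\rho(\mathbf{s})$. Moreover, the Bellman flow constraints of $\varrho$ tie $\rho(\mathbf{s})$ to $\pi$. We handle this by treating $\rho_\pi(\mathbf{s})$ as fixed in the conjugate, exactly as the statement allows. We also note that the resulting occupancy need only be a valid member of $\varrho$ when $\rho(\mathbf{s})$ is consistent with the induced policy. Otherwise the inverse map is to be read on the product-form slice $\{\pi\,\rho_\pi(\mathbf{s})\}$.
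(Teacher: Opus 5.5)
Your step (ii) is the paper's proof: a per-state multiplier enforcing $\sum_{\mathbf{a}}\rho(\mathbf{s},\mathbf{a})=\rho_\pi(\mathbf{s})$, the maximiser $\rho_\pi(\mathbf{s})\exp(q(\mathbf{s},\mathbf{a}))/Z_{\mathbf{s}}$, and $\nabla\Omega^{\text{conj}}$ equal to it.

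The real difference is the mirror map. The paper uses the joint negative Shannon entropy $\sum_{\mathbf{s},\mathbf{a}}\rho\log\rho$ (the $\Omega$ of \Cref{fact:bregmanKL}); you use the negative causal entropy. On the fixed-marginal slice the two differ by the constant $\sum_{\mathbf{s}}\rho_\pi(\mathbf{s})\log\rho_\pi(\mathbf{s})$. That is why the paper's conjugate carries an extra $-\rho_\pi(\mathbf{s})\log\rho_\pi(\mathbf{s})$ beyond your weighted log-sum-exp, while the two $\nabla\Omega^{\text{conj}}$ coincide. The paper's forward map $\log\rho+1=\log\pi+\log\rho_\pi(\mathbf{s})+1$ agrees with your $\log\pi$ up to exactly the state-dependent shifts you already quotient out. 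So your argument also proves the statement for the paper's $\Omega$.

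The two choices are not interchangeable downstream, though.
\begin{itemize}
\item Your $\Omega$ has Bregman divergence $\sum_{\mathbf{s}}\rho(\mathbf{s})\,\mathrm{KL}(\pi(\cdot|\mathbf{s})\,\|\,\pi'(\cdot|\mathbf{s}))$. This is the joint KL of \Cref{fact:bregmanKL} minus the state-marginal term that becomes $\mathcal{K}$ in \Cref{theorem:policyMD}.
\item Your $\Omega$ is positively homogeneous in each state block (linear along rays, not ``invariant to rescaling''). Without the marginal constraint, its conjugate over $\Delta_{\mathcal{S}\times\mathcal{A}}$ is $\max_{\mathbf{s}}\log\sum_{\mathbf{a}}\exp q(\mathbf{s},\mathbf{a})$. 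So for you the fixed state occupancy is indispensable. For the strictly convex joint entropy it only pins the marginal of what would otherwise be a joint softmax.
\end{itemize}

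What your route adds is completeness. The paper never computes $\nabla\Omega$ or checks the two compositions. Your steps (i) and (iii) make explicit the sense in which the maps are inverse: on the product-form slice, for full-support $\rho$, and modulo state-wise constants in $q$. The paper leaves all of this implicit.
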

\begin{fact}[Bregman \& KL Div.]
\label[fact]{fact:bregmanKL}
Given a negative Shannon entropy mirror map $\Omega(\rho): \Delta_{\mathcal{S} \times \mathcal{A}} \to \mathbb{R}$ defined by $\Omega(\rho_{\pi}) \triangleq \sum_{\mathbf{s},\mathbf{a}} \policyOcc \log \policyOcc$, the Bregman divergence $D_{\Omega}(\policyOcc \left\lvert\right\rvert \policyOccIter{i})$ between state-action occupancies is equivalent to the KL divergence $\text{KL}(\policyOcc \left\lvert\right\rvert \policyOccIter{i})$ between the occupancies. The proof of this follows directly from the definition of the Bregman divergence.
\end{fact}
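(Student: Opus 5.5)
The statement immediately above is \Cref{fact:bregmanKL}. The plan is to compute the Bregman divergence of the negative Shannon entropy mirror map directly from its definition, $D_{\Omega}(\rho \,\|\, \rho') = \Omega(\rho) - \Omega(\rho') - \langle \nabla \Omega(\rho'), \rho - \rho' \rangle$, with $\rho = \policyOcc$ and $\rho' = \policyOccIter{i}$, and then use normalisation of occupancies to cancel the extra terms.

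The steps are as follows.
\begin{enumerate}
\item Compute the gradient componentwise. Since $\Omega(\rho) = \sum_{\mathbf{s},\mathbf{a}} \rho \log \rho$, we have $\nabla \Omega(\rho')(\mathbf{s},\mathbf{a}) = \log \rho'(\mathbf{s},\mathbf{a}) + 1$. This is well defined on the relative interior, i.e.\ where $\rho' > 0$, which holds for stochastic policies and for the exponential-family occupancies of \Cref{assumption:boltzmann_occupancies}.
\item Substitute into the definition. This gives
\[
D_{\Omega} = \sum \rho \log \rho - \sum \rho' \log \rho' - \sum (\log \rho' + 1)(\rho - \rho').
\]
Expanding, the $\sum \rho' \log \rho'$ terms cancel, which leaves
\[
D_{\Omega} = \sum \rho \log \frac{\rho}{\rho'} - \sum \rho + \sum \rho'.
\]
\item Use normalisation. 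Both $\rho$ and $\rho'$ are normalised discounted occupancies in $\varrho \subset \Delta_{\mathcal{S}\times\mathcal{A}}$; the $(1-\gamma)$ factor in the definition ensures each sums to one. Hence $\sum \rho = \sum \rho' = 1$, the linear terms cancel, and $D_{\Omega}(\rho \,\|\, \rho') = \text{KL}(\rho \,\|\, \rho')$.
\end{enumerate}

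There is no real obstacle here. The only points needing care are the positivity of $\rho'$, so that the gradient exists, and the use of the convention $0 \log 0 = 0$ where $\rho$ vanishes. For continuous spaces, the sums are replaced by integrals and the same argument goes through verbatim.
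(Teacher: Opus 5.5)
Your proposal is correct and follows the route the paper indicates: you expand $D_{\Omega}$ from its definition using $\nabla\Omega(\rho')=\log\rho'+1$, and then cancel the leftover $-\sum\rho+\sum\rho'$ using the normalisation of the discounted occupancies. The paper only says the fact ``follows directly from the definition of the Bregman divergence'', so your computation, including the remarks on positivity of $\rho'$ and the $0\log 0$ convention, is a faithful and complete expansion of that argument.
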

\begin{remark}
\Cref{prop:rho_q_map} re-confirms \Cref{fact:J_rho} from an MD point-of-view, by showing that a negative Shannon entropy mirror map in $\policyOcc$ space indeed still maps to the Q-space (and vise versa). This extends the discussion from \citet{geist2019theory} to occupancy spaces.
\end{remark}
\begin{fact}[Convexity]
\label[fact]{fact:convexity}
The optimisation objective \Cref{eq:IRLDM} is concave and differentiable in the occupancy (maximising it is a convex optimisation problem). $\Omega(\rho)$ is strictly convex and differentiable in $\varrho$ (Lemma~3.1 of \cite{ho2016generative}).
\end{fact}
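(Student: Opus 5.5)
The plan is to rewrite every term of \Cref{eq:IRLDM} as an explicit function of the state-action occupancy $\rho = \policyOcc$, using \Cref{fact:J_rho}, and then check concavity term by term. By \Cref{fact:J_rho}, $\policy = \rho(\mathbf{s},\mathbf{a}) / \sum_{\mathbf{a}'} \rho(\mathbf{s},\mathbf{a}')$ and $\policyDist = \sum_{\mathbf{a}'}\rho(\mathbf{s},\mathbf{a}')$. Hence the entropy term becomes, up to the positive normalisation constant coming from $(1-\gamma)$,
\begin{align*}
\bar H(\rho) = -\sum_{\mathbf{s},\mathbf{a}} \rho(\mathbf{s},\mathbf{a}) \log \frac{\rho(\mathbf{s},\mathbf{a})}{\sum_{\mathbf{a}'}\rho(\mathbf{s},\mathbf{a}')},
\end{align*}
and the divergence term is $-\beta\,\mathrm{KL}(\rho \,\|\, \rho_E)$. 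The feasible set $\varrho$ is a convex polytope, since the Bellman flow constraints are linear equalities together with $\rho \geq 0$. Concavity on $\varrho$ therefore suffices to make maximisation of $\mathcal{J}$ a convex problem.

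For concavity of $\bar H$, I will use the log-sum inequality. The map $(x,y) \mapsto x \log (x/y)$ is the perspective of $-\log$ and is jointly convex. Because $y = \sum_{\mathbf{a}'}\rho(\mathbf{s},\mathbf{a}')$ is linear in $\rho$, each summand $\rho(\mathbf{s},\mathbf{a}) \log\bigl(\rho(\mathbf{s},\mathbf{a})/\sum_{\mathbf{a}'}\rho(\mathbf{s},\mathbf{a}')\bigr)$ is convex in $\rho$. Summing these terms and negating gives concavity of $\bar H$; this is exactly the argument of Lemma~3.1 of \citet{ho2016generative}. For the second term, $\rho \mapsto \mathrm{KL}(\rho\,\|\,\rho_E) = \sum \rho \log \rho - \sum \rho \log \rho_E$ is a strictly convex function plus a linear one, so it is convex. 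Multiplying by $-\beta < 0$ makes it concave. A sum of concave functions is concave, so $\mathcal{J}$ is concave.

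Differentiability will be shown on the relative interior, where all entries satisfy $\rho > 0$, which holds for any full-support policy and in particular under \Cref{assumption:boltzmann_occupancies}. There each term is a composition of $\log$, products and sums. Explicitly, $\partial \bar H / \partial \rho(\mathbf{s},\mathbf{a}) = -\log \pi(\mathbf{a}|\mathbf{s})$, because the terms arising from differentiating the denominator cancel after summing over $\mathbf{a}$. The derivative of the divergence term is $-\beta(\log(\rho/\rho_E) + 1)$.

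For $\Omega(\rho) = \sum \rho \log \rho$, I will compute $\nabla \Omega = \log \rho + 1$ and $\nabla^2 \Omega = \mathrm{diag}(1/\rho) \succ 0$ on positive occupancies. This gives differentiability and strict convexity on the interior. Strict convexity extends to the boundary of the simplex because $t\log t$ is strictly convex on $[0,\infty)$.

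The main subtlety I expect is the boundary of $\varrho$, where some $\rho(\mathbf{s},\mathbf{a}) = 0$ and the gradients blow up. I will handle it by stating differentiability on the relative interior, which is where all iterates live under \Cref{assumption:boltzmann_occupancies}, and concavity on all of $\varrho$ by continuity with $0\log 0 = 0$.
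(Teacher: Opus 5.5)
Your proposal is correct and takes the same route as the paper. The paper argues term by term: the causal-entropy term is concave by Lemma~3.1 of \citet{ho2016generative}, whose log-sum/perspective argument you reproduce; the negative KL term is concave; and the feasible set is carved out by linear Bellman flow constraints. You supply the computations the paper leaves to citation, and your restriction of differentiability to strictly positive occupancies (guaranteed under \Cref{assumption:boltzmann_occupancies}) is a needed qualification of the statement's ``differentiable in $\varrho$'', since the derivatives $\log\rho + 1$ and $-\log \pi(\mathbf{a}|\mathbf{s})$ diverge on the boundary of $\varrho$.
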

\subsection{Policy Update Analysis}
We are now ready to analyse max-ent RL updates on geometry-aware reward updates. Below, we state our second main theorem, connecting our policy update to Entropic Mirror Descent.
\begin{theorem}[Policy-MD]
\label{theorem:policyMD}
A mirror descent update on \Cref{eq:IRLDM}, under negative Shannon entropy regularization is equivalent to the following max-ent policy update
\begin{align*}
\underset{\policy}{\arg \max} &\; \mathbb{E}_{\policyDist} \left[ H(\policy) + \mathbb{E}_\pi \left[ \left( \mathcal{U}_{\epsilon}^{\rho^{(i)}} \right) r^{(i)} \right] \right] - \mathcal{K}\\
\text{\textit{where,}} &\quad \mathcal{K} \leq \frac{\gamma}{1 - \gamma} \mathbb{E}_{\policyDist}  \left[ \text{KL} \left( \policy \left\lvert\right\rvert \policyIter{i} \right) \right].
\end{align*}
\end{theorem}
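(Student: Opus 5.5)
We write the mirror descent step on \Cref{eq:IRLDM} in occupancy space with the negative Shannon entropy mirror map $\Omega$ of \Cref{fact:bregmanKL}, and step size $\eta$:
\begin{align*}
\rho^{(i+1)} = \underset{\rho \in \varrho}{\arg\max}\; \eta \left\langle \nabla_\rho \mathcal{J}(\rho^{(i)}), \rho \right\rangle - \text{KL}\left(\rho \,\|\, \rho^{(i)}\right).
\end{align*}
This is a well-posed concave problem by \Cref{fact:convexity}. The first step is to compute the linearised objective. Differentiating $-\beta\,\mathbb{E}_{\rho}[\log(\rho/\rho_E)]$ at $\rho^{(i)}$ gives $\beta \log(\rho_E/\rho^{(i)})$ plus a constant. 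By the discussion after \Cref{assumption:boltzmann_occupancies}, this is exactly the target reward $\beta\log(\rho_E/\rho^{(i)})$ that appears in $\delta^{(i)}$. The entropy part of $\mathcal{J}$ is handled by keeping it exact rather than linearising it.

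The second step is to combine the linear term with the KL proximal term. Writing $\text{KL}(\rho\|\rho^{(i)}) = \mathbb{E}_\rho[\log \rho] - \mathbb{E}_\rho[\log \rho^{(i)}]$ and using $\rho^{(i)}$'s reward $r^{(i)}$ through the Boltzmann/EF structure, the reward collected against $\rho$ becomes a convex combination of $r^{(i)}$ and $\beta\log(\rho_E/\rho^{(i)})$ with weights $(1-\epsilon)$ and $\epsilon$, where $\epsilon$ is a function of $\eta$. This combination is $r^{(i)} - \epsilon\delta^{(i)} = (\updateOp{i}) r^{(i)}$ by \Cref{eq:functional_update}. The remaining pieces are the negative entropy $\mathbb{E}_\rho[\log\rho]$ and the term $\mathbb{E}_\rho[\log \rho^{(i)}]$.

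The third step moves from occupancies back to policies. By \Cref{fact:J_rho}, $\rho = \pi\,\rho_\pi$, so
\begin{align*}
\text{KL}\left(\rho\,\|\,\rho^{(i)}\right) = \mathbb{E}_{\rho_\pi}\left[\text{KL}\left(\pi\,\|\,\pi^{(i)}\right)\right] + \text{KL}\left(\rho_\pi\,\|\,\rho_{\pi^{(i)}}\right).
\end{align*}
The first part merges with the entropy and reward terms to give the max-ent objective $\mathbb{E}_{\rho_\pi}[H(\pi) + \mathbb{E}_\pi[(\updateOp{i})r^{(i)}]]$, using \Cref{prop:rho_q_map} to identify $\nabla\Omega$ with $Q$-functions. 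Everything left over is collected into $\mathcal{K}$, which is essentially the state-marginal divergence $\text{KL}(\rho_\pi\|\rho_{\pi^{(i)}})$ together with any mismatch from the state-dependent normalisers.

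The final and hardest step is to bound $\mathcal{K}$. We plan to unroll the state-occupancy recursion $\rho_\pi(\mathbf{s}') = (1-\gamma)\mu_0(\mathbf{s}') + \gamma\sum_{\mathbf{s},\mathbf{a}} \mathcal{P}(\mathbf{s}'|\mathbf{s},\mathbf{a})\pi(\mathbf{a}|\mathbf{s})\rho_\pi(\mathbf{s})$. The $\mu_0$ term is shared by both policies and so contributes nothing. Joint convexity of KL together with the data-processing inequality should then give
\begin{align*}
\text{KL}\left(\rho_\pi\,\|\,\rho_{\pi^{(i)}}\right) \le \gamma\left(\text{KL}\left(\rho_\pi\,\|\,\rho_{\pi^{(i)}}\right) + \mathbb{E}_{\rho_\pi}\left[\text{KL}\left(\pi\,\|\,\pi^{(i)}\right)\right]\right).
\end{align*}
Rearranging yields the factor $\gamma/(1-\gamma)$ in front of $\mathbb{E}_{\rho_\pi}[\text{KL}(\pi\|\pi^{(i)})]$, which is the claimed bound. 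We expect the careful mixture-convexity argument here, mixing the initial distribution with the transition kernel, to be the main obstacle. We also need to check that the normaliser terms are policy-independent constants so that they do not enter $\mathcal{K}$.
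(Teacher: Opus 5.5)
Your first three steps match the paper's proof: the same KL-proximal mirror step in occupancy space, the same chain-rule split $\text{KL}(\rho\,\|\,\rho^{(i)}) = \text{KL}(\rho_\pi(\mathbf{s})\,\|\,\rho_{\pi^{(i)}}(\mathbf{s})) + \mathbb{E}_{\rho_\pi(\mathbf{s})}[\text{KL}(\pi\,\|\,\pi^{(i)})]$, and $\mathcal{K}$ taken to be the state-marginal KL. The real difference is how you bound $\mathcal{K}$.

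The paper argues along time. It uses Akrour et al.'s per-step inequality $\text{KL}(\mu^\pi_t\,\|\,\mu^{\pi^{(i)}}_t) \le \text{KL}(\mu^\pi_{t-1}\,\|\,\mu^{\pi^{(i)}}_{t-1}) + \mathbb{E}_{\mu^\pi_{t-1}}[\text{KL}(\pi\,\|\,\pi^{(i)})]$ and inducts from the common $\mu_0$. It then applies joint convexity across the geometric mixture $\sum_t (1-\gamma)\gamma^t \mu_t$ and swaps the double sum to reach $\frac{\gamma}{1-\gamma}\mathbb{E}_{\rho_\pi(\mathbf{s})}[\cdot]$.

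You instead apply joint convexity once to the stationary flow equation, push the $\gamma$-component through the common kernel $\mathcal{P}$ (data processing), use the chain rule, and rearrange. Your inequality is correct, and the argument is shorter and self-contained; the per-step lemma is exactly your data-processing-plus-chain-rule step applied to $\mu_{t-1}$. The cost is that rearranging needs $\text{KL}(\rho_\pi(\mathbf{s})\,\|\,\rho_{\pi^{(i)}}(\mathbf{s})) < \infty$. In the finite setting this holds because the Boltzmann iterate $\pi^{(i)}$ has full support, so $\rho_\pi \ll \rho_{\pi^{(i)}}$. The paper's forward induction from $\text{KL}(\mu_0\,\|\,\mu_0)=0$ never subtracts, so it needs no such check.

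Your planned treatment of the soft-value terms also works, and it is exact. By the Bellman flow constraint, $\sum_{\mathbf{s},\mathbf{a}}\rho_\pi(\mathbf{s},\mathbf{a})[\gamma\mathbb{E}_{\mathbf{s}'}V(\mathbf{s}') - V(\mathbf{s})] = -(1-\gamma)\mathbb{E}_{\mu_0}[V]$ for every valid occupancy. These terms are therefore constants even with the extra $-\mathcal{K}$ term present. The paper instead appeals to invariance of optimal policies under potential shaping.

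Two points to tighten:

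\begin{enumerate}
\item Your displayed mirror step linearises all of $\mathcal{J}$, but you also say the entropy is kept exact; commit to one. With full linearisation (the paper's choice), the entropy gradient $-\log\pi^{(i)}$ lowers the coefficient of $\log\pi^{(i)}$ from the proximal term from $1$ to $1-\eta$, giving $\epsilon=\eta$ (the paper's $\omega=1/\epsilon$). Keeping the entropy exact gives a composite step with entropy weight $1+\eta$. After dividing through, $\epsilon=\eta/(1+\eta)$ and $\mathcal{K}$ is multiplied by $1-\epsilon$, which only helps the bound.
\item The link to $r^{(i)}$ has to come from $\pi^{(i)}$ being soft-optimal for $r^{(i)}$, i.e.\ $\log\pi^{(i)}(\mathbf{a}|\mathbf{s}) = r^{(i)}(\mathbf{s},\mathbf{a}) + \gamma\mathbb{E}_{\mathbf{s}'}[V_{\pi^{(i)}}(\mathbf{s}')] - V_{\pi^{(i)}}(\mathbf{s})$, after moving $\log\rho_{\pi^{(i)}}(\mathbf{s})$ into $\mathcal{K}$. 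The EF parametrisation of $\rho^{(i)}$ does not supply this, since $\phi_{\rho^{(i)}}\neq\theta^{(i)}$ in general. The occupancy mirror-map proposition is not needed.
\end{enumerate}
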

Hence, the policy iterates corresponding to the reward iterates perform approximate entropic mirror descent, where the error $\mathcal{K}$ is bounded by the reverse KL divergence to the previous iterate. In the special case of KL-constrained max-ent RL \citep{peters2010relative,schulman2015trust}, this deviation is naturally bounded by the trust-region constraint already enforced during policy optimisation. Trust-region max-ent RL on occupancy-updated rewards is therefore entropic MD, up to the same KL bound used in policy optimisation. This ties policy and reward iterates together: \coloremph{the policy update performs MD in occupancy space, while the reward update performs MD in the geometry induced by reverse-KL distribution matching}.
\section{Related Work}
\label{sec:prior_work}
Before concluding with a convergence analysis in \Cref{sec:convergence_analysis}, in this section, we briefly discuss related work in natural gradients and mirror descent in inverse reinforcement learning. Geometric optimisation in IRL has been studied on both the reward-side and policy-side subproblems. Prior work can be broadly classified into two categories: (i) methods that update the policy with natural policy gradients or mirror descent while updating the reward with vanilla gradient steps, and (ii) methods that update the reward function itself with mirror descent. In particular, we note that the natural gradient of the IRL dual problem with respect to reward parameters is largely unexplored.
\paragraph{Natural Policy Gradients (NPG) \& Policy MD in IRL.} \cite{zhang2020generative} establish global convergence guarantees for neural-network-based GAIL using alternating NPG updates and projected reward-gradient updates. \cite{guan2021will} extend the convergence analysis to several policy optimization methods, including NPG-GAIL and TRPO-GAIL. \cite{liu2022learning} study adversarial policy imitation with linear function approximation, providing guarantees for both online and offline learning. \cite{boularias2011relative} minimise the relative entropy between learned and baseline-policy trajectory distributions, but update rewards via importance-sampled dual subgradients. \cite{chen2026provably} combine model-based policy MD with off-policy reward updates. These approaches account for the geometry of policy optimization, while the reward parameters are updated using ordinary gradients. Their geometry-aware updates are thus confined to the policy and do not extend to the natural gradient of the IRL dual with respect to reward parameters.
\paragraph{Reward Function Mirror Descent.} Another line of work applies MD directly to reward optimization. \cite{syed2007game} use multiplicative weights on reward-feature weights, resembling entropic MD \citep{beck2003mirror}, while \cite{shani2022online} combine policy-side and cost-side MD for online apprenticeship learning. \cite{han2022robust} introduce MD for robust reward estimation under uncertainty in density estimation, while \cite{espinosa2025efficient} study reward-side MD under model misspecification. \cite{junca2026apprenticeship} employ stochastic MD for a regularized convex-concave apprenticeship objective, and \cite{swamy2023inverse} use MD as an optimization component in an IRL framework with expert-state resets. These methods introduce proximal terms on the reward side, but do not derive the reward update from the natural gradient of the IRL dual. In contrast, the reward interpolation studied in this paper corresponds to MD under a time-varying Mahalanobis proximal term induced by the average Fisher information along the occupancy-space e-geodesic.

To our knowledge, \cite{neu2012apprenticeship} is the only prior IRL work constructing a natural gradient in reward-parameter space---though, they optimize a policy-matching objective rather than the IRL dual. \cite{neu2012apprenticeship} consider the mapping $h(\theta)=\pi_\theta$ from reward parameters to the corresponding optimal policy and define the induced metric $G_\theta=h'(\theta)^\intercal h'(\theta)$, resulting in the natural gradient $\widetilde{\nabla}_\theta \widetilde{J}=G_\theta^\dagger \nabla_\theta \widetilde{J}$. Computing this direction requires differentiating the mapping from reward parameters to the optimal policy, including derivatives of the optimal action-value function $Q_\theta^\star$, and explicitly pseudo-inverting $G_\theta$. In contrast, we directly precondition the gradient of the IRL dual using the average Fisher information along an occupancy-space e-geodesic. Further, our update admits the closed-form expression $\delta^{(i)} = \theta^{(i)} - \beta \log (\nicefrac{\rho_E}{\rho_{\pi^{(i)}}})$ and can therefore be computed without explicitly estimating or inverting the FIM.
\section{Convergence Analysis}
\label{sec:convergence_analysis}
Here, we combine our insights so far to derive average-iterate convergence rates for an IRL method that uses $\Phi$-space reward updates along with KL-constrained max-ent policy optimisation. Our convergence argument follows the standard two-player saddle-point theory \citep{farina2019optimistic,grand2021conic}. The policy and reward updates are both regret-minimising algorithms, and the saddle point residual at the average iterate is bounded by the sum of their individual regrets. We bound the policy player regret in \Cref{theorem:policyRegret}, control the change in the integral FIM in \Cref{lemma:H_bound}, bound the reward player regret in \Cref{theorem:rewardRegret}, and combine the three in \Cref{theorem:SPR_convergence}. As before, we introduce some assumptions below.
\begin{assumption}[Bounded Reward Domain]
\label[assumption]{assumption:bounded_reward_domain}
The domain of reward parameters is bounded in the Euclidean norm
\begin{align*}
    \left\lVert \theta - \theta^{\prime} \right\rVert_2 \leq B \quad \forall \quad \theta, \theta^{\prime} \in \Theta
\end{align*}
\end{assumption}
\begin{assumption}[Lipschitz FIM]
\label[assumption]{assumption:lipschitz_FIM}
The Fisher Information Matrix $F(\phi) = \text{Cov}_{\rho_{\phi}}[\psi(\mathbf{s}, \mathbf{a})]$ is $L_F$-Lipschitz in the Euclidean norm (operator norm) in the set $\Phi$
\begin{align*}
    \left\lVert F(\phi) - F(\phi') \right\rVert_2 \leq L_F \left\lVert \phi - \phi' \right\rVert_2 \quad \forall \; \phi, \phi' \in \Phi
\end{align*}
\end{assumption}
\begin{theorem}[Policy Player Regret]
\label{theorem:policyRegret}
Under trust region policy updates (\Cref{theorem:policyMD}), the regret of the policy player is upper bounded as
\begin{align*}
\max_{\policyOcc} \overset{T}{\underset{i=1}{\sum}} \mathcal{J}(\policyOcc) - \overset{T}{\underset{i=1}{\sum}} \mathcal{J}(\policyOccIter{i}) \leq \policyLR \log \left( \left\lvert \mathcal{S} \right\rvert \left\lvert \mathcal{A} \right\rvert \right) + \frac{T L_{\mathcal{J}}^2 \log 2}{2 \policyLR}
\end{align*}
\end{theorem}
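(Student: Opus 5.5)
The plan is to treat the policy player as running online entropic mirror descent (equivalently, exponentiated gradient / Hedge) over the simplex $\Delta_{\mathcal{S}\times\mathcal{A}}$ of state-action occupancies, and then invoke the standard online mirror descent regret bound. By \Cref{theorem:policyMD}, each KL-constrained max-ent policy step on the functionally updated reward $(\updateOp{i}) r^{(i)}$ is, up to the error term $\mathcal{K}$, an entropic MD step on \Cref{eq:IRLDM} with mirror map $\Omega(\rho)=\sum_{\mathbf{s},\mathbf{a}}\rho\log\rho$. By \Cref{fact:bregmanKL}, the Bregman proximal term is then $\text{KL}(\rho\,\|\,\rho^{(i)})$. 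I will read the regularisation weight $\policyLR$ as the inverse learning rate, i.e.\ the coefficient multiplying the KL/Bregman term in the proximal step. I will also assume that the trust region absorbs $\mathcal{K}$, so that the iterates are exact MD iterates in occupancy space.

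Since \Cref{eq:IRLDM} is concave in $\rho$ (\Cref{fact:convexity}), the first step is to linearise. For any comparator $\rho$,
\[
\mathcal{J}(\rho)-\mathcal{J}(\rho^{(i)})\le\langle g^{(i)},\rho-\rho^{(i)}\rangle, \qquad g^{(i)}=\nabla\mathcal{J}(\rho^{(i)}).
\]
This reduces the problem to regret on linear losses. The second step is the usual three-point identity for MD with Bregman divergence $D_\Omega$:
\[
\langle g^{(i)},\rho-\rho^{(i)}\rangle\le \policyLR\big(D_\Omega(\rho\|\rho^{(i)})-D_\Omega(\rho\|\rho^{(i+1)})\big)+\langle g^{(i)},\rho^{(i+1)}-\rho^{(i)}\rangle-\policyLR D_\Omega(\rho^{(i+1)}\|\rho^{(i)}).
\]
Summing over $i$ telescopes the first bracket to at most $\policyLR\, D_\Omega(\rho\|\rho^{(1)})$. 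Taking $\rho^{(1)}$ uniform, this is bounded by $\policyLR\log(|\mathcal{S}||\mathcal{A}|)$, which gives the first term of the bound.

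The third step handles the remaining per-round terms. Here I use Pinsker's inequality, $\text{KL}\ge\tfrac12\|\cdot\|_1^2$, so that $\Omega$ is $1$-strongly convex with respect to $\ell_1$. I also use \Cref{assumption:lipschitz_primal}, which gives $\|g^{(i)}\|_\infty\le L_{\mathcal{J}}$. Young's inequality then bounds each round by $L_{\mathcal{J}}^2/(2\policyLR)$. Summing over $T$ rounds gives $TL_{\mathcal{J}}^2/(2\policyLR)$.

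The main obstacle is the extra $\log 2$ factor in the stated bound. The cleanest way I see to obtain it is to use Hoeffding's lemma in the Hedge-style analysis instead of Pinsker. Since the gradient is bounded in a range of width $2L_{\mathcal{J}}$, the log-moment-generating-function term per round is at most $\policyLR^{-1}$ times $L_{\mathcal{J}}^2/2$, with a constant involving $\log 2$ coming from centering the losses in $[-L_{\mathcal{J}},L_{\mathcal{J}}]$. If that does not match exactly, the fallback is to note that $\log 2\ge\tfrac12$, so replacing Pinsker's constant with the weaker $\log 2$-type constant still yields the stated inequality, since it is an upper bound. A secondary subtlety is that the comparator ranges over valid occupancies $\varrho$ rather than the whole simplex. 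Because $\varrho$ is a convex subset, the MD analysis with the Bregman projection onto $\varrho$ (realised by policy optimisation, \Cref{prop:rho_q_map}) goes through unchanged by the generalised Pythagorean inequality.
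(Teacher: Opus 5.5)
Your skeleton is the paper's. You treat the trust-region policy step as entropic mirror descent on occupancies (like the paper, you simply drop $\mathcal{K}$). You bound the initial Bregman term by $\omega\log(\lvert\mathcal{S}\rvert\lvert\mathcal{A}\rvert)$ using a uniform $\rho^{(1)}$. You use the $\ell_1$-Lipschitz primal assumption to get $\lVert\nabla\mathcal{J}\rVert_\infty\le L_{\mathcal{J}}$. The paper, however, does not derive the per-round term. It quotes an entropic-MD regret bound from the cited notes that already reads $\omega\,\mathrm{KL}(\rho_{\pi^\star}\|\rho^{(1)})+\frac{\log 2}{2\omega}\sum_{i}\lVert\nabla\mathcal{J}\rVert_\infty^2$. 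The gap is in your own derivation of that term: it does not reach the stated constant, and neither proposed fix closes the difference.

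Pinsker plus Young gives, per round, $\langle g^{(i)},\Delta\rangle-\omega\,\mathrm{KL}(\rho^{(i+1)}\|\rho^{(i)})\le L_{\mathcal{J}}\lVert\Delta\rVert_1-\tfrac{\omega}{2}\lVert\Delta\rVert_1^2\le L_{\mathcal{J}}^2/(2\omega)$. The total is therefore $\omega\log(\lvert\mathcal{S}\rvert\lvert\mathcal{A}\rvert)+TL_{\mathcal{J}}^2/(2\omega)$.

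Because $\log 2<1$, this is \emph{larger} than the claimed $TL_{\mathcal{J}}^2\log 2/(2\omega)$, so your fallback runs in the wrong direction. A weaker upper bound cannot imply a stronger one. Getting $\log 2$ this way would require $\mathrm{KL}\ge\lVert\cdot\rVert_1^2/(2\log 2)$, which beats Pinsker's sharp constant $\tfrac12$.

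Hoeffding's lemma does no better. With learning rate $1/\omega$ and losses in an interval of width $2L_{\mathcal{J}}$, the per-round term is $\omega\cdot(2L_{\mathcal{J}})^2/(8\omega^2)=L_{\mathcal{J}}^2/(2\omega)$, and centering produces no $\log 2$.

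In natural logarithms the per-round term is in fact tight. Take a two-point simplex with $\rho^{(i)}$ uniform and $g^{(i)}=(L_{\mathcal{J}},-L_{\mathcal{J}})$. Then $\max_{\rho}[\langle g^{(i)},\rho-\rho^{(i)}\rangle-\omega\,\mathrm{KL}(\rho\|\rho^{(i)})]$, attained at the MD iterate, equals $\omega\log\cosh(L_{\mathcal{J}}/\omega)=\frac{L_{\mathcal{J}}^2}{2\omega}(1-o(1))$. This exceeds $\log 2\cdot L_{\mathcal{J}}^2/(2\omega)$ for small $L_{\mathcal{J}}/\omega$, which is exactly the regime $\omega=\sqrt{T}$ used later.

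The factor can only enter through the base of the logarithm. With $\Omega(\rho)=\sum\rho\log_2\rho$, the Bregman divergence is KL in bits and Pinsker reads $\mathrm{KL}_2\ge\lVert\cdot\rVert_1^2/(2\ln 2)$. So $\Omega$ is $(1/\ln 2)$-strongly convex w.r.t.\ $\ell_1$. Your three-point/Young computation then returns exactly $\omega\log_2(\lvert\mathcal{S}\rvert\lvert\mathcal{A}\rvert)+TL_{\mathcal{J}}^2\ln 2/(2\omega)$, with $\omega$ now weighting the base-2 KL.

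So either adopt that convention explicitly, or invoke the cited bound directly as the paper does, reading its $\log 2$ as this bits-versus-nats conversion. As written, your proposal proves the inequality only with $1$ in place of $\log 2$.
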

\begin{lemma}[Bounded Change in Integral FIM]
\label[lemma]{lemma:H_bound}
The Euclidean norm of the difference between integral Fisher Information Matrices is upper bounded as follows
\begin{align*}
    &\left\lVert H^{(i+1)} - H^{(i)} \right\rVert_2 \leq \frac{L_F}{2} \left( \frac{\epsilon L_{\mathcal{G}}}{m} + \sqrt{\frac{2 \zeta}{m(1 - \gamma)}} \right) \\
\end{align*}
\end{lemma}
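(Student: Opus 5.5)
We write $H^{(i)} = \int_0^1 F(\phi^{(i)}(\tau))\,d\tau$ with $\phi^{(i)}(\tau) = \hat\phi^{(i)} + \tau(\phi_{\rho^{(i)}} - \hat\phi^{(i)})$, where $\hat\phi^{(i)} = \phi_E - \theta^{(i)}/\beta$. First, pull the difference inside the integral and apply \Cref{assumption:lipschitz_FIM} pointwise in $\tau$:
\begin{align*}
\left\lVert H^{(i+1)} - H^{(i)} \right\rVert_2 \leq L_F \int_0^1 \left\lVert (1-\tau)\big(\hat\phi^{(i+1)} - \hat\phi^{(i)}\big) + \tau\big(\phi_{\rho^{(i+1)}} - \phi_{\rho^{(i)}}\big) \right\rVert_2 d\tau .
\end{align*}
We could bound the integrand by the convex combination of the two norms and integrate $\int_0^1(1-\tau)\,d\tau = \int_0^1 \tau\,d\tau = \tfrac12$. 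This would give the prefactor $L_F/2$ multiplying the sum of the target-parameter displacement and the occupancy-parameter displacement. Both displacements will then be converted into the form appearing in the statement.

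\emph{Target displacement.} Since $\hat\phi^{(i+1)} - \hat\phi^{(i)} = -(\theta^{(i+1)} - \theta^{(i)})/\beta = \epsilon\,\delta^{(i)}/\beta$, \Cref{lemma:natural_gradient} gives $\epsilon (H^{(i)})^{-1}\nabla_\theta\mathcal{G}$. Under \Cref{assumption:uniform_covariate_conditioning}, $H^{(i)} \succeq mI$ because it is an average of matrices each $\succeq mI$, so $\|(H^{(i)})^{-1}\|_2 \le 1/m$. \Cref{assumption:lipschitzness_and_boundedness} gives $\|\nabla_\theta \mathcal{G}\|_2 \le L_{\mathcal{G}}$. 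Together these yield the term $\epsilon L_{\mathcal{G}}/m$.

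\emph{Occupancy displacement.} Here the KL trust region of \Cref{theorem:policyMD} enters, and I interpret $\zeta$ as the per-step KL budget $\mathbb{E}_{\rho_\pi}[\mathrm{KL}(\pi^{(i+1)}\|\pi^{(i)})] \le \zeta$. The plan has two steps. First, convert the policy KL into an occupancy KL using the standard performance-difference style identity, giving $\mathrm{KL}(\rho^{(i+1)}\|\rho^{(i)}) \le \zeta/(1-\gamma)$; this is the source of the factor $1/(1-\gamma)$. Second, use exponential-family geometry. The KL between members with parameters $\phi,\phi'$ is the Bregman divergence of $A$, namely $\tfrac12(\phi-\phi')^\intercal \tilde F (\phi-\phi')$ for some intermediate average Fisher matrix $\tilde F$, via Taylor with integral remainder. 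Using $\tilde F \succeq mI$ gives $\mathrm{KL} \ge \tfrac m2\|\phi-\phi'\|_2^2$, so
\[
\|\phi_{\rho^{(i+1)}} - \phi_{\rho^{(i)}}\|_2 \le \sqrt{2\zeta/(m(1-\gamma))}.
\]
Summing the two pieces inside the $L_F/2$ factor gives the claim.

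\emph{Main obstacle.} The hardest step is the policy-to-occupancy KL conversion. The lower curvature bound $m$ is only assumed along the e-geodesics $\phi^{(i)}(\tau)$, not along the segment between consecutive occupancy parameters, so this needs either a mild extension of \Cref{assumption:uniform_covariate_conditioning} or an appeal to convexity of the region. Similarly, the occupancy KL bound requires a chain-rule argument, $\mathrm{KL}(\rho\|\rho') = \mathrm{KL}(d\|d') + \mathbb{E}_d[\mathrm{KL}(\pi\|\pi')]$, plus a discounted bound of the state-occupancy KL by $\tfrac{\gamma}{1-\gamma}$ times the policy KL, as in \Cref{theorem:policyMD}. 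I would reuse exactly the bound used for $\mathcal{K}$ there, absorbing constants into $\zeta/(1-\gamma)$.
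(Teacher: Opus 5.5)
Your proposal is correct and follows the paper's proof essentially step for step. The paper likewise pushes the norm inside the integral and applies \Cref{assumption:lipschitz_FIM} pointwise, splits the $(1-\tau)$ and $\tau$ pieces to obtain the $L_F/2$ prefactor, and bounds the target displacement by $\tfrac{\epsilon}{\beta}\|\delta^{(i)}\|_2 \le \tfrac{\epsilon}{m}\|\nabla_\theta\mathcal{G}\|_2 \le \tfrac{\epsilon L_{\mathcal{G}}}{m}$. It bounds the occupancy displacement via the KL chain rule plus the $\tfrac{\gamma}{1-\gamma}\zeta$ state-distribution bound from \Cref{theorem:policyMD}, which gives exactly $\tfrac{\gamma\zeta}{1-\gamma}+\zeta=\tfrac{\zeta}{1-\gamma}$ with no constants to absorb, followed by $m$-strong convexity of the log-partition function. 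It also makes the extra assumption you flag, that \Cref{assumption:uniform_covariate_conditioning} holds on the segment between $\phi^{(i)}$ and $\phi^{(i+1)}$; note, though, that this assumption is needed in the strong-convexity step, not in the policy-to-occupancy KL conversion.
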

\begin{theorem}[Reward Player Regret]
\label{theorem:rewardRegret}
The regret of the reward player is upper bounded as
\begin{align*}
\overset{T}{\underset{i=1}{\sum}} \mathcal{G}(\policyIter{i}, \rewardIter{i}) - &\min_{\theta} \overset{T}{\underset{i=1}{\sum}} \mathcal{G}(\policyIter{i}, r_{\theta}) \leq \\
&\frac{M B^2}{2 \epsilon \beta} + \frac{T B^2 L_F}{4 \epsilon \beta} \left( \frac{\epsilon L_{\mathcal{G}}}{m} + \sqrt{\frac{2 \zeta}{m(1 - \gamma)}} \right) + \frac{\epsilon \beta L_{\mathcal{G}}^2 T}{2 m}
\end{align*}
\end{theorem}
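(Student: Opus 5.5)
We treat the reward player as online mirror descent with the time-varying quadratic mirror maps $\Omega^{(i)}(\theta)=\langle\theta,H^{(i)}\theta\rangle/(2\beta)$ from \Cref{prop:mahaBregman}. By \Cref{theorem:rewardMD} and \Cref{lemma:natural_gradient}, the update $\theta^{(i+1)}=\theta^{(i)}-\epsilon\beta(H^{(i)})^{-1}g_i$ with $g_i=\nabla_\theta\mathcal{G}(\pi^{(i)},r^{(i)})$ is exactly the proximal step $\arg\min_\theta \epsilon\langle g_i,\theta\rangle + D_{\Omega^{(i)}}(\theta\|\theta^{(i)})$. We then follow the standard AdaGrad-style analysis of time-varying mirror maps \citep{duchi2011adaptive,orabona2015generalized}. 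The dual $\mathcal{G}(\pi^{(i)},\cdot)$ is convex in $\theta$, so by linearisation the regret against any comparator $\theta$ is at most $\sum_i\langle g_i,\theta^{(i)}-\theta\rangle$.

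The first step is the per-iteration three-point inequality. With $\|x\|^2_{H}=x^\intercal Hx$ we have
\begin{align*}
\epsilon\langle g_i,\theta^{(i)}-\theta\rangle \le \tfrac{1}{2\beta}\left(\|\theta^{(i)}-\theta\|^2_{H^{(i)}}-\|\theta^{(i+1)}-\theta\|^2_{H^{(i)}}\right)+\tfrac{\epsilon^2\beta}{2}\|g_i\|^2_{(H^{(i)})^{-1}}.
\end{align*}
For the stability term, \Cref{assumption:uniform_covariate_conditioning} gives $H^{(i)}\succeq mI$, since the bound holds pointwise along the geodesic and integrates. \Cref{assumption:lipschitzness_and_boundedness} gives $\|g_i\|_2\le L_{\mathcal{G}}$. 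Together these yield $\|g_i\|^2_{(H^{(i)})^{-1}}\le L_{\mathcal{G}}^2/m$. Dividing by $\epsilon$ and summing over $i$ produces the term $\epsilon\beta L_{\mathcal{G}}^2T/(2m)$.

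The second step handles the telescoping sum when the metric changes. We rewrite
\begin{align*}
\sum_{i=1}^T\left(\|\theta^{(i)}-\theta\|^2_{H^{(i)}}-\|\theta^{(i+1)}-\theta\|^2_{H^{(i)}}\right)\le \|\theta^{(1)}-\theta\|^2_{H^{(1)}}+\sum_{i=1}^{T-1}\|\theta^{(i+1)}-\theta\|^2_{H^{(i+1)}-H^{(i)}}.
\end{align*}
The first term is bounded by $MB^2$, using $H^{(1)}\preceq MI$ and \Cref{assumption:bounded_reward_domain}; after the factor $1/(2\epsilon\beta)$ it gives $MB^2/(2\epsilon\beta)$. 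Each correction term is bounded by $\|H^{(i+1)}-H^{(i)}\|_2B^2$. Because $H^{(i+1)}-H^{(i)}$ need not be positive semidefinite, we cannot drop these terms as in AdaGrad; this is exactly why a linear-in-$T$ term appears. We bound each by \Cref{lemma:H_bound}, which gives $B^2\frac{L_F}{2}\big(\frac{\epsilon L_{\mathcal{G}}}{m}+\sqrt{\tfrac{2\zeta}{m(1-\gamma)}}\big)$. Summing over at most $T$ terms and multiplying by $1/(2\epsilon\beta)$ yields the middle term $\frac{TB^2L_F}{4\epsilon\beta}(\cdots)$.

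Combining the three contributions and taking the minimum over $\theta$ gives the claim. We expect the main obstacle to be the non-monotone metric. The key is to bound the correction through the operator norm, $x^\intercal(H^{(i+1)}-H^{(i)})x\le\|H^{(i+1)}-H^{(i)}\|_2\|x\|^2$, rather than relying on Loewner ordering. We also need to check that \Cref{lemma:H_bound} applies uniformly in $i$, with the policy-side KL quantity $\zeta$ coming from the trust region. A smaller point is that $\mathcal{G}(\pi^{(i)},\cdot)$ is evaluated at a fixed policy, so the loss is convex in $\theta$ and the linearisation is valid.
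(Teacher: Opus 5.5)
Your proposal is correct and follows essentially the same route as the paper. The paper starts directly from the Duchi et al.\ regret bound for mirror descent with a time-varying proximal term, rather than deriving it from the three-point identity as you do. It then bounds the initial term by $MB^2/(2\epsilon\beta)$, each metric-change correction by $B^2\|H^{(i+1)}-H^{(i)}\|_2$ (the same operator-norm step, which it calls Cauchy--Schwarz) followed by \Cref{lemma:H_bound}, and the stability term by $\epsilon\beta L_{\mathcal{G}}^2T/(2m)$, exactly as you do.
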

We are now ready to state the convergence results for our method. 
\begin{theorem}[SPR Bound \& Convergence]
\label{theorem:SPR_convergence}
Given learning rates $\policyLR = \sqrt{T}$ and $\epsilon = \nicefrac{1}{\sqrt{T}}$, trust-region tolerance $\zeta = \nicefrac{1}{T^2}$, and the average iterates $\Bar{\pi} = \nicefrac{1}{T} \sum_{i=1}^T \pi$ and $\Bar{r} = \nicefrac{1}{T} \sum_{i=1}^T r$, the saddle point residual (SPR) converges sublinearly up to a fixed residual level
\begin{align*}
    \text{SPR}_{T} \leq \frac{B^2 L_F L_{\mathcal{G}}}{4 \beta m} + \mathcal{O}\left(\frac{1}{\sqrt{T}}\right)
\end{align*}
\end{theorem}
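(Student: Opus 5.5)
We combine the two regret bounds through the standard saddle-point reduction. Write the Lagrangian game with the policy player maximising and the reward player minimising, and define
\[
\text{SPR}_T \triangleq \max_{\pi} \mathcal{L}(\pi, \bar r) - \min_{r} \mathcal{L}(\bar\pi, r).
\]
Since $\mathcal{L}$ is concave in the occupancy (\Cref{fact:convexity}) and linear, hence convex, in $\theta$, Jensen's inequality gives
\[
\text{SPR}_T \le \frac{1}{T}\left(\text{Reg}^{\pi}_T + \text{Reg}^{r}_T\right),
\]
as in \citet{farina2019optimistic,grand2021conic}. Here $\text{Reg}^{\pi}_T$ and $\text{Reg}^{r}_T$ are exactly the quantities bounded in \Cref{theorem:policyRegret} and \Cref{theorem:rewardRegret}. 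The averaging of policies should be carried out in occupancy space, where the feasible set $\varrho$ is convex; by \Cref{fact:J_rho} the average occupancy corresponds to a valid policy, which is the meaning of $\bar\pi$.

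The remaining work is substitution. For the policy player, \Cref{theorem:policyRegret} with $\policyLR=\sqrt T$ gives
\[
\frac{1}{T}\text{Reg}^\pi_T \le \frac{\log(|\mathcal S||\mathcal A|)}{\sqrt T} + \frac{L_{\mathcal J}^2\log 2}{2\sqrt T} = \mathcal O\!\left(\tfrac{1}{\sqrt T}\right).
\]
For the reward player, \Cref{theorem:rewardRegret} with $\epsilon=1/\sqrt T$ and $\zeta=1/T^2$ splits into three terms after dividing by $T$:
\begin{itemize}
\item The first term is $\frac{MB^2}{2\epsilon\beta T} = \frac{MB^2}{2\beta\sqrt T}$.
\item The third term is $\frac{\epsilon\beta L_{\mathcal G}^2}{2m} = \frac{\beta L_{\mathcal G}^2}{2m\sqrt T}$.
\item The middle term is
\[
\frac{B^2L_F}{4\epsilon\beta}\left(\frac{\epsilon L_{\mathcal G}}{m} + \sqrt{\frac{2\zeta}{m(1-\gamma)}}\right) = \frac{B^2L_FL_{\mathcal G}}{4\beta m} + \frac{B^2 L_F\sqrt T}{4\beta}\cdot\frac{1}{T}\sqrt{\frac{2}{m(1-\gamma)}}.
\]
Its second piece is $\mathcal O(1/\sqrt T)$.
\end{itemize}
The first piece of the middle term, $\frac{B^2L_FL_{\mathcal G}}{4\beta m}$, is a non-vanishing constant. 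It comes from the drift of the integral FIM controlled by \Cref{lemma:H_bound}: the $\epsilon$ in that lemma's bound cancels the $1/\epsilon$ prefactor. This constant is the stated residual level. Collecting all terms gives the claim.

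I expect the main obstacle to be the first step: justifying that the $\mathcal G$-regret and the $\mathcal J$-regret, as defined in the two theorems, are the two halves of a single convex–concave payoff. The policy regret is stated for $\mathcal J$ rather than for $\mathcal L(\cdot, r^{(i)})$, and the reward regret is stated for $\mathcal G(\pi^{(i)}, \cdot)$. My first attempt would be to show, via the Lagrangian construction of \Cref{sec:distribution_matching}, that along the iterates $\mathcal J(\rho)$ and $\mathcal G(\pi,r)$ agree with $\mathcal L(\pi,r)$ up to terms that cancel in the SPR, so that the standard folk-theorem inequality applies. I would also verify that the trust-region tolerance $\zeta$ enters only through \Cref{lemma:H_bound}, and therefore does not add a separate term to the policy regret.
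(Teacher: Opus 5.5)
Your proposal follows the paper's proof essentially line for line. The paper also invokes the regret-to-saddle-point reduction of \citet{farina2019optimistic,grand2021conic} to bound $\text{SPR}_T$ by $\frac{1}{T}$ times the sum of the bounds in \Cref{theorem:policyRegret,theorem:rewardRegret}, substitutes $\policyLR=\sqrt{T}$, $\epsilon=1/\sqrt{T}$, $\zeta=1/T^2$, and isolates $\frac{B^2L_FL_{\mathcal{G}}}{4\beta m}$ as the non-vanishing residual (the $\epsilon$ from \Cref{lemma:H_bound} cancels the $1/\epsilon$ prefactor), collecting everything else into $C_{\text{SPR}}/\sqrt{T}$. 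The paper does not address the $\mathcal{J}$-regret versus $\mathcal{G}$-regret compatibility issue you flag; it simply asserts that the standard reduction applies.
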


\section{Discussion \& Limitations}
Intuitively, a key insight of our work is that, in imitation learning and inverse reinforcement learning, the Fisher information around the optimal solution is approximately available through the expert demonstrations. This creates a unique opportunity to move beyond local natural gradient updates by accounting for anticipated changes in the information geometry. We study entropy-regularized reverse KL minimisation and show that the average Fisher information along the e-geodesic to an estimate of the optimal occupancy naturally preconditions the gradient update, and can be incorporated efficiently without additional computational overhead. This preconditioner resembles the ideal preconditioner, which also averages a geometry along a straight line, but does so by integrating the Hessian—which depends on the complex mapping from reward parameters to occupancies—along the geodesic between the current and optimal reward parameters, the latter of which is unknown. Based on these insights, we establish sublinear convergence and show that the policy iterates induced by the preconditioned reward updates perform approximate entropic mirror descent, with approximation errors controlled by policy KL-divergences. These results align with the empirical findings of~\citet{arenz2016optimal} and the recent work of \citet{diwan2026trust}, which showed that this theoretically motivated update achieves state-of-the-art results in imitation learning and inverse reinforcement learning. Our theory complements these methodological advances and suggests a broader principle: using the expert’s information geometry as a lever across other divergences and problem formulations in imitation learning and inverse reinforcement learning.

While our analysis provides key insights into reward-function updates in IRL, several limitations remain. We do not explicitly account for errors from function approximation or other artifacts such as data availability and density-ratio estimation. We also assume the existence of a finite feature vector in which occupancies and rewards are linear. Although \Cref{appendix:assumptions} gives a practical method for computing such features, this still introduces approximation error. We briefly discuss error propagation under misspecification in \Cref{appendix:error_propagation}. Finally, our reward-player regret bound (\Cref{theorem:rewardRegret}) relies on \Cref{assumption:bounded_reward_domain}, which may not always hold in practice. Future work could focus on minimising the SPR residual in \Cref{theorem:SPR_convergence} and extending the analysis to IRL under general $f$-divergences.

\newpage

\begin{ack}
This research has been partially supported by the German Federal Ministry of Research, Technology and Space (BMFTR) within the project RobInTime (Grant: 01IS25002B) and under the Robotics Institute Germany (RIG), and by Huawei Tech R\&D (U.K.) through a research funding agreement. This project has received funding from the European Union's Horizon Europe research and innovation programme (Grant agreement ID: 101298421). This project has been supported by a hardware donation by NVIDIA through the Academic Grant Program.
\end{ack}

\bibliographystyle{plainnat}
\bibliography{neurips_2026}
\crefalias{section}{appendix}
\crefalias{subsection}{appendix}
\crefalias{subsubsection}{appendix}
\newpage
\appendix
\section{Extended Discussion}
\subsection{Feature Learning}
\label{appendix:assumptions}
Below, we support \Cref{assumption:boltzmann_occupancies} by showing how to learn a suitable finite-dimensional feature map $\psi(\mathbf{s}, \mathbf{a})$ from samples via score matching. Our key insight here is that $\phi_{E}^\intercal \; \psi(\mathbf{s}, \mathbf{a}) \in \mathbb{R}$ and $\phi_{\rho}^\intercal \; \psi(\mathbf{s}, \mathbf{a}) \in \mathbb{R}$ are energy functions of exponential family distributions. We can hence learn $\psi_{w}(\mathbf{s}, \mathbf{a})$ as a neural network feature encoder with parameters $w$ and map these learnt features to a scalar using an energy network $e_{v}(\psi_{w}(\mathbf{s}, \mathbf{a}))$ with parameters $v$. The parameters $v,w$ can be learnt by minimizing the denoising score matching loss \citep{song2019generative}
\begin{align*}
    \mathcal{L}_{\text{DSM}} = \mathbb{E}_{\rho_{E},\rho_{\pi}} \left[ \mathbb{E}_{\mathbf{x}' \sim \mathcal{N} \left( \mathbf{x}, \sigma \right)} \left[\frac{\mathbf{x}' - \mathbf{x}}{\sigma^2} - \nabla_{\mathbf{x}} e_{v}\left(\psi_{w}(\mathbf{x}')) \vert \sigma \right) \right] \right]
\end{align*}
where, $\mathbf{x}=[\mathbf{s};\mathbf{a}]$ is a state-action pair, and $\sigma$ is the standard deviation of the Gaussian noise added to perturb samples. The same procedure has been previously used to learn energy-based reward functions \citep{diwan2025noise} and we repurpose it to learn feature functions $\psi_{w}(\mathbf{s}, \mathbf{a})$.
\begin{table}[t!]
    \centering
    \caption{Effect of feature misspecification on the functional update (lower is better).}
    \label{tab:misspecification}
    \small
    \begin{tabular}{c cc}
        \toprule
        $r$
        & Gradient Error
        & Deviation [deg] \\
        \midrule
        0.0 & 5.7130e-05 &  0.000 \\
        0.5 & 7.5878e-05 &  0.663 \\
        1.0 & 9.4467e-05 &  3.683 \\
        1.5 & 1.1278e-04 & 12.028 \\
        2.0 & 1.3083e-04 & 31.374 \\
        \bottomrule
    \end{tabular}
\end{table}
\subsection{Error Propagation under Misspecified Features}
\label{appendix:error_propagation}
In this section, we analyse the effect of violating \Cref{assumption:boltzmann_occupancies} on the function-space update in \Cref{lemma:natural_gradient}. We posit that under the feature learning scheme presented in \Cref{appendix:assumptions}, a violation of \Cref{assumption:boltzmann_occupancies} is manifested as a feature misspecification term in the occupancy measure exponential family. By definition, the log-density of an exponential-family (EF) distribution is an affine function of features. Consider that $\tilde{\rho}_{\pi^{(i)}}, \tilde{\rho}_{E}$ are the true, potentially non-EF occupancies.
\begin{align*}
\tilde{\rho}_{\pi^{(i)}} &\propto \exp \{ \phi_{\rho^{(i)}}^\intercal \psi(\mathbf{s},\mathbf{a}) + h_{\rho^{(i)}}(\mathbf{s},\mathbf{a}) \} \\
\tilde{\rho}_{E} &\propto \exp \{ \phi_{E}^\intercal \psi(\mathbf{s},\mathbf{a}) + h_{E}(\mathbf{s},\mathbf{a}) \}
\end{align*}
where $h_{\rho^{(i)}}, h_E$ are the components of the energy function that are out of the span of $\psi(\mathbf{s},\mathbf{a})$ such that no linear combination of features can represent them. Consider approximating the agent's true occupancy using maximum-likelihood estimation (MLE) to obtain the parameter $\phi_{\rho^{(i)}}$ and EF occupancy $\rho_{\pi^{(i)}}$. Let $\phi_E$ denotes the in-span coefficient of $\log \tilde{\rho}_E$ above. We note that this analysis is valid since MLE and the DSM loss of \Cref{appendix:assumptions} share the same population minimizer.
\begin{fact}[M-projection]
\label[fact]{fact:MLE_Mproj}
The estimated MLE natural parameters solve 
\begin{align*}
    \phi_{\rho^{(i)}} = \arg \min_{\Phi} \text{KL} (\tilde{\rho}_{\pi^{(i)}} | \rho_{\pi^{(i)}}) \qquad \phi_{E} = \arg \min_{\Phi} \text{KL} (\tilde{\rho}_{E} | \rho_{E}).
\end{align*}
\end{fact}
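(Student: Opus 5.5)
This is a standard fact about M-projections onto an exponential family, and the plan is to prove it for the agent occupancy and then transfer the argument verbatim to the expert. Fix the true occupancy $\tilde{\rho}_{\pi^{(i)}}$, and for a generic natural parameter $\phi \in \Phi$ let $\rho_\phi = Z_{\rho_\phi}^{-1}\exp(\phi^\intercal \psi(\mathbf{s},\mathbf{a}))$ be the family member from \Cref{assumption:boltzmann_occupancies}. The goal is to show that the population MLE objective and the forward KL objective have the same minimiser over $\Phi$.

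\textbf{Step 1: rewrite the KL.} Expand
\begin{align*}
\text{KL}(\tilde{\rho}_{\pi^{(i)}} \,\|\, \rho_\phi) = \mathbb{E}_{\tilde{\rho}_{\pi^{(i)}}}[\log \tilde{\rho}_{\pi^{(i)}}] - \phi^\intercal \mathbb{E}_{\tilde{\rho}_{\pi^{(i)}}}[\psi(\mathbf{s},\mathbf{a})] + A(\phi),
\end{align*}
where $A$ is the log-partition function of \Cref{prop:vanillaGradient}. The first term does not depend on $\phi$. Hence minimising the KL over $\Phi$ is the same as maximising the population log-likelihood $\mathbb{E}_{\tilde{\rho}_{\pi^{(i)}}}[\log \rho_\phi]$, and that maximiser is by definition the (population) MLE.

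\textbf{Step 2: characterise and identify the minimiser.} By \Cref{prop:vanillaGradient}, $\nabla_\phi A = \mu(\phi)$ and $\nabla^2_\phi A = F(\phi)$. \Cref{assumption:nonflat_curvature} gives $F \succ 0$, so the objective is strictly convex in $\phi$. Its unique stationary point is therefore the moment-matching condition $\mu(\phi) = \mathbb{E}_{\tilde{\rho}_{\pi^{(i)}}}[\psi]$, which the MLE also satisfies. This identifies $\phi_{\rho^{(i)}}$ as the M-projection, and repeating the argument with $\tilde{\rho}_E$ gives the statement for $\phi_E$.

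\textbf{Step 3: the expert-side subtlety (main obstacle).} The paper also calls $\phi_E$ the "in-span coefficient" of $\log\tilde{\rho}_E$. In general this coefficient differs from the moment-matching parameter whenever $h_E$ correlates with $\psi$. I would resolve this by defining $\phi_E$ as the MLE/M-projection, which is what the fact asserts. Equivalently, one can impose that $h_E$ is orthogonal to the features in the sense that moments are preserved.

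A second point needs care: positive definiteness of $F$ is only assumed along the e-geodesic. For strict convexity on all of $\Phi$ I would invoke the general fact that $\text{Cov}_{\rho_\phi}[\psi] \succ 0$ whenever the features are affinely independent on the support. Failing that, I would settle for uniqueness up to directions along which the features are constant.
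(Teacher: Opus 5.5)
Your argument is correct and is the standard justification the paper leaves implicit (it states this as a Fact without proof). The expansion in Step~1 shows that $\text{KL}(\tilde{\rho}\,\|\,\rho_\phi)$ and the negative population log-likelihood differ only by the $\phi$-independent term $\mathbb{E}_{\tilde{\rho}}[\log\tilde{\rho}]$, which by itself proves the claim, while Step~2 (which anticipates \Cref{fact:Mproj_moment_matching}) only adds uniqueness. Both of your caveats are well placed and can be tightened: since all members $\rho_\phi$ share the same support, $v^\intercal F(\phi)v=\mathrm{Var}_{\rho_\phi}[v^\intercal\psi]$ vanishes iff $v^\intercal\psi$ is constant on that support, a $\phi$-independent condition, so \Cref{assumption:nonflat_curvature} at any single point already gives $F(\phi)\succ 0$ on all of $\Phi$. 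On the expert side, the in-span coefficient coincides with the M-projection exactly when $\mathrm{Cov}_{\rho_{\phi_E}}[\psi,e^{h_E}]=0$, not when $h_E$ itself is uncorrelated with $\psi$; the latter is only the first-order condition used in the paper's experiment. Reading $\phi_E$ as the MLE, as the statement does, is therefore the right resolution.
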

\begin{fact}[Moment Matching]
\label[fact]{fact:Mproj_moment_matching}
M-projection is equivalent to moment matching. Hence, given $\phi_{\rho^{(i)}}, \phi_{E}$ from \Cref{fact:MLE_Mproj},
\begin{align*}
    \mu(\phi_{\rho^{(i)}}) = \mathbb{E}_{\tilde{\rho}_{\pi^{(i)}}} [\psi(\mathbf{s},\mathbf{a})] \qquad \mu(\phi_{E}) = \mathbb{E}_{\tilde{\rho}_{E}} [\psi(\mathbf{s},\mathbf{a})].
\end{align*}
\end{fact}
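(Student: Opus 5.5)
The statement to prove is \Cref{fact:Mproj_moment_matching}: the M-projection of \Cref{fact:MLE_Mproj} is characterized by moment matching. I would prove it by writing the M-projection objective explicitly in the natural parameter and using the exponential-family structure from \Cref{assumption:boltzmann_occupancies} together with \Cref{prop:vanillaGradient}. I treat the agent's occupancy; the expert case is identical with $\tilde{\rho}_E$ and $\phi_E$ in place of $\tilde{\rho}_{\pi^{(i)}}$ and $\phi_{\rho^{(i)}}$.

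First, for a generic $\phi\in\Phi$ with $\rho_\phi = \exp(\phi^\intercal\psi - A(\phi))$, expand
\begin{align*}
\text{KL}(\tilde{\rho}_{\pi^{(i)}} \,|\, \rho_\phi) = \mathbb{E}_{\tilde{\rho}_{\pi^{(i)}}}[\log \tilde{\rho}_{\pi^{(i)}}] - \phi^\intercal \mathbb{E}_{\tilde{\rho}_{\pi^{(i)}}}[\psi(\mathbf{s},\mathbf{a})] + A(\phi).
\end{align*}
The first term does not depend on $\phi$. Minimizing the KL over $\Phi$ is therefore the same as minimizing $\ell(\phi) \triangleq A(\phi) - \phi^\intercal \mathbb{E}_{\tilde{\rho}_{\pi^{(i)}}}[\psi]$, which is exactly the negative expected log-likelihood. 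This also confirms that MLE and M-projection coincide.

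Second, use \Cref{prop:vanillaGradient}. It gives $\nabla_\phi A = \mu(\phi)$ and $\nabla^2_\phi A = F(\phi) = \text{Cov}_{\rho_\phi}[\psi] \succeq 0$, so $\ell$ is convex and differentiable. Hence any $\phi$ is a minimizer if and only if $\nabla_\phi \ell = \mu(\phi) - \mathbb{E}_{\tilde{\rho}_{\pi^{(i)}}}[\psi] = 0$. Evaluating this at the MLE parameter $\phi_{\rho^{(i)}}$ yields $\mu(\phi_{\rho^{(i)}}) = \mathbb{E}_{\tilde{\rho}_{\pi^{(i)}}}[\psi]$. Under \Cref{assumption:nonflat_curvature}, $F \succ 0$, so $\ell$ is strictly convex and the matching parameter is unique; the equivalence then holds in both directions.

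The main obstacle is existence of the minimizer, since the stationarity argument needs an interior attainment point in $\Phi = \mathbb{R}^d$. I would handle this by assuming $\mathbb{E}_{\tilde{\rho}_{\pi^{(i)}}}[\psi]$ lies in the interior of the convex hull of the feature values, which is the mean-parameter domain. Under that assumption $\ell$ is coercive and the minimum is attained. This is the standard condition for EF MLE existence, and it is implicit in \Cref{fact:MLE_Mproj} asserting that the argmin exists. Granting that, the remainder is the routine gradient computation above.
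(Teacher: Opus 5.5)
Your argument is correct and is the standard derivation behind this fact, which the paper asserts without proof: expand $\text{KL}(\tilde{\rho}\,|\,\rho_\phi)$, use $\nabla_\phi A = \mu$ and $\nabla^2_\phi A = F \succeq 0$ from \Cref{prop:vanillaGradient}, and read moment matching off the first-order condition at the minimiser of \Cref{fact:MLE_Mproj}. Both of your side conditions in fact hold automatically here, because $\tilde{\rho}_{\pi^{(i)}}$, $\tilde{\rho}_E$ and every $\rho_\phi$ are exponentials on the finite $\mathcal{S}\times\mathcal{A}$ and so have full support. First, this makes $\mathbb{E}_{\tilde{\rho}}[\psi(\mathbf{s},\mathbf{a})]$ a strictly positive convex combination of the feature vectors, so it lies in the relative interior of their convex hull and the minimiser exists. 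Second, the null space of $F(\phi)$ then does not depend on $\phi$, so the positive definiteness that \Cref{assumption:nonflat_curvature} gives only along the e-geodesic extends to all of $\Phi$, which justifies your global strict convexity; uniqueness is not needed for the stated fact anyway.
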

From \Cref{sec:occupancy_space_rew_updates} we note that $\tilde{\rho}_{\hat{\pi}}$ is obtained algebraically from $\theta^{(i)}$ and $\tilde{\rho}_{E}$. Hence the misspeficiation component $h_E$ propagates into it.
\begin{align*}
\log \tilde{\rho}_{\hat{\pi}} = \log \tilde{\rho}_{E} - \frac{\theta^{(i) \intercal} \psi(\mathbf{s},\mathbf{a})}{\beta} = \underbrace{\left(\phi_E - \frac{\theta^{(i)}}{\beta}\right)^\intercal \psi(\mathbf{s},\mathbf{a})}_{\hat{\phi}^\intercal \psi(\mathbf{s},\mathbf{a})} + h_E(\mathbf{s},\mathbf{a}) + \text{const}
\end{align*}
\begin{remark}
We note that $\hat{\phi}$ is the natural parameter of the EF distribution $\rho_{\hat{\phi}}$, not of the true target $\tilde{\rho}_{\hat{\pi}}$. Consequently $\mathbb{E}_{\tilde{\rho}_{\hat{\pi}}}[\psi] \neq \mu(\hat{\phi})$, i.e. the occupancy error manifests as an erroneous target.
\end{remark}
\begin{theorem}[Erroneous Functional Update]
\label{theorem:erroneous_functional_update}
The target error propagates into the functional update through an additive residual term
\begin{align*}
    \tilde{\delta}^{(i)} &= \beta (H^{(i)})^{-1} \left(\nabla_{\theta} \mathcal{G} + \mathbb{E}_{\tilde{\rho}_{\hat{\pi}}}[\psi(\mathbf{s},\mathbf{a})] - \mu(\hat{\phi})\right)
\end{align*}
\end{theorem}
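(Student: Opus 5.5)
The plan is to retrace the argument behind \Cref{lemma:natural_gradient}, but with the \emph{true} target moment $\mathbb{E}_{\tilde{\rho}_{\hat{\pi}}}[\psi(\mathbf{s},\mathbf{a})]$ in place of the exponential-family moment $\mu(\hat{\phi})$. The gap between the two is then tracked as an additive residual.

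First, I would pin down what the erroneous update is. In the misspecified setting the dual gradient is the difference between the agent's feature moments and the true target's feature moments, exactly as in \Cref{prop:vanillaGradient}. By \Cref{fact:Mproj_moment_matching}, the agent's moment under $\tilde{\rho}_{\pi^{(i)}}$ equals $\mu(\phi_{\rho^{(i)}})$. The target moment, however, is $\mathbb{E}_{\tilde{\rho}_{\hat{\pi}}}[\psi]$, which in general differs from $\mu(\hat{\phi})$ by the preceding remark. The realised (erroneous) functional direction $\tilde{\delta}^{(i)}$ is the $\beta$-scaled $\Phi$-space step that the update actually targets. I would define it as $\beta(\phi_{\rho^{(i)}} - \tilde{\phi})$, where $\tilde{\phi}$ is the EF natural parameter whose mean is $\mathbb{E}_{\tilde{\rho}_{\hat{\pi}}}[\psi]$. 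This is the moment-matched (M-projected) version of the true target, in the sense of \Cref{fact:MLE_Mproj}.

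Second, I would apply the fundamental theorem of calculus along the e-geodesic. By \Cref{prop:vanillaGradient}, $\nabla_\phi \mu = F$, so
\begin{align*}
\mu(\phi_{\rho^{(i)}}) - \mu(\hat{\phi}) = \int_0^1 F(\phi^{(i)}(\tau))\,d\tau\,(\phi_{\rho^{(i)}} - \hat{\phi}) = H^{(i)}(\phi_{\rho^{(i)}} - \hat{\phi}).
\end{align*}
\Cref{assumption:nonflat_curvature} makes $H^{(i)}$ invertible, which recovers \Cref{eq:delta_nablaG}. Next I decompose the misspecified gradient as
\begin{align*}
\mu(\phi_{\rho^{(i)}}) - \mathbb{E}_{\tilde{\rho}_{\hat{\pi}}}[\psi] = \bigl(\mu(\phi_{\rho^{(i)}}) - \mu(\hat{\phi})\bigr) + \bigl(\mu(\hat{\phi}) - \mathbb{E}_{\tilde{\rho}_{\hat{\pi}}}[\psi]\bigr),
\end{align*}
and the sign convention of the residual is fixed by this decomposition. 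Multiplying by $\beta (H^{(i)})^{-1}$ then yields the stated form: the first term gives $\beta(H^{(i)})^{-1}\nabla_\theta\mathcal{G}$, and the second gives the additive residual preconditioned by the same matrix.

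The main obstacle is consistency of the preconditioner. A mean-value expansion between $\phi_{\rho^{(i)}}$ and $\tilde{\phi}$ would naturally produce the integral FIM along the geodesic to $\tilde{\phi}$, not the matrix $H^{(i)}$ built on the geodesic to $\hat{\phi}$. My first attempt would avoid this by defining $\tilde{\delta}^{(i)}$ operationally, as the preconditioned direction $\beta(H^{(i)})^{-1}$ applied to the realised gradient, with $H^{(i)}$ still computed from the nominal geodesic. With that definition the identity holds exactly. If the paper's intended definition is instead through $\tilde{\phi}$, I would note that the discrepancy between the two integral FIMs is controlled by \Cref{assumption:lipschitz_FIM} and is of second order in the residual. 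I would also carefully check the sign of the residual relative to the convention used for $\nabla_\theta\mathcal{G}$.
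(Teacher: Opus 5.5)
You use the same ingredients as the paper: the fundamental theorem of calculus along the nominal e-geodesic, and adding and subtracting $\mu(\hat{\phi})$. But you have the roles of $\tilde{\delta}^{(i)}$ and $\nabla_{\theta}\mathcal{G}$ reversed, so neither of your two definitions produces the stated identity.

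In the statement, $\nabla_{\theta}\mathcal{G}$ is the \emph{true} dual gradient $\mathbb{E}_{\tilde{\rho}_{\pi^{(i)}}}[\psi] - \mathbb{E}_{\tilde{\rho}_{\hat{\pi}}}[\psi] = \mu(\phi_{\rho^{(i)}}) - \mathbb{E}_{\tilde{\rho}_{\hat{\pi}}}[\psi]$; this form holds without \Cref{assumption:boltzmann_occupancies}. By contrast, $\tilde{\delta}^{(i)}$ is the direction the update actually computes from the fitted parameters, namely $\beta(\phi_{\rho^{(i)}} - \hat{\phi})$. The erroneous target is $\hat{\phi} = \phi_E - \theta^{(i)}/\beta$ itself, not the moment-matched parameter $\tilde{\phi}$ of $\tilde{\rho}_{\hat{\pi}}$; this is exactly what the remark before the theorem says.

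Your first definition, $\beta(\phi_{\rho^{(i)}} - \tilde{\phi})$, is the hypothetical \emph{corrected} step. The fundamental theorem of calculus gives it as $\beta \tilde{H}^{-1} \nabla_{\theta}\mathcal{G}$, where $\tilde{H}$ is the integral FIM on the geodesic to $\tilde{\phi}$, with no residual at all. It differs from the statement's right-hand side by $\beta(\hat{\phi} - \tilde{\phi})$, which is first order in the residual, so your ``second order'' fallback does not rescue it.

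Your operational definition applies $\beta(H^{(i)})^{-1}$ to the realised gradient. Read together with your identification of $\mu(\phi_{\rho^{(i)}}) - \mu(\hat{\phi})$ as $\nabla_{\theta}\mathcal{G}$, it yields
\[
\beta (H^{(i)})^{-1}\bigl(\nabla_{\theta}\mathcal{G} + \mu(\hat{\phi}) - \mathbb{E}_{\tilde{\rho}_{\hat{\pi}}}[\psi]\bigr),
\]
which has the residual with the opposite sign. If instead $\nabla_{\theta}\mathcal{G}$ means the realised gradient, the residual vanishes. So the claim that the identity ``holds exactly'' under that definition is false. The sign check you postpone is the whole content of the statement.

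The missing step is to keep $\tilde{\delta}^{(i)} = \beta(\phi_{\rho^{(i)}} - \hat{\phi})$ and solve your decomposition for the nominal moment gap instead of for the true gradient. By \Cref{fact:Mproj_moment_matching},
\[
\mu(\phi_{\rho^{(i)}}) - \mu(\hat{\phi}) = \nabla_{\theta}\mathcal{G} + \bigl(\mathbb{E}_{\tilde{\rho}_{\hat{\pi}}}[\psi] - \mu(\hat{\phi})\bigr).
\]
\Cref{lemma:natural_gradient} is a purely exponential-family identity on the segment from $\hat{\phi}$ to $\phi_{\rho^{(i)}}$, and it writes the left-hand side as $H^{(i)} \tilde{\delta}^{(i)} / \beta$. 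Multiplying by $\beta (H^{(i)})^{-1}$ gives the statement exactly, with the stated sign. Because $\tilde{\delta}^{(i)}$ and $H^{(i)}$ then live on the same nominal geodesic, the preconditioner-consistency issue you raise never arises, and no appeal to \Cref{assumption:lipschitz_FIM} is needed.
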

\begin{proof}
\begin{align*}
    \text{\textit{From \cite{arenz2016optimal},}} &\text{\textit{ we know that the vanilla gradient definition holds without \Cref{assumption:boltzmann_occupancies}}} \\
    \nabla_{\theta} \mathcal{G} &= \mathbb{E}_{\tilde{\rho}_{\pi^{(i)}}}[\psi(\mathbf{s},\mathbf{a})] - \mathbb{E}_{\tilde{\rho}_{\hat{\pi}}}[\psi(\mathbf{s},\mathbf{a})] \\
    &\text{\textit{Add and subtract }} \mu(\phi_{\rho^{(i)}}), \mu(\hat{\phi}) \text{\textit{ and apply \Cref{fact:Mproj_moment_matching}}} \\
    &= \underbrace{\left(\mathbb{E}_{\tilde{\rho}_{\pi^{(i)}}}[\psi(\mathbf{s},\mathbf{a})] - \mu(\phi_{\rho^{(i)}})\right)}_{=\, 0} + \left(\mu(\phi_{\rho^{(i)}}) - \mu(\hat{\phi})\right) + \left(\mu(\hat{\phi}) - \mathbb{E}_{\tilde{\rho}_{\hat{\pi}}}[\psi(\mathbf{s},\mathbf{a})]\right) \\
    \mu(\phi_{\rho^{(i)}}) - \mu(\hat{\phi}) &= \nabla_{\theta} \mathcal{G} + \left(\mathbb{E}_{\tilde{\rho}_{\hat{\pi}}}[\psi(\mathbf{s},\mathbf{a})] - \mu(\hat{\phi})\right) \\
    &\text{\textit{Apply \Cref{lemma:natural_gradient}}} \\
    \tilde{\delta}^{(i)} &= \beta (H^{(i)})^{-1} \left(\nabla_{\theta} \mathcal{G} + \mathbb{E}_{\tilde{\rho}_{\hat{\pi}}}[\psi(\mathbf{s},\mathbf{a})] - \mu(\hat{\phi})\right)
\end{align*}
\end{proof}
We validate this analysis empirically by measuring how a controlled violation of \Cref{assumption:boltzmann_occupancies} corrupts $\delta^{(i)}$. We misspecify the expert such that $h_E(\mathbf{s},\mathbf{a})$ is constructed to be uncorrelated with every feature under $\rho_E$, i.e. $\mathrm{Cov}_{\rho_E}[\psi, h_E] = 0$, and its scale is $c=\sqrt{\mathbb{E}_{\rho_E}[h_E^2]}$. This ensures that it is outside the span of $\psi(\mathbf{s},\mathbf{a})$. We vary the misspecification error through the ratio of out-of-span and in-span magnitudes $r=\nicefrac{c}{\sqrt{\phi_E^\intercal \mathrm{Cov}_{\rho_E}[\psi] \phi_E}}$. We report the norm of the mean gradient error $\mathbb{E}_{\tilde{\rho}_{\hat{\pi}}}[\psi] - \mu(\hat{\phi})$ and mean angle deviation between $\delta^{(i)}$ and $\tilde\delta^{(i)}$ in \Cref{tab:misspecification}.
\subsection{Additional Results and Experiment Details}
\label{appendix:experiment_details}
Here, we elaborate on the experimental setup used for \Cref{fig:dual_geometry}. We use a 25-state gridworld with 2-dimensional features computed as described in \Cref{appendix:assumptions}. For IRL, we use Trust Region Inverse RL (TRIRL) \citep{diwan2026trust}---a recently proposed IRL method based on function-space reward updates---and generate a sequence of 2D reward parameters $\theta^{(i)}$ and policies $\policyIter{i}$ \footnote{Our implementation uses Numpy and Jax \citep{jax2020github}, hyperparameters $\{ \epsilon=0.1, \beta=10.0, \text{optim}=\text{SGD (with different gradients)} \}$, and does not require any specialised compute.}. We compute the closed-form discounted occupancy $\policyOccIter{i}$ using the standard equations, given the transition probabilities. Using \Cref{assumption:boltzmann_occupancies}, we fit occupancy parameters $\phi_{\rho^{(i)}}$ with a least-squares fit to $\log \policyOccIter{i}$. The target occupancy $\hat{\phi}$ is computed as in \Cref{eq:phi_hat}. The vanilla gradient is computed by evaluating $\mu(\phi) = \mathbb{E}_{\policyOccIter{i}} \left[ \psi(\mathbf{s}, \mathbf{a}) \right]$ at the current occupancy parameters and subtracting the corresponding value at the target occupancy. Similarly, the FIM is computed as ${\text{Cov}}_{\policyOccIter{i}}[\psi(\mathbf{s}, \mathbf{a})]$. $\nabla_{\theta}^2 \mathcal{G}$ is computed using finite differences of the vanilla gradient. We note that this numerical evaluation introduces minor approximation errors in the least-squares fitting of occupancy parameters and in the computation of FIMs. This leads to a small mismatch between the true functional reward updates in TRIRL and our preconditioned gradient direction. However, this approximation error is mainly implementation-related; the actual FIM-preconditioned reward update is always computed as a simple convex interpolation, as described in \Cref{sec:reward_analysis}.

To further support our arguments, we supplement \Cref{fig:dual_geometry}'s qualitative results with a numerical comparison of each gradient’s alignment with the direct optimization direction (\Cref{tab:angle_deviation}). Functional and natural gradients are often comparable, and both significantly outperform vanilla gradients. Finally, we compare contraction rates and condition numbers of the vanilla gradient and functional updates in \Cref{tab:contraction_rates}. Following \Cref{corollary:norm_change} we also correct for the fact that the norm changes in the functional update's case by premultiplying the contraction coefficient with the worst-case scaling coefficient $c = \sqrt{\lambda_{\max}((H^{(i)})^{-1}H^{(i+1)})}$. We see that even accounting for the norm change, the functional updates usually have a much better contraction coefficient than vanilla gradient descent.
\begin{table}[t!]
    \centering
    \caption{Angle deviation to the direct optimisation direction in degrees (lower is better).}
    \label{tab:angle_deviation}
    \small
    \begin{tabular}{c ccc}
        \toprule
        $i$
        & Vanilla Gradient
        & Natural Gradient
        & Functional Update \\
        \midrule
         0 & 80.2214 & 40.8667 & \textbf{32.1145} \\
         1 & 74.0856 & \textbf{13.4102} & 17.7227 \\
         2 & 75.4394 & 26.8519 & \textbf{18.1859} \\
         3 & 77.6527 & 22.6938 & \textbf{22.3652} \\
         4 & 73.5483 & \textbf{30.1869} & 39.6788 \\
         5 & 77.6062 & \textbf{25.0769} & 27.0140 \\
         6 & 76.3850 & \textbf{28.8652} & 34.5159 \\
         7 & 77.1939 & \textbf{26.7476} & 30.3462 \\
         8 & 76.9984 & \textbf{27.7353} & 32.0264 \\
         9 & 76.9745 & \textbf{27.4710} & 31.7271 \\
        10 & 77.0729 & \textbf{27.4215} & 31.4756 \\
        \bottomrule
    \end{tabular}
\end{table}
\begin{table}[t!]
    \centering
    \caption{Comparison of contraction coefficients and condition numbers for vanilla gradient descent and functional updates in the experiment in \Cref{fig:dual_geometry} (lower is better).}
    \label{tab:contraction_rates}
    \small
    \begin{tabular}{c cc cc cc}
        \toprule
        & \multicolumn{2}{c}{Vanilla Gradient}
        & \multicolumn{4}{c}{Functional Update} \\
        \cmidrule(lr){2-3} \cmidrule(lr){4-7}
        $i$
        & $\kappa(\bar{H}^{(i)})$
        & Coeff.
        & $\kappa\left((H^{(i)})^{-1}\bar{H}^{(i)}\right)$
        & Coeff.
        & $c$
        & $c \times \mathrm{Coeff.}$ \\
        \midrule
         0 & 158.9480 & \textbf{0.9875} & \textbf{15.1281} & 0.8760 & 1.1716 & 1.0263 \\
         1 &  55.1415 & 0.9644 &  \textbf{2.8121} & 0.4754 & 1.0863 & \textbf{0.5164} \\
         2 &  62.0156 & 0.9683 &  \textbf{3.9072} & 0.5924 & 1.0167 & \textbf{0.6023} \\
         3 &  89.2020 & 0.9778 &  \textbf{6.1228} & 0.7192 & 1.0080 & \textbf{0.7250} \\
         4 &  85.4321 & 0.9769 &  \textbf{5.6004} & 0.6970 & 1.0061 & \textbf{0.7012} \\
         5 &  85.9005 & 0.9770 &  \textbf{5.7352} & 0.7031 & 1.0002 & \textbf{0.7032} \\
         6 &  86.9851 & 0.9773 &  \textbf{5.7762} & 0.7048 & 1.0010 & \textbf{0.7055} \\
         7 &  85.6898 & 0.9769 &  \textbf{5.6840} & 0.7008 & 1.0005 & \textbf{0.7011} \\
         8 &  86.7132 & 0.9772 &  \textbf{5.7697} & 0.7046 & 1.0008 & \textbf{0.7051} \\
         9 &  86.0594 & 0.9770 &  \textbf{5.7091} & 0.7019 & 1.0006 & \textbf{0.7023} \\
        10 &  86.4038 & 0.9771 &  \textbf{5.7443} & 0.7035 & 1.0003 & \textbf{0.7036} \\
        \bottomrule
    \end{tabular}
\end{table}

\section[Background Proofs]{\Cref{sec:background} Details}
\label{appendix:background_proofs}
Here, we list some fundamental properties of the optimization problem in \Cref{eq:IRLDM}. 
\begin{fact}[Entropy Concavity]
\label[fact]{fact:concave_entropy}
The expected entropy $\mathbb{E}_{\policyDist} \left[ H(\policy) \right]$ is concave in $\policyOcc$. Proof in \citep[Lemma 3.1, Appendix A.1]{ho2016generative,neu2017unified}.
\end{fact}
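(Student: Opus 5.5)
The statement to prove is \Cref{fact:concave_entropy}: the expected causal entropy $\mathbb{E}_{\policyDist}[H(\policy)]$ is concave as a function of the state-action occupancy $\policyOcc$. The plan is to rewrite this entropy as a function of $\rho$ alone, recognise it as a sum of perspective functions, and read off concavity.

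\textbf{Step 1: express the objective in terms of $\rho$.} By \Cref{fact:J_rho}, every valid occupancy determines its policy through $\policy = \policyOcc/\policyDist$, where $\policyDist=\sum_{\mathbf{a}'}\rho_\pi(\mathbf{s},\mathbf{a}')$ is linear in $\rho$. Substituting, the objective becomes
\begin{align*}
\bar H(\rho) \triangleq -\sum_{\mathbf{s},\mathbf{a}} \rho(\mathbf{s},\mathbf{a}) \log \frac{\rho(\mathbf{s},\mathbf{a})}{\sum_{\mathbf{a}'}\rho(\mathbf{s},\mathbf{a}')}.
\end{align*}
The normalisation constant $(1-\gamma)^{-1}$ relating the discounted and normalised forms is a positive scaling and does not affect concavity.

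\textbf{Step 2: write $\bar H$ as a sum of perspective terms.} Put $x=\rho(\mathbf{s},\mathbf{a})$ and $y=\sum_{\mathbf{a}'}\rho(\mathbf{s},\mathbf{a}')$. Each summand is then $-x\log(x/y)$, which is minus the perspective of the convex function $f(x)=x\log x$. The perspective $g(x,y)=y f(x/y)=x\log(x/y)$ is jointly convex on $\mathbb{R}_{>0}^2$. One can cite this standard fact, or verify it directly: the Hessian of $g$ is $\begin{pmatrix}1/x & -1/y\\ -1/y & x/y^2\end{pmatrix}$, which has zero determinant and positive diagonal, so it is positive semidefinite.

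\textbf{Step 3: conclude concavity.} Since $(x,y)$ is a linear function of $\rho$, each term $-g$ composed with this map is concave in $\rho$. A sum of concave functions is concave, so $\bar H$ is concave on $\varrho$. On the boundary we use the convention $0\log 0=0$, under which concavity extends by continuity.

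\textbf{Main obstacle.} The substitution in Step 1 is where care is needed. Concavity is being claimed as a function of $\rho$, not of $\pi$, and this requires the bijection from \Cref{fact:J_rho} to be well defined. If some state has $\policyDist=0$, the policy at that state is arbitrary, but those states contribute $0$ to $\bar H$. The function of $\rho$ is therefore still well defined even though the policy is not unique there.
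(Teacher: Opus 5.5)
Your argument is correct and is essentially the proof the paper defers to: Lemma~3.1 of \citet{ho2016generative} establishes concavity of $-\sum_{\mathbf{s},\mathbf{a}}\rho(\mathbf{s},\mathbf{a})\log\bigl(\rho(\mathbf{s},\mathbf{a})/\sum_{\mathbf{a}'}\rho(\mathbf{s},\mathbf{a}')\bigr)$ term by term via the log-sum inequality, which is exactly the joint convexity of the perspective $x\log(x/y)$ that you verify through the Hessian, composed with the same linear map $\rho\mapsto\bigl(\rho(\mathbf{s},\mathbf{a}),\sum_{\mathbf{a}'}\rho(\mathbf{s},\mathbf{a}')\bigr)$. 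The cited proof goes further and obtains strict concavity on valid occupancies using the policy--occupancy bijection of \citet{syed2008apprenticeship}, but the stated fact does not need this.
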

\begin{fact}[Strong Duality]
\label[fact]{fact:strong_duality}
This fact extends \Cref{fact:convexity}. The optimization problem in \Cref{eq:IRLDM} has strong duality. The concavity of $\mathcal{J}(\policyOcc)$ follows naturally from \Cref{fact:concave_entropy} and prior works like \citep{achiam2017constrained,neu2017unified,geist2019theory}. The entropy term is concave, the negative KL term is concave, and the Bellman flow constraints are linear (all in $\policyOcc$). Hence Slater’s conditions hold and strong duality applies. Finally from \Cref{fact:J_rho}, the policy is equivalently obtained by optimizing over $\policyOcc$ followed by marginalization. We hence, also note the existence of an optimal Lagrangian multiplier.
\end{fact}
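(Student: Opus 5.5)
We would prove the statement by rewriting \Cref{eq:IRLDM} as a finite-dimensional convex program over state-action occupancies and then applying the refined (affine) form of Slater's theorem. By \Cref{fact:J_rho}, maximising over $\pi\in\Pi$ is equivalent to maximising over $\rho\in\varrho$, with $\pi$ recovered by $\pi(\mathbf{a}|\mathbf{s})=\rho(\mathbf{s},\mathbf{a})/\rho(\mathbf{s})$. The set $\varrho$ is cut out by the Bellman flow constraints
\begin{align*}
\textstyle\sum_{\mathbf{a}'}\rho(\mathbf{s}',\mathbf{a}') = (1-\gamma)\mu_0(\mathbf{s}') + \gamma\sum_{\mathbf{s},\mathbf{a}}\mathcal{P}(\mathbf{s}'|\mathbf{s},\mathbf{a})\rho(\mathbf{s},\mathbf{a}) \quad \forall \mathbf{s}',
\end{align*}
together with $\rho\geq 0$. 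These are affine equalities plus the nonnegativity constraints, which we absorb into the domain of the objective.

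The first step is concavity of the objective in $\rho$. We write the entropy term as $-\sum_{\mathbf{s},\mathbf{a}}\rho(\mathbf{s},\mathbf{a})\log\frac{\rho(\mathbf{s},\mathbf{a})}{\sum_{\mathbf{a}'}\rho(\mathbf{s},\mathbf{a}')}$. This is the negative of a sum of perspectives of $x\log x$ (a log-sum-type function), so it is concave; this is \Cref{fact:concave_entropy}. The term $-\beta\,\text{KL}(\rho\,\|\,\rho_E)$ is concave because the KL divergence is convex in its first argument. Hence $\mathcal{J}$ is concave and differentiable on the positive orthant (\Cref{fact:convexity}), and maximising it is a convex program with purely affine constraints.

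The second step is a constraint qualification. For affine constraints, the refined Slater condition only requires a feasible point in the relative interior of the objective's domain. We would exhibit the occupancy of the uniform policy. Under \Cref{assumption:boltzmann_occupancies}, $\rho_E>0$ everywhere, so the KL term is finite on the whole positive orthant. The uniform policy's occupancy is strictly positive on every state reachable from $\mu_0$.

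The main obstacle is states unreachable from $\mu_0$. For these, every feasible $\rho$ vanishes, so no feasible point is strictly positive there. We would handle this by restricting the variable to the reachable state set, on which the uniform-policy occupancy lies in the relative interior. On the unreachable states the constraints force $\rho=0$, and the objective is continuous there by the convention $0\log 0=0$. Alternatively one can assume $\mu_0$ has full support.

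With the constraint qualification in place, Slater's theorem gives a zero duality gap and attainment of the dual. In particular, an optimal Lagrange multiplier $V$ for the flow constraints exists, and this is the multiplier that yields the dual $\mathcal{G}$ and the reward $r^\star$. The optimal policy then follows from the optimal occupancy by marginalisation, as in \Cref{fact:J_rho}.
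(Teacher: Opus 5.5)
Your proposal is correct and follows the same route as the paper: concavity of the entropy and negative-KL terms in $\rho$, affine Bellman-flow constraints, then Slater. It is more careful than the paper, which only asserts that Slater holds from concavity plus linearity, whereas you supply the feasible relative-interior point that the refined Slater condition actually needs. In fact, under \Cref{assumption:boltzmann_occupancies} you can take $\rho_E$ itself as that point: it is a feasible occupancy with full support, so every state is reachable and your unreachable-state workaround is unnecessary.

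One small caveat concerns your last paragraph. The paper's dual variable in $\mathcal{G}(r)$ is the reward, not the flow multiplier $V$; it multiplies a constraint tying $\rho_\pi$ to an auxiliary occupancy, which is why $\nabla_\theta\mathcal{G}=\mathbb{E}_{\rho^{(i)}}[\psi]-\mathbb{E}_{\rho_{\hat\pi}}[\psi]$. That constraint is also affine, so your argument still covers it, using the Slater point $(\rho_E,\rho_E)$.
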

\section[Reward Proofs]{\Cref{sec:occupancy_space_rew_updates} Proofs}
\label{appendix:reward_proofs}
\begin{proof}[\textbf{Proof of \Cref{prop:mahaBregman}}]
\begin{align*}
    \nabla_{\theta} \Omega^{(i)}(\theta) &= \frac{H^{(i)} \theta}{\beta} \\
    D_{\Omega^{(i)}}(\theta, \theta^{\prime}) &= \Omega^{(i)}(\theta) - \Omega^{(i)}(\theta^{\prime}) - \left< \nabla_{\theta^{\prime}} \Omega^{(i)}(\theta^{\prime}), \theta - \theta^{\prime} \right> \\
    &= \frac{\left< \theta , H^{(i)} \theta \right>}{2 \beta} - \frac{\left< \theta^{\prime} , H^{(i)} \theta^{\prime} \right>}{2 \beta} - \frac{\left(\theta^{\prime}\right)^{\intercal} H^{(i)} \left( \theta - \theta^{\prime} \right)}{\beta} \\
    &= \frac{\left< \theta , H^{(i)} \theta \right>}{2 \beta} - \frac{2 \left< \theta^{\prime} , H^{(i)} \theta \right>}{2 \beta} + \frac{\left< \theta^{\prime} , H^{(i)} \theta^{\prime} \right>}{2 \beta} \\
    &= \frac{\left( \theta - \theta^{\prime} \right)^\intercal H^{(i)} \left( \theta - \theta^{\prime} \right)}{2 \beta} \\
    &= \frac{1}{2 \beta} \left\lVert \theta - \theta^{\prime} \right\rVert_{H^{(i)}}^2
\end{align*}
\end{proof}
\begin{proof}[\textbf{Proof of \Cref{prop:vanillaGradient}}]
For any occupancy parameter $\phi$, since $\rho_{\phi}(\mathbf{s}, \mathbf{a}) = Z^{-1} \exp{\left( \phi^\intercal \; \psi(\mathbf{s}, \mathbf{a}) \right)}$, it follows that 
\begin{align*}
    \mu(\phi) &= \nabla_{\phi} A(\phi) \\
    \text{\textit{then }} \nabla_{\phi} \mu(\phi) &= \nabla_{\phi}^2 A(\phi) = \underset{\rho_{\phi}}{\text{Cov}}[\psi(\mathbf{s}, \mathbf{a})] = F(\phi) \\
    \text{\textit{where }} A(\phi) &= \log Z = \log \underset{\mathbf{s}, \mathbf{a}}{\sum} \exp{(\phi^\intercal \; \psi(\mathbf{s}, \mathbf{a}))} \text{\textit{ is the log partition function and }} F(\phi) \text{\textit{ is the Fisher Information Matrix}} \\
    \text{\textit{and }} \nabla_{\theta} \mathcal{G} &= \mathbb{E}_{\policyOccIter{i}} \left[ \psi(\mathbf{s}, \mathbf{a}) \right] - \mathbb{E}_{\rho_{\hat{\pi}}} \left[ \psi(\mathbf{s}, \mathbf{a}) \right] = \mu(\phi_{\rho^{(i)}}) - \mu(\hat{\phi})
\end{align*}
\end{proof}
\begin{proof}[\textbf{Proof of \Cref{lemma:natural_gradient}}]
Before beginning, we recall \Cref{assumption:boltzmann_occupancies,assumption:lipschitzness_and_boundedness,assumption:nonflat_curvature,prop:vanillaGradient} which are central to this proof. By the fundamental theorem of calculus
\begin{align}
    \mu(\phi_{\rho^{(i)}}) - \mu(\hat{\phi}) = \underset{\hat{\phi} \rightarrow \phi_{\rho^{(i)}}}{\int} \nabla_{\phi} \mu(\phi) d\phi \; &= \overset{1}{\underset{0}{\int}} \nabla_{\phi} \mu(\phi^{(i)}(\tau)) \left( \phi_{\rho^{(i)}} - \hat{\phi} \right) d\tau \nonumber \\
    &= \left( \overset{1}{\underset{0}{\int}} \nabla_{\phi} \mu(\hat{\phi} + \tau \left(\phi_{\rho^{(i)}} - \hat{\phi} \right)) d\tau \right) \left( \phi_{\rho^{(i)}} - \hat{\phi} \right) \nonumber \\
    &= \left( \overset{1}{\underset{0}{\int}} \underset{\rho_{\phi^{(i)} (\tau)}}{\text{Cov}}[\psi(\mathbf{s}, \mathbf{a})] d\tau \right) \left( \phi_{\rho^{(i)}} - \hat{\phi} \right) = \left( \overset{1}{\underset{0}{\int}} F(\phi^{(i)}(\tau)) d\tau \right) \left( \phi_{\rho^{(i)}} - \hat{\phi} \right) \nonumber \\
    \text{\textit{let }} H^{(i)} &\triangleq \overset{1}{\underset{0}{\int}} F(\phi^{(i)}(\tau)) d\tau \text{\textit{ and from \Cref{sec:occupancy_space_rew_updates}, }} \delta^{\prime} = \nicefrac{\delta^{{(i)}}}{\beta} = \phi_{\rho^{(i)}} - \hat{\phi}\nonumber \\
    \text{\textit{then }} \nabla_{\theta} \mathcal{G} &= H^{(i)} \delta^{\prime} \nonumber \\
    \delta^{\prime} &= \left( H^{(i)} \right)^{-1} \nabla_{\theta} \mathcal{G} \nonumber \\
    \delta^{{(i)}} &= \beta \left( H^{(i)} \right)^{-1} \nabla_{\theta} \mathcal{G} 
\end{align}
\end{proof}
\begin{proof}[\textbf{Proof of \Cref{theorem:rewardMD}}]
Consider a time-varying quadratic mirror map $\Omega^{(i)}(\theta) \triangleq \frac{\left< \theta , H^{(i)} \theta \right>}{2 \beta}$ as given in \Cref{prop:mahaBregman}. The mirror descent objective on $\mathcal{G}$ under $\Omega^{(i)}$ with step size $\omega$ is given by
\begin{align*}
    \theta^{(i+1)} &= \underset{\theta}{\arg \min} \left[ \left< \nabla_{\theta} \mathcal{G}(\theta^{(i)}), \theta \right> + \frac{1}{2 \omega \beta} \left\lVert \theta - \theta^{(i)} \right\rVert_{H^{(i)}}^2  \right] \\
    \nabla_{\theta} \left[ \left< \nabla_{\theta} \mathcal{G}(\theta^{(i)}), \theta \right> + \frac{1}{2 \omega \beta} \left\lVert \theta - \theta^{(i)} \right\rVert_{H^{(i)}}^2  \right] &=  \nabla_{\theta} \mathcal{G}(\theta^{(i)}) + \frac{\left(\theta - \theta^{(i)}\right)}{\omega \beta} H^{(i)} \\
    \text{\textit{the gradient of the objective is zero at }} &\theta^{(i+1)} \\
    \nabla_{\theta} \mathcal{G}(\theta^{(i)}) + \frac{\left(\theta^{(i+1)} - \theta^{(i)}\right)}{\omega \beta} H^{(i)} &= \left. \nabla_{\theta} \left[ \left< \nabla_{\theta} \mathcal{G}(\theta^{(i)}), \theta \right> + \frac{1}{2 \omega \beta} \left\lVert \theta - \theta^{(i)} \right\rVert_{H^{(i)}}^2  \right] \right \rvert_{\theta^{(i+1)}} = 0 \\
    \theta^{(i+1)} &= \theta^{(i)} - \omega \beta \left( H^{(i)} \right)^{-1} \nabla_{\theta} \mathcal{G}(\theta^{(i)}) \\
    \theta^{(i+1)} &= \theta^{(i)} - \epsilon \delta^{(i)} \quad\text{\textit{from \Cref{eq:delta_nablaG} given }} \omega = \epsilon
\end{align*}
\end{proof}
\begin{proof}[\textbf{Proof of \Cref{prop:direct-dual-opt}}]
Applying the fundamental theorem of calculus on $\nabla_{\theta} \mathcal{G}$ from $\theta^{\star}$ to $\theta^{(i)}$
\begin{align*}
\nabla_{\theta} \mathcal{G}(\theta^{(i)}) -  \cancelto{0}{\nabla_{\theta} \mathcal{G}(\theta^{\star})} &= \underset{\theta^{\star} \rightarrow \theta^{(i)}}{\int} \nabla_{\theta} \left( \nabla_{\theta} \mathcal{G}(\theta) \right) d\theta = \overset{1}{\underset{0}{\int}} \nabla_{\theta} \left( \nabla_{\theta} \mathcal{G}(\theta^{(i)}(\tau)) \right) \left( \theta^{(i)} - \theta^{\star} \right) d\tau \\
&= \left( \overset{1}{\underset{0}{\int}} \nabla_{\theta} \nabla_{\theta} \mathcal{G} \left(\theta^{\star} + \tau \left(\theta^{(i)} - \theta^{\star} \right) \right) d\tau \right) \left( \theta^{(i)} - \theta^{\star} \right) \\
&= \left( \overset{1}{\underset{0}{\int}} \nabla_{\theta}^2 \mathcal{G}(\theta^{(i)}(\tau)) d\tau \right) \left( \theta^{(i)} - \theta^{\star} \right) \\
\text{\textit{let }} & \bar{H}^{(i)} = \overset{1}{\underset{0}{\int}} \nabla_{\theta}^2 \mathcal{G}(\theta^{(i)}(\tau)) d\tau \text{\textit{ and assuming it's invertible }} \\
\theta^{\star} &= \theta^{(i)} - \left(\bar{H}^{(i)}\right)^{-1} \nabla_{\theta} \mathcal{G}
\end{align*}
\end{proof}
\begin{fact}
\label[fact]{fact:self-adjoint}
Here we briefly prove that $\left( H^{(i)} \right)^{-1} \bar{H}^{(i)}$ is self-adjoint under the $H^{(i)}$-norm. We start with the fact that $H^{(i)}$ and $\bar H^{(i)}$ are symmetric. Let $A = (H^{(i)})^{-1}\bar H^{(i)}$. $\langle A\mathbf{x}, \mathbf{y}\rangle_{H^{(i)}} = \left((H^{(i)})^{-1}\bar H^{(i)}\mathbf{x}\right)^\intercal H^{(i)} \mathbf{y} = \mathbf{x}^\intercal \bar H^{(i)} \mathbf{y} = \mathbf{x}^\intercal H^{(i)}\left((H^{(i)})^{-1}\bar H^{(i)}\mathbf{y}\right) = \langle \mathbf{x}, A\mathbf{y}\rangle_{H^{(i)}}$.
\end{fact}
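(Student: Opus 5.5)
The plan is to verify self-adjointness directly from the definition of the $H^{(i)}$-weighted inner product, $\langle \mathbf{x}, \mathbf{y}\rangle_{H^{(i)}} \triangleq \mathbf{x}^\intercal H^{(i)} \mathbf{y}$. Everything reduces to two structural facts: both matrices are symmetric, and $H^{(i)}$ is invertible. So I would first establish these, and only then carry out a one-line algebraic manipulation.

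\textbf{Step 1 (well-posedness).} I would show that $H^{(i)} = \int_0^1 F(\phi^{(i)}(\tau))\, d\tau$ is symmetric positive definite. By \Cref{prop:vanillaGradient}, each integrand $F(\phi^{(i)}(\tau))$ equals $\text{Cov}_{\rho_{\phi^{(i)}(\tau)}}[\psi(\mathbf{s},\mathbf{a})]$, which is symmetric. By \Cref{assumption:nonflat_curvature}, it is also positive definite. Integrating over $\tau \in [0,1]$ preserves both properties, since $\mathbf{x}^\intercal H^{(i)} \mathbf{x} = \int_0^1 \mathbf{x}^\intercal F(\phi^{(i)}(\tau))\mathbf{x}\, d\tau > 0$ for $\mathbf{x}\neq 0$. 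Hence $\langle\cdot,\cdot\rangle_{H^{(i)}}$ is a genuine inner product and $(H^{(i)})^{-1}$ exists and is symmetric.

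\textbf{Step 2 (symmetry of $\bar H^{(i)}$).} In \Cref{prop:direct-dual-opt}, $\bar H^{(i)}$ is an average of Hessians $\nabla^2_\theta \mathcal{G}$ along a segment. Each such Hessian is symmetric, because $\mathcal{G}$ is twice continuously differentiable along that segment (Schwarz's theorem). Averaging over the segment keeps the result symmetric.

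\textbf{Step 3 (the identity).} Set $A = (H^{(i)})^{-1}\bar H^{(i)}$. I would expand
\begin{align*}
\langle A\mathbf{x}, \mathbf{y}\rangle_{H^{(i)}} = \mathbf{x}^\intercal (\bar H^{(i)})^\intercal (H^{(i)})^{-\intercal} H^{(i)} \mathbf{y},
\end{align*}
then use the symmetries from Steps 1 and 2 to collapse the right-hand side to $\mathbf{x}^\intercal \bar H^{(i)} \mathbf{y}$. Next I would insert $H^{(i)}(H^{(i)})^{-1}$ in front of $\bar H^{(i)}\mathbf{y}$, which rewrites the expression as $\langle \mathbf{x}, A\mathbf{y}\rangle_{H^{(i)}}$.

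The only real obstacle is Step 2. It requires that $\mathcal{G}$ be $C^2$ along the segment from $\theta^\star$ to $\theta^{(i)}$, which the paper only implicitly assumes when it writes $\bar H^{(i)}$. I would state this regularity explicitly; it holds because, under \Cref{assumption:boltzmann_occupancies}, the dual is built from log-partition functions, which are smooth.
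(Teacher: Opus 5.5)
Your proposal is correct and is essentially the paper's argument. Like the paper, you start from the symmetry of $H^{(i)}$ and $\bar H^{(i)}$ and run the same chain $\langle A\mathbf{x},\mathbf{y}\rangle_{H^{(i)}} = \mathbf{x}^\intercal \bar H^{(i)}\mathbf{y} = \langle \mathbf{x}, A\mathbf{y}\rangle_{H^{(i)}}$; your Steps 1 and 2 simply make explicit what the paper asserts without comment, namely that $H^{(i)}$ is symmetric positive definite (so the $H^{(i)}$-inner product is genuine and $(H^{(i)})^{-1}$ exists) and that $\bar H^{(i)}$ is symmetric as an average of Hessians of a $C^2$ function.
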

\begin{proof}[\textbf{Proof of \Cref{prop:functional_update_contraction}}]
\begin{align}
    \text{\textit{From \Cref{prop:direct-dual-opt},}}& \text{\textit{ it can be seen that }} \nonumber \\
    \nabla_{\theta} \mathcal{G} &= \bar{H}^{(i)} \left( \theta^{(i)} - \theta^{\star} \right) \nonumber \\
    \text{\textit{From }} &\text{\textit{\Cref{lemma:natural_gradient,theorem:rewardMD}, }} \nonumber \\
    \theta^{(i+1)} &= \theta^{(i)} - \epsilon \beta \left(H^{(i)}\right)^{-1} \nabla_{\theta}\mathcal{G} \nonumber \\
    \text{\textit{Subtracting }} \theta^{\star} &\text{\textit{ from both sides, substituting }} \nabla_{\theta}\mathcal{G} \text{\textit{, and factoring out}} \nonumber \\
    \theta^{(i+1)} - \theta^{\star} &= \left( I -  \epsilon \beta \left(H^{(i)}\right)^{-1} \bar{H}^{(i)} \right) \left( \theta^{(i)} - \theta^{\star} \right) \nonumber \\
    \text{\textit{Taking norms in the }} &H^{(i)}\text{\textit{-norm }} \left\lVert x \right\rVert_{H^{(i)}} = \sqrt{x^\intercal H^{(i)} x} \nonumber \\
    \left\lVert \theta^{(i+1)} - \theta^\star \right\rVert_{H^{(i)}} &\leq \left\lVert I -  \epsilon \beta \left(H^{(i)}\right)^{-1} \bar{H}^{(i)} \right\rVert_{H^{(i)}} \left\lVert \theta^{(i)} - \theta^\star \right\rVert_{H^{(i)}} \nonumber \\
    \left( H^{(i)} \right)^{-1} \bar{H}^{(i)} &\text{\textit{ is self-adjoint under the }} H^{(i)}\text{\textit{-inner product (\Cref{fact:self-adjoint})}} \nonumber \\
    \text{\textit{Hence, }} \left( H^{(i)} \right)^{-1} \bar{H}^{(i)} &\text{\textit{ admits an eigendecomposition }} \left(H^{(i)}\right)^{-1}\bar H^{(i)} = Q\Lambda Q^{-1} \nonumber \\
    \text{where Q is orthogonal in the } H^{(i)}&\text{\textit{-inner product and }} \Lambda = \mathrm{diag}(\lambda_1, \dots, \lambda_d) \text{\textit{ has real positive entries. From this }} \nonumber \\
    \left\lVert I -  \epsilon \beta \left(H^{(i)}\right)^{-1} \bar{H}^{(i)} \right\rVert_{H^{(i)}} &= \left\lVert I - \epsilon\beta\Lambda \right\rVert_2 \nonumber \\
    \text{\textit{The operator norm of }} I - &\epsilon\beta\Lambda \text{\textit{ (diagonal) is the largest absolute entry}} \nonumber \\
    &=\max\left\{ \left\lvert 1 - \epsilon\beta\lambda_{\min} \left((H^{(i)})^{-1}\bar H^{(i)}\right) \right\rvert,\ \left\lvert 1 - \epsilon\beta\lambda_{\max}\left((H^{(i)})^{-1}\bar H^{(i)}\right) \right\rvert \right\} \nonumber \\
    \text{\textit{The best step size }} &\epsilon^{\star} \text{\textit{ balances both magnitudes}} \nonumber \\
    1 - \epsilon^\star\beta\lambda_{\min} \left(\left(H^{(i)}\right)^{-1}\bar H^{(i)}\right) &= -\left(1 - \epsilon^\star\beta\lambda_{\max} \left(\left(H^{(i)}\right)^{-1}\bar H^{(i)}\right)\right) \nonumber \\
    \text{\textit{Solving for }} &\epsilon^{\star} \text{\textit{ yields}} \nonumber \\
    \epsilon^\star &= \frac{2}{\beta\left(\lambda_{\min} \left(\left(H^{(i)}\right)^{-1}\bar H^{(i)}\right) + \lambda_{\max} \left( \left(H^{(i)}\right)^{-1}\bar H^{(i)}\right)\right)} \label{eq:epsilon_star} \\
    \left\lVert I - \epsilon^\star\beta\,\left(H^{(i)}\right)^{-1}\bar H^{(i)} \right\rVert_{H^{(i)}} &= \frac{\lambda_{\max} \left(\left(H^{(i)}\right)^{-1}\bar H^{(i)}\right) - \lambda_{\min} \left(\left(H^{(i)}\right)^{-1}\bar H^{(i)}\right)}{\lambda_{\max} \left(\left(H^{(i)}\right)^{-1}\bar H^{(i)}\right) + \lambda_{\min} \left(\left(H^{(i)}\right)^{-1}\bar H^{(i)}\right)} \nonumber \\
    \text{\textit{Condition number }} \kappa\left(\left(H^{(i)}\right)^{-1}\bar H^{(i)}\right) &= \nicefrac{\lambda_{\max}\left(\left(H^{(i)}\right)^{-1}\bar H^{(i)}\right)}{\lambda_{\min}\left(\left(H^{(i)}\right)^{-1}\bar H^{(i)}\right)} \text{\textit{. Hence,}} \nonumber \\
    \left\lVert I - \epsilon^\star\beta \left(H^{(i)}\right)^{-1} \bar H^{(i)} \right\rVert_{H^{(i)}} &= \frac{\kappa \left(\left(H^{(i)}\right)^{-1}\bar H^{(i)}\right) - 1}{\kappa \left(\left(H^{(i)}\right)^{-1}\bar H^{(i)}\right) + 1} \nonumber
\end{align}
Therefore, the best one-step contraction for the functional update is,
\begin{align*}
\left\lVert \theta^{(i+1)} - \theta^\star \right\rVert_{H^{(i)}} \leq \frac{\kappa \left(\left(H^{(i)}\right)^{-1}\bar H^{(i)}\right) - 1}{\kappa \left(\left(H^{(i)}\right)^{-1}\bar H^{(i)}\right) + 1} \left\lVert \theta^{(i)} - \theta^\star \right\rVert_{H^{(i)}}
\end{align*}
\end{proof}
\begin{corollary}[Extension of \Cref{prop:functional_update_contraction} to the Euclidean norm]
\label[corollary]{corollary:euclidean_contraction}
From \Cref{assumption:uniform_covariate_conditioning}, $\sqrt{m} \; \left\lVert \mathbf{x} \right\rVert_2 \leq \left\lVert \mathbf{x} \right\rVert_{H^{(i)}} \leq \sqrt{M} \; \left\lVert \mathbf{x} \right\rVert_2$ for all $\mathbf{x} \in X$. Applying the lower bound to $\theta^{(i+1)} - \theta^\star$ and the upper bound to $\theta^{(i)} - \theta^\star$,
\begin{align*}
    \sqrt{m} \left\lVert \theta^{(i+1)} - \theta^\star \right\rVert_2 \leq \left\lVert \theta^{(i+1)} - \theta^\star \right\rVert_{H^{(i)}} &\leq \frac{\kappa \left(\left(H^{(i)}\right)^{-1}\bar H^{(i)}\right) - 1}{\kappa \left(\left(H^{(i)}\right)^{-1}\bar H^{(i)}\right) + 1}\left\lVert \theta^{(i)} - \theta^\star \right\rVert_{H^{(i)}} \\
    &\leq \frac{\kappa \left(\left(H^{(i)}\right)^{-1}\bar H^{(i)}\right) - 1}{\kappa \left(\left(H^{(i)}\right)^{-1}\bar H^{(i)}\right) + 1} \sqrt{M} \; \left\lVert \theta^{(i)} - \theta^\star \right\rVert_2 \\
    \left\lVert \theta^{(i+1)} - \theta^\star \right\rVert_2 &\leq \sqrt{\frac{M}{m}} \; \frac{\kappa \left(\left(H^{(i)}\right)^{-1}\bar H^{(i)}\right) - 1}{\kappa \left(\left(H^{(i)}\right)^{-1}\bar H^{(i)}\right) + 1}\left\lVert \theta^{(i)} - \theta^\star\right\rVert_2
\end{align*}
\end{corollary}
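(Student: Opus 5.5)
The plan is to transfer the $H^{(i)}$-norm contraction of \Cref{prop:functional_update_contraction} to the Euclidean norm by a norm-equivalence sandwich, with constants supplied by \Cref{assumption:uniform_covariate_conditioning}. No new dynamics are needed: the corollary follows once $H^{(i)}$ is shown to be spectrally bounded.

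The first step is to establish $mI \preceq H^{(i)} \preceq MI$. By \Cref{lemma:natural_gradient} and \Cref{prop:vanillaGradient}, $H^{(i)} = \int_0^1 F(\phi^{(i)}(\tau))\,d\tau$ with $F(\phi^{(i)}(\tau)) = \mathrm{Cov}_{\rho_{\phi^{(i)}(\tau)}}[\psi(\mathbf{s},\mathbf{a})]$. \Cref{assumption:uniform_covariate_conditioning} gives $mI \preceq F(\phi^{(i)}(\tau)) \preceq MI$ for every $\tau \in [0,1]$. Fix any $\mathbf{x}$. Then $\mathbf{x}^\intercal H^{(i)} \mathbf{x} = \int_0^1 \mathbf{x}^\intercal F(\phi^{(i)}(\tau))\mathbf{x}\,d\tau$, and the integrand is pointwise in $[m\lVert \mathbf{x}\rVert_2^2, M\lVert \mathbf{x}\rVert_2^2]$. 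Integrating over an interval of length one preserves these bounds. Taking square roots gives
\[
\sqrt{m}\,\lVert \mathbf{x}\rVert_2 \le \lVert \mathbf{x}\rVert_{H^{(i)}} \le \sqrt{M}\,\lVert \mathbf{x}\rVert_2 .
\]

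The second step is to chain this with the contraction. Apply the lower bound to $\mathbf{x} = \theta^{(i+1)} - \theta^\star$, then \Cref{prop:functional_update_contraction} with step size $\epsilon^\star$, then the upper bound to $\mathbf{x} = \theta^{(i)} - \theta^\star$. Dividing by $\sqrt{m} > 0$ yields the claimed factor $\sqrt{M/m}\,\frac{\kappa - 1}{\kappa + 1}$, where $\kappa = \kappa\bigl((H^{(i)})^{-1}\bar H^{(i)}\bigr)$.

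I expect no real obstacle; the one point to check is the first step. \Cref{assumption:uniform_covariate_conditioning} is stated pointwise along the e-geodesic $\phi^{(i)}(\tau)$, so one must confirm that this is exactly the path over which $H^{(i)}$ integrates. It is, since both use $\phi^{(i)}(\tau) = \hat{\phi} + \tau(\phi_{\rho^{(i)}} - \hat{\phi})$. One must also note that the Loewner bounds pass through the integral by linearity and monotonicity of the quadratic form.
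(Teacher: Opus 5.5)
Your proposal is correct and follows the paper's argument: the norm-equivalence sandwich $\sqrt{m}\,\lVert \mathbf{x}\rVert_2 \le \lVert \mathbf{x}\rVert_{H^{(i)}} \le \sqrt{M}\,\lVert \mathbf{x}\rVert_2$, applied as a lower bound on $\theta^{(i+1)}-\theta^\star$ and an upper bound on $\theta^{(i)}-\theta^\star$ around the $H^{(i)}$-norm contraction, followed by division by $\sqrt{m}$. The only addition is that you explicitly derive $mI \preceq H^{(i)} \preceq MI$ by integrating the pointwise Loewner bounds of the curvature assumption along the e-geodesic $\phi^{(i)}(\tau)$, which the paper simply asserts as following from that assumption.
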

\begin{corollary}[Accounting for the changing norm in \Cref{prop:functional_update_contraction}]
\label[corollary]{corollary:norm_change}
Notice that the norm in \Cref{prop:functional_update_contraction} is iteration-dependent. Below we quantify the worst-case scaling factor of the updated reward parameter's error due to the change in the norm between iterations.
\begin{align*}
\left\lVert \theta^{(i+1)} - \theta^\star \right\rVert_{H^{(i+1)}} \leq \sqrt{\lambda_{\text{max}}\left( \left( H^{(i)} \right)^{-1} H^{(i+1)} \right)} \frac{\kappa \left(\left(H^{(i)}\right)^{-1}\bar H^{(i)}\right) - 1}{\kappa \left(\left(H^{(i)}\right)^{-1}\bar H^{(i)}\right) + 1} \left\lVert \theta^{(i)} - \theta^\star \right\rVert_{H^{(i)}}
\end{align*}
\end{corollary}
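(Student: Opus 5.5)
The plan is to reduce the statement to a comparison of the two quadratic forms $\|\cdot\|_{H^{(i+1)}}$ and $\|\cdot\|_{H^{(i)}}$ and then invoke \Cref{prop:functional_update_contraction}. Write $x \triangleq \theta^{(i+1)} - \theta^\star$. First I will show the generalized Rayleigh-quotient bound
\begin{align*}
\|x\|_{H^{(i+1)}}^2 = x^\intercal H^{(i+1)} x \leq \lambda_{\max}\!\left((H^{(i)})^{-1}H^{(i+1)}\right) x^\intercal H^{(i)} x = \lambda_{\max}\!\left((H^{(i)})^{-1}H^{(i+1)}\right)\|x\|_{H^{(i)}}^2 .
\end{align*}
Both $H^{(i)}$ and $H^{(i+1)}$ are symmetric positive definite, since they are integrals of covariance matrices that are positive definite by \Cref{assumption:nonflat_curvature}.

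To justify this bound I will substitute $y = (H^{(i)})^{1/2}x$. The ratio $x^\intercal H^{(i+1)}x / x^\intercal H^{(i)}x$ then becomes the ordinary Rayleigh quotient of the symmetric matrix $(H^{(i)})^{-1/2}H^{(i+1)}(H^{(i)})^{-1/2}$, which is bounded by that matrix's largest eigenvalue. This matrix is similar to $(H^{(i)})^{-1}H^{(i+1)}$, so the two have the same spectrum. That spectrum is real and positive, which is the same self-adjointness argument used in \Cref{fact:self-adjoint}, now with $\bar H^{(i)}$ replaced by $H^{(i+1)}$. Consequently $\lambda_{\max}$ is well defined and its square root is real.

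Next I take square roots, so that $\|x\|_{H^{(i+1)}} \leq \sqrt{\lambda_{\max}((H^{(i)})^{-1}H^{(i+1)})}\,\|x\|_{H^{(i)}}$. I then apply the contraction of \Cref{prop:functional_update_contraction} with the optimal step size $\epsilon^\star$ to bound $\|x\|_{H^{(i)}}$ by the factor $\frac{\kappa-1}{\kappa+1}\|\theta^{(i)}-\theta^\star\|_{H^{(i)}}$, where $\kappa = \kappa((H^{(i)})^{-1}\bar H^{(i)})$. Chaining the two inequalities gives the claim.

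I do not expect a real obstacle. The only point needing care is the similarity and symmetrization step, which ensures $\lambda_{\max}$ of the non-symmetric product is the right constant and is positive. Optionally, I would remark that \Cref{lemma:H_bound} together with \Cref{assumption:uniform_covariate_conditioning} gives $\lambda_{\max}((H^{(i)})^{-1}H^{(i+1)}) \leq 1 + \|H^{(i+1)}-H^{(i)}\|_2/m$. This shows the scaling factor stays close to $1$, but it is not needed for the statement itself.
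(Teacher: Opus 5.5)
Your proof is correct and follows the same route as the paper: bound $\|x\|_{H^{(i+1)}}^2 \le \lambda_{\max}\big((H^{(i)})^{-1}H^{(i+1)}\big)\,\|x\|_{H^{(i)}}^2$ for $x=\theta^{(i+1)}-\theta^\star$, take square roots, and chain with \Cref{prop:functional_update_contraction}. The only difference is in justifying the Rayleigh-quotient bound: the paper sets the gradient of the ratio to zero to get the generalized eigenproblem $H^{(i+1)}x=\lambda H^{(i)}x$, whereas you symmetrize with $(H^{(i)})^{1/2}$ and use similarity. Your version avoids the paper's implicit assumption that the maximum is attained at a stationary point, and it makes explicit that the spectrum is real and positive.
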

\begin{proof}
For an arbitrary vector $\mathbf{x} \in X$, the worst-case scaling factor due to the norm change is $c= \max_{\mathbf{x}} \frac{\left\lVert \mathbf{x} \right\rVert_{H^{(i+1)}}^2}{\left\lVert \mathbf{x} \right\rVert_{H^{(i)}}^2}$. Such that $\mathbf{x}$, $\left\lVert \mathbf{x} \right\rVert_{H^{(i+1)}}^2 \leq c \; \left\lVert \mathbf{x} \right\rVert_{H^{(i)}}^2$
\begin{align*}
c &=\max_{\mathbf{x}} \frac{\mathbf{x}^\intercal H^{(i+1)} \mathbf{x}}{\mathbf{x}^\intercal H^{(i)} \mathbf{x}} \\
\text{\textit{Setting the}} &\text{\textit{ gradient of this to zero, }} \\
H^{(i+1)}\mathbf{x}(\mathbf{x}^\intercal H^{(i)}\mathbf{x}) &= H^{(i)}\mathbf{x}(\mathbf{x}^\intercal H^{(i+1)}\mathbf{x}) \\
\left( H^{(i)} \right)^{-1} H^{(i+1)} \mathbf{x} &= \frac{\mathbf{x}^\intercal H^{(i+1)} \mathbf{x}}{\mathbf{x}^\intercal H^{(i)} \mathbf{x}} \mathbf{x} \\
\text{\textit{Setting }} A = \left( H^{(i)} \right)^{-1} H^{(i+1)} &\text{\textit{ and }} \frac{\mathbf{x}^\intercal H^{(i+1)} \mathbf{x}}{\mathbf{x}^\intercal H^{(i)} \mathbf{x}} = \lambda \text{\textit{, we see that }} A \mathbf{x} = \lambda \mathbf{x}. \\
\lambda \text{\textit{ is an eigenvalue of }}& (H^{(i)})^{-1} H^{(i+1)} \\
\text{\textit{The worst-case scaling factor for the unsquared norm is hence }} &\sqrt{\lambda_{\text{max}}\left( \left( H^{(i)} \right)^{-1} H^{(i+1)} \right)}
\end{align*}
Setting $\mathbf{x} = \theta^{(i+1)} - \theta^\star$,
\begin{align*}
\left\lVert \theta^{(i+1)} - \theta^\star \right\rVert_{H^{(i+1)}} &\leq \sqrt{\lambda_{\text{max}}\left( \left( H^{(i)} \right)^{-1} H^{(i+1)} \right)} \left\lVert \theta^{(i+1)} - \theta^\star \right\rVert_{H^{(i)}} \\
&\leq \sqrt{\lambda_{\text{max}}\left( \left( H^{(i)} \right)^{-1} H^{(i+1)} \right)} \frac{\kappa \left(\left(H^{(i)}\right)^{-1}\bar H^{(i)}\right) - 1}{\kappa \left(\left(H^{(i)}\right)^{-1}\bar H^{(i)}\right) + 1} \left\lVert \theta^{(i)} - \theta^\star \right\rVert_{H^{(i)}}
\end{align*}
\end{proof}
\begin{proof}[\textbf{Proof of \Cref{prop:vanilla_gradient_contraction}}]
\begin{align}
    \text{\textit{From \Cref{sec:reward_analysis}}}& \text{\textit{, the vanilla gradient step is}} \nonumber \\
    \theta^{(i+1)} &= \theta^{(i)} - \alpha I \nabla_{\theta}\mathcal{G} \nonumber \\
    \text{\textit{We use the definition of }} \nabla_{\theta}& \mathcal{G} \text{\textit{ from the proof of \Cref{prop:functional_update_contraction}}} \nonumber \\
    \text{\textit{Subtracting }} \theta^{\star} \text{\textit{ from both sides}}& \text{\textit{, substituting }} \nabla_{\theta}\mathcal{G} \text{\textit{, and factoring out}} \nonumber \\
    \theta^{(i+1)} - \theta^{\star} &= \left( I -  \alpha \bar{H}^{(i)} \right) \left( \theta^{(i)} - \theta^{\star} \right) \nonumber \\
    &\text{\textit{Taking norms }} \nonumber \\
    \left\lVert \theta^{(i+1)} - \theta^{\star} \right\rVert_2 &\leq \left\lVert I -  \alpha \bar{H}^{(i)} \right\rVert_2 \left\lVert \theta^{(i)} - \theta^{\star} \right\rVert_2 \nonumber \\
    \text{\textit{Since }} \bar{H}^{(i)} \text{\textit{ is symmetric positive definite, }} &\text{\textit{it admits an eigendecomposition }} \bar{H}^{(i)} = Q \Lambda Q^\intercal \nonumber \\ 
    \text{\textit{ where }} Q \text{\textit{ is orthogonal, }} \Lambda = \mathrm{diag}(\lambda_1,\dots,&\lambda_d) \text{\textit{. From orthogonal invariance}} \nonumber \\
    \left\lVert I -  \alpha \bar{H}^{(i)} \right\rVert_2 &= \left\lVert I -  \alpha Q \Lambda Q^\intercal \right\rVert_2 \nonumber \\
    &= \left\lVert Q \left( I - \alpha \Lambda \right) Q^\intercal \right\rVert_2 \nonumber \\
    &= \left\lVert I - \alpha \Lambda \right\rVert_2 \nonumber \\
    \text{\textit{The operator norm of }} I - &\alpha \Lambda \text{\textit{ (diagonal) is the largest absolute entry}} \nonumber \\
    &= \max \left\{ \left\lvert 1 - \alpha \lambda_{\min}\!\left(\bar{H}^{(i)}\right) \right\rvert, \left\lvert 1 - \alpha \lambda_{\max}\!\left(\bar{H}^{(i)}\right) \right\rvert \right\} \nonumber \\
    \text{\textit{The best step size }} &\alpha^{\star} \text{\textit{ balances both magnitudes}} \nonumber \\
    1 - \alpha^{\star} \lambda_{\min}\!\left(\bar{H}^{(i)}\right) &= - \left( 1 - \alpha^{\star} \lambda_{\max}\!\left(\bar{H}^{(i)}\right) \right) \nonumber \\
    \text{\textit{Solving for }} &\alpha^{\star} \text{\textit{ yields}} \nonumber \\
    \alpha^{\star} &= \frac{2}{\lambda_{\min}\!\left(\bar{H}^{(i)}\right) + \lambda_{\max}\!\left(\bar{H}^{(i)}\right)} \label{eq:alpha_star} \\
    \left\lVert I -  \alpha^{\star} \bar{H}^{(i)} \right\rVert_2 &= \frac{\lambda_{\max}\!\left(\bar{H}^{(i)}\right) - \lambda_{\min}\!\left(\bar{H}^{(i)}\right)}{\lambda_{\max}\!\left(\bar{H}^{(i)}\right) + \lambda_{\min}\!\left(\bar{H}^{(i)}\right)} \nonumber \\
    \text{\textit{Condition number }} \kappa\!\left(\bar{H}^{(i)}\right) &= \frac{\lambda_{\max}\!\left(\bar{H}^{(i)}\right)}{\lambda_{\min}\!\left(\bar{H}^{(i)}\right)} \text{\textit{. Hence, }} \nonumber \\
    \left\lVert I -  \alpha^{\star} \bar{H}^{(i)} \right\rVert_2 &= \frac{\kappa\!\left(\bar{H}^{(i)}\right) - 1}{\kappa\!\left(\bar{H}^{(i)}\right) + 1} \nonumber
\end{align}
Therefore, the best one-step contraction under identity preconditioning is,
\begin{align*}
    \left\lVert \theta^{(i+1)} - \theta^{\star} \right\rVert_2 &\leq \frac{\kappa\!\left(\bar{H}^{(i)}\right) - 1}{\kappa\!\left(\bar{H}^{(i)}\right) + 1} \left\lVert \theta^{(i)} - \theta^{\star} \right\rVert_2
\end{align*}
\end{proof}
\subsection{An Alternate Proof of First-Order Contraction}
Here, we provide an alternate proof of first-order contraction that leverages additional assumptions to provide a contraction rate in the Euclidean norm for any arbitrary step-size.
\begin{assumption}[Lipschitz Dual Hessian]
\label[assumption]{assumption:lipschitz_hessian}
$\nabla^2_{\theta} \mathcal{G}$ is locally $L_{\text{hess}_\mathcal{G}}$-Lipschitz Continuous in the Euclidean norm (operator norm) in the set $\Theta$
\begin{align*}
    \left\lVert \nabla^2_{\theta} \mathcal{G}(\theta) - \nabla^2_{\theta} \mathcal{G}(\theta') \right\rVert_2 \leq L_{\text{hess}_\mathcal{G}} \left\lVert \theta - \theta' \right\rVert_2 \quad \forall \; \theta, \theta' \in \Theta
\end{align*}
\end{assumption}
\begin{assumption}[Lipschitz Occupancy Parameter]
\label[assumption]{assumption:lipschitz_occupancy_parameter}
$\phi_{\rho}$ is locally $L_{\phi}$-Lipschitz Continuous in the Euclidean norm in the set $\Theta$, where $\rho$ and $\rho'$ are the occupancies under $\theta$ and $\theta'$
\begin{align*}
    \left\lVert \phi_{\rho} - \phi_{\rho'} \right\rVert_2 \leq L_{\phi} \left\lVert \theta - \theta' \right\rVert_2 \quad \forall \; \theta, \theta' \in \Theta
\end{align*}
\end{assumption}
\begin{proposition}[First Order Accuracy]
\label[proposition]{prop:first-ord-accurate}
The functional update step is first order accurate w.r.t the direct optimisation step, where $L_{\text{hess}_\mathcal{G}}$, $L_{F}$, and $ L_{\phi}$ are introduced in \Cref{assumption:lipschitz_hessian,assumption:lipschitz_FIM,assumption:lipschitz_occupancy_parameter}
\begin{align*}
    &\left\lVert \theta^{(i+1)} - \theta^{\star} \right\rVert_2 \\
    & \leq \frac{\epsilon \beta}{m} \left[ \left \lVert \frac{1}{\epsilon \beta}F(\phi_{\rho^{\star}}) - \nabla^2_{\theta} \mathcal{G}(\theta^{\star}) \right\rVert_2 + \left( \frac{L_{\text{hess}_\mathcal{G}}}{2} + \frac{L_{F}}{\epsilon \beta} \max \{ L_{\phi} , \frac{1}{\beta} \} \right) \left\lVert \theta^{(i)} - \theta^{\star} \right\rVert_2 \right] \left\lVert \theta^{(i)} - \theta^{\star} \right\rVert_2
\end{align*}
\end{proposition}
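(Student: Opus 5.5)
Starting from the error recursion in the proof of \Cref{prop:functional_update_contraction}, I write
\begin{align*}
\theta^{(i+1)} - \theta^{\star} = \left(H^{(i)}\right)^{-1}\left( H^{(i)} - \epsilon\beta \bar{H}^{(i)} \right)\left(\theta^{(i)} - \theta^{\star}\right),
\end{align*}
using $\nabla_\theta \mathcal{G} = \bar H^{(i)}(\theta^{(i)}-\theta^\star)$ from \Cref{prop:direct-dual-opt} and $\delta^{(i)} = \beta (H^{(i)})^{-1}\nabla_\theta\mathcal{G}$ from \Cref{lemma:natural_gradient}. Taking Euclidean norms, \Cref{assumption:uniform_covariate_conditioning} gives $H^{(i)} \succeq mI$ (an average of matrices each $\succeq mI$). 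Hence $\|(H^{(i)})^{-1}\|_2 \le 1/m$, which yields
\begin{align*}
\|\theta^{(i+1)}-\theta^\star\|_2 \le \frac{\epsilon\beta}{m}\left\lVert \tfrac{1}{\epsilon\beta}H^{(i)} - \bar H^{(i)}\right\rVert_2 \|\theta^{(i)}-\theta^\star\|_2 .
\end{align*}
The remaining work is to bound the middle operator norm by a constant plus a term linear in $\|\theta^{(i)}-\theta^\star\|_2$.

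I insert the reference quantities at the optimum and apply the triangle inequality:
\begin{align*}
\left\lVert \tfrac{1}{\epsilon\beta}H^{(i)} - \bar H^{(i)}\right\rVert_2 \le \left\lVert \tfrac{1}{\epsilon\beta}F(\phi_{\rho^\star}) - \nabla^2_\theta\mathcal{G}(\theta^\star)\right\rVert_2 + \left\lVert \bar H^{(i)} - \nabla^2_\theta\mathcal{G}(\theta^\star)\right\rVert_2 + \tfrac{1}{\epsilon\beta}\left\lVert H^{(i)} - F(\phi_{\rho^\star})\right\rVert_2 .
\end{align*}
For the Hessian term, I move the norm inside the integral and apply \Cref{assumption:lipschitz_hessian}. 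Along $\theta^{(i)}(\tau)$ we have $\|\theta^{(i)}(\tau)-\theta^\star\|_2 = \tau\|\theta^{(i)}-\theta^\star\|_2$, and $\int_0^1 \tau\, d\tau = 1/2$. This produces the $\tfrac{L_{\text{hess}_\mathcal{G}}}{2}\|\theta^{(i)}-\theta^\star\|_2$ contribution.

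For the Fisher term, \Cref{assumption:lipschitz_FIM} gives
\begin{align*}
\|H^{(i)} - F(\phi_{\rho^\star})\|_2 \le L_F \sup_{\tau}\|\phi^{(i)}(\tau) - \phi_{\rho^\star}\|_2 .
\end{align*}
Since $\phi^{(i)}(\tau)$ is a convex combination of $\phi_{\rho^{(i)}}$ and $\hat\phi$, the triangle inequality bounds this by $\max\{\|\phi_{\rho^{(i)}}-\phi_{\rho^\star}\|_2, \|\hat\phi - \phi_{\rho^\star}\|_2\}$. I then control each endpoint separately. The first endpoint is at most $L_\phi\|\theta^{(i)}-\theta^\star\|_2$ by \Cref{assumption:lipschitz_occupancy_parameter}. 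For the second, I use that at the optimum the target coincides with the agent occupancy. Indeed $r^\star=\beta\log(\rho_E/\rho_{\pi^\star})$ gives $\phi_{\rho^\star} = \phi_E - \theta^\star/\beta$, so $\hat\phi - \phi_{\rho^\star} = -(\theta^{(i)}-\theta^\star)/\beta$, with norm $\tfrac{1}{\beta}\|\theta^{(i)}-\theta^\star\|_2$. Together the two endpoints give the $\tfrac{L_F}{\epsilon\beta}\max\{L_\phi,1/\beta\}$ factor, and collecting terms yields the stated bound.

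The main obstacle is the Fisher-path step. Two points need care there. The first is the identification $\phi_{\rho^\star}=\phi_E-\theta^\star/\beta$, which holds only up to the additive constants in the exponential family and requires the optimum to be a fixed point of the $\hat\phi$ map. The second is the convexity argument giving the max rather than the sum of the two endpoint distances. I would verify both first; if the fixed-point identity proved delicate, I would fall back to a sum bound with a slightly worse constant.
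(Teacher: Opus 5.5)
Your proposal is correct and follows the paper's proof essentially step for step. It uses the same factorization $\theta^{(i+1)}-\theta^\star=\epsilon\beta\,(H^{(i)})^{-1}\bigl(\tfrac{1}{\epsilon\beta}H^{(i)}-\bar H^{(i)}\bigr)(\theta^{(i)}-\theta^\star)$ with $\|(H^{(i)})^{-1}\|_2\le 1/m$, the same add-and-subtract of $\tfrac{1}{\epsilon\beta}F(\phi_{\rho^\star})$ and $\nabla^2_\theta\mathcal{G}(\theta^\star)$, the $\tau$-integral giving $L_{\text{hess}_\mathcal{G}}/2$, and the endpoint-max bound along the e-geodesic. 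Your explicit identification $\phi_{\rho^\star}=\phi_E-\theta^\star/\beta$ is exactly what the paper uses implicitly when it invokes $\hat\phi=\phi_E-\theta/\beta$. The additive constants cause no trouble there, because the positive-definite feature covariance makes the family minimal.
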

\begin{proof}[Proof]
\begin{align*}
    \text{\textit{From \Cref{prop:direct-dual-opt},}}& \text{\textit{ it can be seen that }} \\
    \nabla_{\theta} \mathcal{G} &= \bar{H}^{(i)} \left( \theta^{(i)} - \theta^{\star} \right) \\
    \text{\textit{From }} &\text{\textit{\Cref{lemma:natural_gradient,theorem:rewardMD}, }} \\
    \theta^{(i+1)} &= \theta^{(i)} - \epsilon \beta \left(H^{(i)}\right)^{-1} \nabla_{\theta}\mathcal{G} \\
    \text{\textit{Subtracting }} \theta^{\star} &\text{\textit{ from both sides, substituting }} \nabla_{\theta}\mathcal{G} \text{\textit{, and factoring out}} \\
    \theta^{(i+1)} - \theta^{\star} &= \left( I -  \epsilon \beta \left(H^{(i)}\right)^{-1} \bar{H}^{(i)} \right) \left( \theta^{(i)} - \theta^{\star} \right) \\
    &= \epsilon \beta \left(H^{(i)}\right)^{-1} \left( \frac{1}{\epsilon \beta}H^{(i)} - \bar{H}^{(i)} \right) \left( \theta^{(i)} - \theta^{\star} \right) \\
    &\text{\textit{Taking norms }} \\
    \left\lVert \theta^{(i+1)} - \theta^{\star} \right\rVert_2 &\leq \epsilon \beta \left\lVert \left(H^{(i)}\right)^{-1} \right\rVert_2 \left\lVert \frac{1}{\epsilon \beta}H^{(i)} - \bar{H}^{(i)} \right\rVert_2 \left\lVert \theta^{(i)} - \theta^{\star} \right\rVert_2 \\
    \text{\textit{From }}& \text{\textit{\Cref{assumption:uniform_covariate_conditioning}, }} \left( H^{(i)} \right)^{-1} I \preceq \frac{I}{m} \\
    \left\lVert \theta^{(i+1)} - \theta^{\star} \right\rVert_2 &\leq \frac{\epsilon \beta}{m} \left\lVert \frac{1}{\epsilon \beta}H^{(i)} - \bar{H}^{(i)} \right\rVert_2 \left\lVert \theta^{(i)} - \theta^{\star} \right\rVert_2 \\
\end{align*}
We now further improve this by bounding $ \left\lVert \frac{1}{\epsilon \beta}H^{(i)} - \bar{H}^{(i)} \right\rVert_2$ using Lipschitzness assumptions (\Cref{assumption:lipschitz_FIM,assumption:lipschitz_hessian,assumption:lipschitz_occupancy_parameter}). Recall the definitions $H^{(i)} \triangleq \int_0^1 F(\phi^{(i)}(\tau)) d\tau$ and $\bar{H}^{(i)} = \int_0^1 \nabla^2_{\theta} \mathcal{G}(\theta^{(i)}(\tau)) d\tau$ from \Cref{lemma:natural_gradient,prop:direct-dual-opt}. From this, it is straightforward that $H^{(i)}$ tends towards $F(\phi_{\rho^{\star}})$ as $\phi_{\rho^{(i)}}$ tends to $\phi_{\rho^{\star}}$ and $\bar{H}^{(i)}$ tends towards $\nabla^2_{\theta} \mathcal{G}(\theta^{\star})$ as $\theta^{(i)}$ tends towards $\theta^{\star}$ (naturally, at optimality $\phi_{\rho^{\star}}$ and $\theta^{\star}$ are quantities at the same optimal solution).
\begin{align*}
    &\text{\textit{Add and subtract }} \nabla^2_{\theta} \mathcal{G}(\theta^{\star}) \text{\textit{ and }} \nicefrac{1}{\epsilon \beta}F(\phi_{\rho^{\star}}) \\
    \frac{1}{\epsilon \beta}H^{(i)} - \bar{H}^{(i)} &= \left( \frac{1}{\epsilon \beta}H^{(i)} - \frac{1}{\epsilon \beta}F(\phi_{\rho^{\star}}) \right) + \left( \frac{1}{\epsilon \beta}F(\phi_{\rho^{\star}}) - \nabla^2_{\theta} \mathcal{G}(\theta^{\star}) \right) + \left( \nabla^2_{\theta} \mathcal{G}(\theta^{\star}) - \bar{H}^{(i)} \right) \\
    &\text{\textit{Triangle inequality}} \\
    \left \lVert \frac{1}{\epsilon \beta}H^{(i)} - \bar{H}^{(i)} \right\rVert_2 &\leq \left \lVert \frac{1}{\epsilon \beta}H^{(i)} - \frac{1}{\epsilon \beta}F(\phi_{\rho^{\star}}) \right\rVert_2 + \left \lVert \frac{1}{\epsilon \beta}F(\phi_{\rho^{\star}}) - \nabla^2_{\theta} \mathcal{G}(\theta^{\star}) \right\rVert_2 + \left \lVert \nabla^2_{\theta} \mathcal{G}(\theta^{\star}) - \bar{H}^{(i)} \right\rVert_2 \\
    &\text{\textit{Expand line integrals and triangle inequality}} \\
    &\leq \frac{1}{\epsilon \beta} \overset{1}{\underset{0}{\int}} \left\lVert F(\phi^{(i)}(\tau)) - F(\phi_{\rho^{\star}}) \right\rVert_2 d\tau + \left \lVert \frac{1}{\epsilon \beta}F(\phi_{\rho^{\star}}) - \nabla^2_{\theta} \mathcal{G}(\theta^{\star}) \right\rVert_2 +\\& \overset{1}{\underset{0}{\int}} \left\lVert \nabla^2_{\theta} \mathcal{G}( \theta^{\star} + \tau(\theta^{(i)} - \theta^{\star})) - \nabla^2_{\theta} \mathcal{G}(\theta^{\star}) \right\rVert_2 d\tau  \\
    &\leq \frac{1}{\epsilon \beta} \overset{1}{\underset{0}{\int}} L_{F} \left\lVert \phi^{(i)}(\tau) - \phi_{\rho^{\star}} \right\rVert_2 d\tau + \left\lVert \frac{1}{\epsilon \beta}F(\phi_{\rho^{\star}}) - \nabla^2_{\theta} \mathcal{G}(\theta^{\star}) \right\rVert_2 + \overset{1}{\underset{0}{\int}} L_{\text{hess}_\mathcal{G}} \tau \left\lVert \theta^{(i)} - \theta^{\star} \right\rVert_2 d\tau \\
    &\text{\textit{Points along a line are at max as far from another point as the end points of the line}} \\
    &\leq \frac{1}{\epsilon \beta} L_{F} \max \{ \left\lVert \phi_{\rho^{(i)}} - \phi_{\rho^{\star}} \right\rVert_2 , \left\lVert \phi_{\hat{\rho}} - \phi_{\rho^{\star}} \right\rVert_2 \} + \left \lVert \frac{1}{\epsilon \beta}F(\phi_{\rho^{\star}}) - \nabla^2_{\theta} \mathcal{G}(\theta^{\star}) \right\rVert_2 + \frac{L_{\text{hess}_\mathcal{G}}}{2} \left\lVert \theta^{(i)} - \theta^{\star} \right\rVert_2 \\
    &\text{\textit{Lipschitzness assumption and using }} \hat{\phi} = \phi_E - \frac{\theta}{\beta}  \\
    &\leq \frac{L_{F}}{\epsilon \beta} \max \{ L_{\phi} , \frac{1}{\beta} \} \left\lVert \theta^{(i)} - \theta^{\star} \right\rVert_2 + \left \lVert \frac{1}{\epsilon \beta}F(\phi_{\rho^{\star}}) - \nabla^2_{\theta} \mathcal{G}(\theta^{\star}) \right\rVert_2 + \frac{L_{\text{hess}_\mathcal{G}}}{2} \left\lVert \theta^{(i)} - \theta^{\star} \right\rVert_2 \\
    &= \left \lVert \frac{1}{\epsilon \beta}F(\phi_{\rho^{\star}}) - \nabla^2_{\theta} \mathcal{G}(\theta^{\star}) \right\rVert_2 + \left( \frac{L_{\text{hess}_\mathcal{G}}}{2} + \frac{L_{F}}{\epsilon \beta} \max \{ L_{\phi} , \frac{1}{\beta} \} \right) \left\lVert \theta^{(i)} - \theta^{\star} \right\rVert_2
\end{align*}
Combining this bound with our earlier result
\begin{align*}
    \left\lVert \theta^{(i+1)} - \theta^{\star} \right\rVert_2 &\leq \frac{\epsilon \beta}{m} \left[ \left \lVert \frac{1}{\epsilon \beta}F(\phi_{\rho^{\star}}) - \nabla^2_{\theta} \mathcal{G}(\theta^{\star}) \right\rVert_2 + \left( \frac{L_{\text{hess}_\mathcal{G}}}{2} + \frac{L_{F}}{\epsilon \beta} \max \{ L_{\phi} , \frac{1}{\beta} \} \right) \left\lVert \theta^{(i)} - \theta^{\star} \right\rVert_2 \right] \left\lVert \theta^{(i)} - \theta^{\star} \right\rVert_2
\end{align*}
\end{proof}
We now slightly extend \Cref{prop:first-ord-accurate} under an additional assumption on the curvature at optimality. 
\begin{assumption}[Equal Curvature at Optimality]
\label[assumption]{assumption:local_phi_invariance}
The natural parameter of the occupancy is first-order insensitive to perturbations of the reward parameter, at the optimal reward parameter $\theta^{\star}$. Such that $\left. \nabla_{\theta}  \phi_{\rho} \right|_{\theta^{\star}} = 0$.
\end{assumption}
We show that under \Cref{assumption:local_phi_invariance}, $\nabla^2_{\theta} \mathcal{G(\theta^{\star})} = \frac{1}{\beta}F(\phi_{\rho^\star})$
\begin{align*}
    \text{\textit{From }} &\text{\textit{\Cref{prop:vanillaGradient}}} \\
    \nabla_{\theta} \mathcal{G} &= \mu(\phi_{\rho^{(i)}}) - \mu(\hat{\phi}) \\
    \text{\textit{Differentiate w.r.t }} & \theta \\
    \nabla_{\theta}^2 \mathcal{G} &= F(\phi_{\rho^{(i)}}) \nabla_{\theta} \phi_{\rho^{(i)}} - F(\hat{\phi}) \nabla_{\theta} \hat{\phi} \\
    \text{\textit{From \Cref{assumption:boltzmann_occupancies}, }} & \hat{\phi} = \phi_{E} - \left( \nicefrac{\theta^{(i)}}{\beta} \right) \\
    \text{\textit{Evaluate at }}& \theta^{\star} \\
    \nabla_{\theta}^2 \mathcal{G}(\theta^{\star}) &= F(\phi_{\rho^{\star}}) \left. \nabla_{\theta} \phi_{\rho} \right|_{\theta^{\star}} + \frac{1}{\beta} F(\phi_{\rho^\star}) \\
    \text{\textit{Under}} &\text{\textit{ \Cref{assumption:local_phi_invariance}}} \\
    &= \frac{1}{\beta} F(\phi_{\rho^\star})
\end{align*}
As $\theta^{(i)} \rightarrow \theta^{\star}$, $\left(H^{(i)}\right)^{-1} \Bar{H}^{(i)} \rightarrow \nicefrac{I}{\beta}$ and the optimal step size of \Cref{prop:functional_update_contraction} $\epsilon^{\star} \rightarrow 1$. Setting $\epsilon = 1$, the first term of \Cref{prop:first-ord-accurate} vanishes. Hence, 
\begin{align*}
    \left\lVert \theta^{(i+1)} - \theta^{\star} \right\rVert_2 &\leq \frac{\epsilon \beta}{m} \left( \frac{L_{\text{hess}_\mathcal{G}}}{2} + \frac{L_{F}}{\epsilon \beta} \max \{ L_{\phi} , \frac{1}{\beta} \} \right) \left\lVert \theta^{(i)} - \theta^{\star} \right\rVert_2^2
\end{align*}
which implies that the geometry-aware update is second order accurate w.r.t the direct optimization step.
\subsection{Connection to Pseudo-Gradient Optimization in Euclidean Geometry}
\label{sec:connection_to_pseudo_gradient}
Here we show that $\delta^{(i)}$ has upper bounded norm and lower bound inner product with the gradient of the dual, implying that the functional reward update can be interpreted as a pseudo-gradient method in Euclidean geometry,
\begin{lemma}[$ \delta^{(i)}$ Properties]
\label[lemma]{lemma:pseudograd}
Let $r^{(i+1)} = \left( \updateOp{i} \right) r^{(i)} =  r^{(i)} - \epsilon \delta^{(i)}$ be a reward update where the update direction $\delta^{(i)}$ is an ascent direction with respect to the dual of \Cref{eq:IRLDM}, then:
\begin{align*}
  \left\lVert \delta^{(i)} \right\rVert &\leq \frac{\beta}{m} \left\lVert \nabla_{\theta} \mathcal{G} \right\rVert \\
  \left< \nabla_{\theta} \mathcal{G} , \delta^{(i)} \right> &\geq \frac{\beta}{M} \left\lVert \nabla_{\theta} \mathcal{G} \right\rVert^2
\end{align*}
\end{lemma}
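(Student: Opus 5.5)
The plan is to work entirely from the closed form of \Cref{lemma:natural_gradient}, $\delta^{(i)} = \beta (H^{(i)})^{-1}\nabla_{\theta}\mathcal{G}$ with $H^{(i)} = \int_0^1 F(\phi^{(i)}(\tau))\,d\tau$. Both claimed inequalities then reduce to spectral bounds on $(H^{(i)})^{-1}$. These bounds follow from \Cref{assumption:uniform_covariate_conditioning}. Since $mI \preceq F(\phi^{(i)}(\tau)) \preceq MI$ for every $\tau \in [0,1]$, and the Loewner order is preserved under integration against the probability measure $d\tau$ on $[0,1]$, I first conclude that $mI \preceq H^{(i)} \preceq MI$. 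As $H^{(i)}$ is symmetric positive definite, operator inversion is order-reversing on SPD matrices, which gives $\tfrac{1}{M} I \preceq (H^{(i)})^{-1} \preceq \tfrac{1}{m} I$.

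For the norm bound, I take Euclidean norms in the lemma. This gives $\lVert \delta^{(i)}\rVert \le \beta \lVert (H^{(i)})^{-1}\rVert_2 \lVert \nabla_{\theta}\mathcal{G}\rVert$. The operator norm of the SPD matrix $(H^{(i)})^{-1}$ equals its largest eigenvalue, $1/\lambda_{\min}(H^{(i)}) \le 1/m$, so the first inequality follows.

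For the inner-product bound, I write $\langle \nabla_{\theta}\mathcal{G}, \delta^{(i)}\rangle = \beta\, (\nabla_{\theta}\mathcal{G})^\intercal (H^{(i)})^{-1} \nabla_{\theta}\mathcal{G}$. This is a quadratic form in the SPD matrix $(H^{(i)})^{-1}$, so it is bounded below by $\lambda_{\min}((H^{(i)})^{-1})\lVert\nabla_{\theta}\mathcal{G}\rVert^2 = \lVert\nabla_{\theta}\mathcal{G}\rVert^2/\lambda_{\max}(H^{(i)}) \ge \tfrac{1}{M}\lVert\nabla_{\theta}\mathcal{G}\rVert^2$. This also recovers \Cref{fact:update_alignment} quantitatively: $\delta^{(i)}$ is an ascent direction for $\mathcal{G}$, and so $-\epsilon\delta^{(i)}$ is a descent step. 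This is consistent with the hypothesis stated in the lemma.

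The only step needing care is passing the eigenvalue bounds through the integral defining $H^{(i)}$. It must be justified that $\mathbf{x}^\intercal H^{(i)}\mathbf{x} = \int_0^1 \mathbf{x}^\intercal F(\phi^{(i)}(\tau))\mathbf{x}\, d\tau \in [m\lVert\mathbf{x}\rVert^2, M\lVert\mathbf{x}\rVert^2]$. This holds because \Cref{assumption:uniform_covariate_conditioning} is imposed uniformly along the whole e-geodesic $\phi^{(i)}(\tau)$, not just at its endpoints. It also requires that the integrand be measurable, which holds because $F$ is continuous in $\phi$, as the Hessian of the smooth log-partition $A$ in \Cref{prop:vanillaGradient}. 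Everything else is standard linear algebra.
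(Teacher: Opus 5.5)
Your proposal is correct and follows essentially the same route as the paper. Both start from the closed form $\delta^{(i)} = \beta (H^{(i)})^{-1}\nabla_{\theta}\mathcal{G}$ of \Cref{lemma:natural_gradient}, use \Cref{assumption:uniform_covariate_conditioning} to get $mI \preceq H^{(i)} \preceq MI$ and hence $\tfrac{1}{M}I \preceq (H^{(i)})^{-1} \preceq \tfrac{1}{m}I$, and read off the norm bound (via $\lVert (H^{(i)})^{-1}\rVert_2 = 1/\lambda_{\min}(H^{(i)})$) and the quadratic-form bound. Your explicit justification for passing the pointwise Loewner bounds through the $\tau$-integral is a small detail the paper takes for granted.
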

\begin{proof}
From \Cref{lemma:natural_gradient}, we know that $\delta^{(i)} = \beta \delta^{\prime} = \beta \left( H^{(i)} \right)^{-1} \nabla_{\theta} \mathcal{G}$. From \Cref{assumption:uniform_covariate_conditioning}, \\
\begin{align*}
    mI &\preceq H^{(i)} I \preceq MI \\
    \frac{I}{M} &\preceq \left( H^{(i)} \right)^{-1} I \preceq \frac{I}{m} \\\\
    \text{\textit{\textbf{lower bounded}}}& \text{\textit{\textbf{ inner product}}} \\
    \left< \nabla_{\theta} \mathcal{G} , \delta^{(i)} \right> &= \beta \left< \nabla_{\theta} \mathcal{G} , \delta^{\prime} \right> = \beta \; \left( \nabla_{\theta} \mathcal{G} \right)^\intercal \left( H^{(i)} \right)^{-1} \nabla_{\theta} \mathcal{G} \\
    &\geq \beta \; \left( \nabla_{\theta} \mathcal{G} \right)^\intercal \frac{I}{M} \nabla_{\theta} \mathcal{G} \\
    &\geq \frac{\beta}{M} \left\lVert \nabla_{\theta} \mathcal{G} \right\rVert^2 \\\\
    &\text{\textit{\textbf{upper bounded}}} \text{\textit{\textbf{ norm}}} \\
    \left\lVert \delta^{(i)} \right\rVert &= \beta \left\lVert \delta^{\prime} \right\rVert = \beta \left\lVert \left( H^{(i)} \right)^{-1} \nabla_{\theta} \mathcal{G} \right\rVert \leq \beta \left\lVert \left( H^{(i)} \right)^{-1} \right\rVert_2  \left\lVert \nabla_{\theta} \mathcal{G} \right\rVert \\
    \text{\textit{since }} & H^{(i)} \text{\textit{ is positive definite}} \left\lVert \left( H^{(i)} \right)^{-1} \right\rVert_2 = \lambda_{\text{max}} \left(\left( H^{(i)} \right)^{-1}\right) = \frac{1}{\lambda_{\text{min}} \left( H^{(i)} \right)} \\
    \text{\textit{where }} & \lambda_{\text{max}}(.) \text{\textit{ and }} \lambda_{\text{min}}(.) \text{\textit{ are functions that return the max and min eigenvalues}} \\
    \text{\textit{and }} & \lambda_{\text{min}} \left( H^{(i)} \right) \geq m \\
    \text{\textit{hence }} \left\lVert \delta^{(i)} \right\rVert &\leq \frac{\beta}{\lambda_{\text{min}} \left( H^{(i)} \right)} \left\lVert \nabla_{\theta} \mathcal{G} \right\rVert \leq \frac{\beta}{m} \left\lVert \nabla_{\theta} \mathcal{G} \right\rVert
\end{align*}
\end{proof}
An interesting corollary arising from \Cref{lemma:pseudograd} and \cite{bertsekas2000gradient} is that in Euclidean geometry, our update direction $\delta^{(i)}$ is a pseudo-gradient of the vanilla gradient $\nabla_{\theta} \mathcal{G}$. This further strengthens \Cref{fact:update_alignment} since under \Cref{assumption:uniform_covariate_conditioning}, the update alignment is now lower bounded by the norm of the vanilla gradient.
\section[TRIRL Proofs]{\Cref{sec:trirl_analysis} Proofs}
\begin{proof}[\textbf{Proof of \Cref{prop:rho_q_map}}]
For a state $\mathbf{s}$, $\Omega(\policyOcc) = \underset{\mathbf{a}}{\sum} \policyOcc \log \policyOcc$. Its convex conjugate $\Omega^{\text{conj}}$ is defined as
\begin{align*}
    \Omega^{\text{conj}}(q(\mathbf{s}, \mathbf{a})) &= \underset{\policyOcc}{\sup}\left\{ \Big< q(\mathbf{s}, \mathbf{a}), \policyOcc \Big> - \Omega(\policyOcc) \right\} \\
    &= \underset{\policyOcc}{\sup}\left\{ \underset{\mathbf{a}}{\sum} q(\mathbf{s}, \mathbf{a}) \policyOcc - \underset{\mathbf{a}}{\sum} \policyOcc \log \policyOcc \right\} \\
    \policyOccOpt &= \underset{\policyOcc}{\arg \max} \underset{\mathbf{a}}{\sum} q(\mathbf{s}, \mathbf{a}) \policyOcc - \underset{\mathbf{a}}{\sum} \policyOcc \log \policyOcc \\
\end{align*}
$\policyOccOpt$ is optimal when the gradient of the inner term's Lagrangian is zero. 
\begin{align*}
    \nabla_{\policyOcc} \left[ \underset{\mathbf{a}}{\sum} q(\mathbf{s}, \mathbf{a}) \policyOcc - \underset{\mathbf{a}}{\sum} \policyOcc \log \policyOcc - \lambda(\mathbf{s}) \left( \underset{\mathbf{a}}{\sum} \policyOcc - \policyDistOpt \right) \right] &= 0 \\
    q(\mathbf{s}, \mathbf{a}) -1 -\log \policyOccOpt - \lambda(\mathbf{s}) &= 0 \\
    \policyOccOpt &= \exp{(q(\mathbf{s}, \mathbf{a}))} \exp{(-1 - \lambda(\mathbf{s}))} \\
    \text{\textit{Equivalently, }} \policyOpt &= \frac{\exp{(q(\mathbf{s}, \mathbf{a}))} \exp{(-1 - \lambda(\mathbf{s}))}}{\policyDistOpt} \\
    \text{\textit{Since}} \underset{\mathbf{a}}{\sum} \policyOpt = 1 \text{\textit{ we get }} \exp{(-1 - \lambda(\mathbf{s}))} = \nicefrac{\policyDistOpt}{\underset{\mathbf{a}}{\sum}\exp{(q(\mathbf{s}, \mathbf{a}))}} &= \nicefrac{\policyDistOpt}{Z_{\mathbf{s}}} \\
   \policyOccOpt &= \frac{\policyDistOpt \exp{(q(\mathbf{s}, \mathbf{a}))}}{Z_{\mathbf{s}}}  \\
   \policyOpt &= \frac{\exp{(q(\mathbf{s}, \mathbf{a}))}}{Z_{\mathbf{s}}}
\end{align*}
$\Omega^{\text{conj}}(q^{\star}(\mathbf{s}, \mathbf{a}))$ corresponds to $\policyOccOpt$.
\begin{align*}
    \Omega^{\text{conj}}(q^{\star}(\mathbf{s}, \mathbf{a})) &= \underset{\mathbf{a}}{\sum} q^{\star}(\mathbf{s}, \mathbf{a}) \policyOccOpt - \underset{\mathbf{a}}{\sum} \policyOccOpt \log \policyOccOpt \\
    &= \underset{\mathbf{a}}{\sum} \policyOccOpt \left[ q^{\star}(\mathbf{s}, \mathbf{a}) - \log \policyOccOpt \right] \\
    &= \left( \log Z_{\mathbf{s}} - \log \policyDistOpt \right) \frac{\policyDistOpt}{Z_{\mathbf{s}}} \underset{\mathbf{a}}{\sum} \exp{(q^{\star}(\mathbf{s}, \mathbf{a}))} \\
    &= \policyDistOpt \log Z_{\mathbf{s}} - \policyDistOpt \log \policyDistOpt
\end{align*}
Finally $\nabla_{q(\mathbf{s}, \mathbf{a})} \Omega^{\text{conj}}(q^{\star}(\mathbf{s}, \mathbf{a})) = \frac{\policyDistOpt \exp{(q^{\star}(\mathbf{s}, \mathbf{a}))}}{Z_{\mathbf{s}}} = \policyOccOpt$
\end{proof}
\begin{proposition}[Reward - Policy Relationship]
\label[proposition]{prop:r_logpi}
\begin{align*}
\log \frac{\policyOccIter{i}}{\policyDistIter{i}} = \log \policyIter{i} = \rewardIter{i} + \gamma\, \mathbb{E}_{\mathbf{s}' \sim \mathcal{P}(\cdot|\mathbf{s},\mathbf{a})} \left[
  V_{\pi^{(i)}}(\mathbf{s}') \right] - V_{\pi^{(i)}}(\mathbf{s})
\end{align*}
\end{proposition}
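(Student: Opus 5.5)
The identity in \Cref{prop:r_logpi} is the standard soft Bellman characterisation of a max-ent optimal policy, combined with \Cref{fact:J_rho}. The plan has three steps: reduce the left-hand side to $\log \pi^{(i)}$, write the max-ent optimal policy as a Boltzmann distribution, and take logarithms.

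\textbf{Step 1 (the first equality).} By \Cref{fact:J_rho}, $\policyOccIter{i} = \policyIter{i}\policyDistIter{i}$. For every state with $\policyDistIter{i} > 0$, dividing gives $\log(\policyOccIter{i}/\policyDistIter{i}) = \log \policyIter{i}$. Unvisited states carry no occupancy mass, so the convention there is immaterial.

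\textbf{Step 2 (Boltzmann form).} I take $\pi^{(i)}$ to be the max-ent optimal policy for the reward $r^{(i)}$, as produced by the primal step. From the MaxCausalEnt background in \Cref{sec:background}, it has the form
\[
\pi^{(i)}(\mathbf{a}\vert\mathbf{s}) = \exp\left(Q_{\pi^{(i)}}(\mathbf{s},\mathbf{a}) - V_{\pi^{(i)}}(\mathbf{s})\right),
\]
with the soft value functions defined by
\begin{align*}
V_{\pi^{(i)}}(\mathbf{s}) &= \log \sum_{\mathbf{a}'} \exp Q_{\pi^{(i)}}(\mathbf{s},\mathbf{a}'), \\
Q_{\pi^{(i)}}(\mathbf{s},\mathbf{a}) &= r^{(i)}(\mathbf{s},\mathbf{a}) + \gamma\,\mathbb{E}_{\mathbf{s}'\sim\mathcal{P}}\left[V_{\pi^{(i)}}(\mathbf{s}')\right].
\end{align*}
Equivalently, one can invoke \Cref{prop:rho_q_map}: the inverse mirror map sends $q$ to the occupancy $\rho_{\pi}(\mathbf{s})\exp(q)/Z_{\mathbf{s}}$, so the conditional is $\exp(q)/Z_{\mathbf{s}}$ with $\log Z_{\mathbf{s}} = V(\mathbf{s})$.

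\textbf{Step 3 (take logarithms).} This gives $\log \pi^{(i)} = Q_{\pi^{(i)}} - V_{\pi^{(i)}}$. Substituting the soft Bellman equation for $Q_{\pi^{(i)}}$ yields exactly
\[
r^{(i)} + \gamma\,\mathbb{E}_{\mathbf{s}'}\left[V_{\pi^{(i)}}(\mathbf{s}')\right] - V_{\pi^{(i)}}(\mathbf{s}).
\]
This is the claimed identity: the log-policy equals the reward shaped by the potential $V_{\pi^{(i)}}$.

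\textbf{Main obstacle.} The main difficulty is justifying Step 2, namely that $\pi^{(i)}$ is exactly soft-optimal for $r^{(i)}$ and that the entropy temperature is $1$. The paper's objective \Cref{eq:IRLDM} uses unit entropy weight, with $\beta$ absorbed into the reward, so the temperature is consistent. I would check this by forming the Lagrangian of the soft RL problem in occupancy space with the Bellman flow constraints, using \Cref{fact:strong_duality}. Setting its derivative with respect to $\rho(\mathbf{s},\mathbf{a})$ to zero should give $\log \pi = r + \gamma \mathbb{E}[V(\mathbf{s}')] - V(\mathbf{s})$ directly, with the flow-constraint multipliers playing the role of $V$. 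This simultaneously identifies $V_{\pi^{(i)}}$ as the soft value function and avoids any assumption about value-function uniqueness.
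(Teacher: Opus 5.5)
Your proposal is correct and matches the paper's argument, which is the one-line observation that the identity follows from the factorisation $\rho_{\pi^{(i)}}(\mathbf{s},\mathbf{a}) = \pi^{(i)}(\mathbf{a}\vert\mathbf{s})\,\rho_{\pi^{(i)}}(\mathbf{s})$ (\Cref{fact:J_rho}) together with the Boltzmann form of the soft-optimal policy for $r^{(i)}$; your Steps 1--3 spell this out, under the same standing assumption that $\pi^{(i)}$ is exactly soft-optimal for $r^{(i)}$. The occupancy-space Lagrangian check in your last paragraph, where the Bellman-flow multipliers play the role of $V_{\pi^{(i)}}$, is a correct and self-contained way to confirm the unit temperature, but the paper's route does not need it.
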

\begin{proof}[\textbf{Proof}]
This follows from \Cref{fact:J_rho} and the fact that the optimal policy for $\rewardIter{i}$ is Boltzmann under Shannon entropy regularization.
\end{proof}
\begin{lemma}[Gradient of $\mathcal{J}$]
\label[lemma]{lemma:gradient}
At $\policyOccIter{i}$, the gradient of $\mathcal{J}(\policyOcc)$ w.r.t. $\policyOcc$ equals the update direction $-\delta^{(i)}$ up to a constant and a potential-shaping term. Hence, it
induces the same optimal policy
\begin{align*}
  \nabla_{\policyOcc} \mathcal{J}(\policyOccIter{i}) = -\delta^{(i)} - \beta - \left( \gamma\, \mathbb{E}_{\mathbf{s}' \sim \mathcal{P}(\cdot|\mathbf{s},\mathbf{a})} \left[
V_{\pi^{(i)}}(\mathbf{s}') \right] - V_{\pi^{(i)}}(\mathbf{s}) \right)
\end{align*}
\end{lemma}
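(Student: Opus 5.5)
We compute $\nabla_{\rho}\mathcal{J}$ directly from \Cref{eq:IRLDM}, written as a function of the state-action occupancy. By \Cref{fact:J_rho}, $\pi = \rho_\pi(\mathbf{s},\mathbf{a})/\rho_\pi(\mathbf{s})$, so the entropy term is $-\sum_{\mathbf{s},\mathbf{a}} \rho\log\rho + \sum_{\mathbf{s}}\rho(\mathbf{s})\log\rho(\mathbf{s})$. The distribution-matching term is $-\beta\sum_{\mathbf{s},\mathbf{a}}\rho(\log\rho - \log\rho_E)$.

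Differentiating with respect to $\rho(\mathbf{s},\mathbf{a})$, and using $\rho(\mathbf{s})=\sum_{\mathbf{a}}\rho(\mathbf{s},\mathbf{a})$, the entropy part gives $-\log\rho(\mathbf{s},\mathbf{a}) - 1 + \log\rho(\mathbf{s}) + 1 = -\log\pi(\mathbf{a}|\mathbf{s})$. The KL part gives $-\beta\log(\rho/\rho_E) - \beta$. Evaluated at $\rho^{(i)}$, this yields
\begin{align*}
\nabla_{\rho}\mathcal{J}(\rho^{(i)}) = -\log\pi^{(i)}(\mathbf{a}|\mathbf{s}) + \beta\log\frac{\rho_E}{\rho^{(i)}} - \beta .
\end{align*}

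Next, we substitute \Cref{prop:r_logpi}, which holds because $\pi^{(i)}$ is the soft-optimal policy for $r^{(i)}$. This replaces $\log\pi^{(i)}$ by $r^{(i)}$ plus the shaping term $\gamma\mathbb{E}_{\mathbf{s}'}[V_{\pi^{(i)}}(\mathbf{s}')]-V_{\pi^{(i)}}(\mathbf{s})$. Collecting terms, $-r^{(i)}+\beta\log(\rho_E/\rho^{(i)}) = -\delta^{(i)}$ by the definition of $\delta^{(i)}$ in \Cref{eq:functional_update}. This gives exactly the stated expression: $-\delta^{(i)}$, plus the constant $-\beta$, minus the shaping term. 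The normalisation constants that appear when $\delta^{(i)}$ is written in feature form (as in \Cref{sec:occupancy_space_rew_updates}) are additional constants and can be absorbed.

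The claim that this direction induces the same optimal policy follows from two observations. A constant offset of the reward does not change the soft-optimal (Boltzmann) policy. A term of the form $\gamma\mathbb{E}[V(\mathbf{s}')]-V(\mathbf{s})$ is potential-based shaping, which leaves optimal (soft) policies invariant (Ng et al.-style argument; equivalently, it shifts $Q$ by $-V(\mathbf{s})$, which cancels in the Boltzmann normalisation).

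The main subtlety is the differentiation of the entropy term. The gradient must be taken over unconstrained $\rho$, with the marginal $\rho(\mathbf{s})$ depending on $\rho$; the $+1$ from $\log\rho(\mathbf{s})$ cancels the $-1$ from $\rho\log\rho$. If one instead holds $\rho(\mathbf{s})$ fixed, an extra constant appears, which is harmless since it is absorbed by the ``up to a constant'' qualifier.
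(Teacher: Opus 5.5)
Your proposal is correct and follows the paper's proof essentially step for step. You differentiate the entropy and reverse-KL terms with respect to $\rho$, substitute \Cref{prop:r_logpi} for $\log\pi^{(i)}$, regroup into $-\delta^{(i)}-\beta$ minus the shaping term, and invoke potential-shaping invariance. Your explicit handling of the $+1$ cancellation from the marginal $\rho(\mathbf{s})$, and of the normalisation constants absorbed into ``up to a constant'', is more careful than the paper, which leaves both points implicit.
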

\begin{proof}[\textbf{Proof of \Cref{lemma:gradient}}]
\begin{align*}
    \mathcal{J}(\policyOcc) &= \mathbb{E}_{\policyDist} \left[ H(\policy) \right] - \beta \mathbb{E}_{\policyOcc} \left[ \log \frac{\policyOcc}{\expOcc} \right] \\
    \nabla_{\policyOcc} \mathcal{J}(\policyOcc) &= \nabla_{\policyOcc} \left[ \left( -\underset{\mathbf{s}, \mathbf{a}}{\sum} \policyOcc \log \frac{\policyOcc}{\policyDist} \right) - \beta \left( \underset{\mathbf{s}, \mathbf{a}}{\sum} \policyOcc \log \frac{\policyOcc}{\expOcc} \right) \right] \\
    &= - \log \frac{\policyOcc}{\policyDist} - \beta \left( \log \frac{\policyOcc}{\expOcc} + 1 \right) \\
    &\text{\textit{At iteration }} i \\
    &= - \left( \log \frac{\policyOccIter{i}}{\policyDistIter{i}} - \beta \log \frac{\expOcc}{\policyOccIter{i}} +\beta \right) \\
    &= - \left( \rewardIter{i} - \beta \log \frac{\expOcc}{\policyOccIter{i}} + \gamma\, \mathbb{E}_{\mathbf{s}' \sim \mathcal{P}(\cdot|\mathbf{s},\mathbf{a})} \left[ V_{\pi^{(i)}}(\mathbf{s}') \right] - V_{\pi^{(i)}}(\mathbf{s}) +\beta \right) \\
    \text{\textit{From \Cref{prop:r_logpi} and since }} &\delta^{(i)} = \rewardIter{i} - \beta \log \frac{\expOcc}{\policyOccIter{i}} \\
    &= -\delta^{(i)} - \beta - \left( \gamma\, \mathbb{E}_{\mathbf{s}' \sim \mathcal{P}(\cdot|\mathbf{s},\mathbf{a})} \left[ V_{\pi^{(i)}}(\mathbf{s}') \right] - V_{\pi^{(i)}}(\mathbf{s}) \right) \\
    \text{\textit{By policy invariance under potential}} &\text{\textit{ shaping \citep{ng1999policy}, }} \nabla_{\policyOcc} \mathcal{J} \text{\textit{ and }} -\delta^{(i)} \text{\textit{ induce the same optimal policy.}}
\end{align*}
\end{proof}
\Cref{prop:r_logpi,lemma:gradient} highlight that the optimizing the reward is equivalent to optimizing the log-policy, and optimizing $\nabla_{\policyOcc} \mathcal{J}(\policyOcc)$ is equivalent to optimizing with our update direction $\delta^{(i)}$.
\begin{proof}[\textbf{Proof of \Cref{theorem:policyMD}}]
Before beginning the proof, we list some intermediate results below:
\begin{align}
    \ThmStepLabel{theorem:policyMD-A}
    \tag{\thethmstep}
    \text{KL} \left(  \policyOcc \left\lvert \right\rvert \policyOccIter{i} \right) &= \text{KL}\left( \policyDist \left\lvert\right\rvert \policyDistIter{i} \right) + \underset{\mathbf{s}}{\sum} \policyDist \text{KL}\left( \policy \left\lvert\right\rvert \policyIter{i} \right) \\
    \ThmStepLabel{theorem:policyMD-B}
    \tag{\thethmstep}
    - \text{KL}\left( \policy \left\lvert\right\rvert \policyIter{i} \right) &= H(\policy) + \mathbb{E}_{\pi} \left[ \log \policyIter{i} \right] 
\end{align}
\Cref{fact:J_rho} implies that optimizing $\mathcal{J}(\policy)$ under the policy $\policy$ is equivalent to optimizing $\mathcal{J}(\policyOcc)$ under $\policyOcc$ and subsequently obtaining $\policy$ via marginalization. The mirror descent objective on $\mathcal{J}(\policyOcc)$ with step size $\omega$ is given by
\begin{align*}
    \policyOccIter{i+1}& \\ 
    &= \underset{\policyOcc}{\arg\max} \; \left[ \; \Big< \nabla_{\policyOcc} \mathcal{J}(\policyOccIter{i}), \policyOcc \Big> - \omega \text{KL} \left(  \policyOcc \left\lvert \right\rvert \policyOccIter{i} \right) \;\right] \\
    &= \underset{\policyOcc}{\arg\max} \; \left[ \underset{\mathbf{s}, \mathbf{a}}{\sum} \policyOcc \nabla_{\policyOcc} \mathcal{J}(\policyOccIter{i}) - \omega \text{KL} \left(  \policyOcc \left\lvert \right\rvert \policyOccIter{i} \right)  \right] \\
    &\text{\textit{From \Cref{theorem:policyMD-A}}} \\
    &= \underset{\policyOcc}{\arg\max} \; \Bigg[ \underset{\mathbf{s}, \mathbf{a}}{\sum} \policyDist \policy \nabla_{\policyOcc} \mathcal{J}(\policyOccIter{i}) \\&- \omega \left( \text{KL}\left( \policyDist \left\lvert\right\rvert \policyDistIter{i} \right) + \underset{\mathbf{s}}{\sum} \policyDist \text{KL}\left( \policy \left\lvert\right\rvert \policyIter{i} \right) \right) \Bigg] \\
    &\text{\textit{From \Cref{theorem:policyMD-B}}} \\
    &= \underset{\policyOcc}{\arg\max} \; \Bigg[ \underset{\mathbf{s}, \mathbf{a}}{\sum} \policyDist \policy \nabla_{\policyOcc} \mathcal{J}(\policyOccIter{i}) \\& +\omega \underset{\mathbf{s}}{\sum}{\policyDist}\left( H(\policy) + \mathbb{E}_{\pi}\left[ \log \policyIter{i} \right] \right) - \omega \text{KL}\left( \policyDist \left\lvert\right\rvert \policyDistIter{i} \right) \Bigg] \\
    &\text{\textit{Let }} \mathcal{K} = \text{KL}\left( \policyDist \left\lvert\right\rvert \policyDistIter{i} \right) \\
    &= \underset{\policyOcc}{\arg\max} \; \Bigg[ \underset{\mathbf{s}}{\sum} \policyDist \underset{\mathbf{a}}{\sum} \policy \nabla_{\policyOcc} \mathcal{J}(\policyOccIter{i}) \\& +\omega \underset{\mathbf{s}}{\sum}{\policyDist}\left( H(\policy) + \mathbb{E}_{\pi}\left[ \log \policyIter{i} \right] \right) -\omega \mathcal{K} \Bigg] \\
    &= \underset{\policyOcc}{\arg\max} \; \Bigg[ \underset{\mathbf{s}}{\sum} \policyDist \left[ \omega H(\policy) + \underset{\mathbf{a}}{\sum} \policy \nabla_{\policyOcc} \mathcal{J}(\policyOccIter{i}) + \omega \mathbb{E}_{\pi}\left[ \log \policyIter{i} \right] \right] -\omega \mathcal{K} \Bigg] \\
    &= \underset{\policyOcc}{\arg\max} \; \mathbb{E}_{\policyDist} \left[ H(\policy) + \mathbb{E}_{\pi}\left[ \log \policyIter{i} + \frac{1}{\omega} \nabla_{\policyOcc} \mathcal{J}(\policyOccIter{i}) \right]  \right] -\mathcal{K} \\
    &= \underset{\policyOcc}{\arg\max} \; \mathbb{E}_{\policyDist} \Bigg[ H(\policy) + \mathbb{E}_{\pi} \Big[ \rewardIter{i} + \gamma\, \mathbb{E}_{\mathbf{s}' \sim \mathcal{P}(\cdot|\mathbf{s},\mathbf{a})} \left[ V_{\pi^{(i)}}(\mathbf{s}') \right] - V_{\pi^{(i)}}(\mathbf{s}) \\& + \frac{1}{\omega} \nabla_{\policyOcc} \mathcal{J}(\policyOccIter{i}) \Big]  \Bigg] -\mathcal{K} \\
    &\text{\textit{Apply \Cref{prop:r_logpi,lemma:gradient}, and policy invariance under potential shaping \citep{ng1999policy}}} \\
    &= \underset{\policyOcc}{\arg\max} \; \mathbb{E}_{\policyDist} \left[ H(\policy) + \mathbb{E}_{\pi}\left[ \rewardIter{i} -\frac{1}{\omega} \delta^{(i)} \right] \right] - \mathcal{K} \\
    &= \underset{\policyOcc}{\arg\max} \; \mathbb{E}_{\policyDist} \left[ H(\policy) + \mathbb{E}_{\pi}\left[ \left( \mathcal{U}_{\nicefrac{1}{\policyLR}}^{\rho^{(i)}} \right) \rewardIter{i} \right] \right] - \mathcal{K}
\end{align*}
Finally, we show that $\mathcal{K}$ is in turn upper bounded by $\zeta = \mathbb{E}_{\policyDist} \left[ \text{KL}\left( \policy \left\lvert\right\rvert \policyIter{i} \right) \right]$. As before, we state some intermediate results below.
\begin{align}
    \ThmStepLabel{theorem:policyMD-C}
    \tag{\thethmstep} 
    \text{KL}\left(\overset{\infty}{\underset{t=0}{\sum}} (1 - \gamma) \gamma^t \mu^{\pi}_t \left\lvert\right\rvert \overset{\infty}{\underset{t=0}{\sum}} (1 - \gamma) \gamma^t \mu^{\pi^{(i)}}_t \right) \leq \overset{\infty}{\underset{t=0}{\sum}} (1 - \gamma) \gamma^t \text{KL}\left( \mu^{\pi}_t \left\lvert\right\rvert \mu^{\pi^{(i)}}_t \right) \nonumber
\end{align}
\begin{align*}
    \text{\textit{For any function }} f(\mathbf{s}) \\
    \frac{\gamma}{1 - \gamma} \mathbb{E}_{\policyDist} \left[ f(\mathbf{s}) \right] &= \frac{\gamma}{1 - \gamma} \underset{\mathbf{s}}{\sum} \policyDist \left[ f(\mathbf{s}) \right] \\
    &= \gamma \underset{\mathbf{s}}{\sum} \overset{\infty}{\underset{t=0}{\sum}} \gamma^t \mu^{\pi}_t(\mathbf{s})  \left[ f(\mathbf{s}) \right] \\
    \ThmStepLabel{theorem:policyMD-D}
    \tag{\thethmstep}
    &= \gamma \overset{\infty}{\underset{t=0}{\sum}} \gamma^t \mathbb{E}_{\mu^{\pi}_t} \left[ f(\mathbf{s}) \right]
\end{align*}
Where \Cref{theorem:policyMD-C} follows from the joint convexity of the KL divergence, given $\sum_{t=0}^{\infty}(1-\gamma)\gamma^t=1$. Now, we derive our bound for $\mathcal{K}$. \cite[Lemma~3]{akrour2018model} bound the KL divergence between state distributions of two policies at time $t$.
\begin{align*}
    \text{KL} \left( \mu^{\pi}_t(\mathbf{s}) \left\lvert\right\rvert \mu^{\pi^{(i)}}_t(\mathbf{s}) \right) &\leq \text{KL} \left( \mu^{\pi}_{t-1}(\mathbf{s}) \left\lvert\right\rvert \mu^{\pi^{(i)}}_{t-1}(\mathbf{s}) \right) + \mathbb{E}_{\mu^{\pi}_{t-1}} \left[ \text{KL} \left( \policy \left\lvert\right\rvert \policyIter{i} \right) \right] \\
    &\text{\textit{Given }} \mu^{\pi}_0 = \mu^{\pi^{(i)}}_0 \\
    \text{KL} \left( \mu^{\pi}_1(\mathbf{s}) \left\lvert\right\rvert \mu^{\pi^{(i)}}_1(\mathbf{s}) \right) &\leq 0 + \mathbb{E}_{\mu^{\pi}_{0}} \left[ \text{KL} \left( \policy \left\lvert\right\rvert \policyIter{i} \right) \right] \\
    \text{KL} \left( \mu^{\pi}_2(\mathbf{s}) \left\lvert\right\rvert \mu^{\pi^{(i)}}_2(\mathbf{s}) \right) &\leq 0 + \mathbb{E}_{\mu^{\pi}_{0}} \left[ \text{KL} \left( \policy \left\lvert\right\rvert \policyIter{i} \right) \right] + \mathbb{E}_{\mu^{\pi}_{1}} \left[ \text{KL} \left( \policy \left\lvert\right\rvert \policyIter{i} \right) \right] \\
    &\text{\textit{By induction}} \\
    \text{KL} \left( \mu^{\pi}_t(\mathbf{s}) \left\lvert\right\rvert \mu^{\pi^{(i)}}_t(\mathbf{s}) \right) &\leq \overset{t-1}{\underset{k=0}{\sum}} \mathbb{E}_{\mu^{\pi}_{k}} \left[ \text{KL} \left( \policy \left\lvert\right\rvert \policyIter{i} \right) \right] \\
    &\text{\textit{From \Cref{theorem:policyMD-C}}} \\
    \text{KL} \left( \policyDist \left\lvert\right\rvert \policyDistIter{i} \right) &\leq (1 - \gamma) \overset{\infty}{\underset{t=0}{\sum}} \gamma^t \overset{t-1}{\underset{k=0}{\sum}} \mathbb{E}_{\mu^{\pi}_{k}} \left[ \text{KL} \left( \policy \left\lvert\right\rvert \policyIter{i} \right) \right] \\
    &= \gamma \overset{\infty}{\underset{t=0}{\sum}} \gamma^t \mathbb{E}_{\mu^{\pi}_{t}} \left[ \text{KL} \left( \policy \left\lvert\right\rvert \policyIter{i} \right) \right] \\
    &\text{\textit{From \Cref{theorem:policyMD-D}}} \\
    &= \frac{\gamma}{1 - \gamma} \mathbb{E}_{\policyDist}  \left[ \text{KL} \left( \policy \left\lvert\right\rvert \policyIter{i} \right) \right] \\
    &= \frac{\gamma}{1 - \gamma} \zeta
\end{align*}
Setting the mirror descent step size equal to the inverse step size of the function-space reward update $\omega=\nicefrac{1}{\epsilon}$, we see that the trust region update 
\begin{align*}
\underset{\policy}{\arg \max} &\; \mathbb{E}_{\policyDist} \left[ H(\policy) + \mathbb{E}_\pi \left[ \left( \mathcal{U}_{\epsilon}^{\rho^{(i)}} \right) r^{(i)} \right] \right]
\end{align*}
is analogous to an MD update under the state-action occupancy with a deviation term $\mathcal{K}$ that is in-turn bounded by the enforced trust region bound $\zeta$.
\end{proof}
\section[Convergence Proofs]{\Cref{sec:convergence_analysis} Proofs}
\begin{proof}[\textbf{Proof of \Cref{theorem:policyRegret}}]
Given \Cref{theorem:policyMD}, in the trust region setting, the policy iterates perform entropic mirror descent up to the bounded deviation $\mathcal{K}$. We start with the regret bound for negative Shannon entropy regularized mirror descent \citep{gupta15850f20notes,shalev2025online}
\begin{align*}
  &\text{\textit{let }} \rho_{\pi^{\star}} = \arg \max_{\policyOcc} \mathcal{J}(\policyOcc) \\
  &\max_{\policyOcc} \overset{T}{\underset{i=1}{\sum}} \mathcal{J}(\policyOcc) - \overset{T}{\underset{i=1}{\sum}} \mathcal{J}(\policyOccIter{i}) \\& \leq \policyLR \text{KL}\left( \rho_{\pi^{\star}} \left\lvert\right\rvert \policyOccIter{1} \right) + \frac{\log 2}{2 \policyLR} \overset{T}{\underset{i=1}{\sum}} \left\lVert \nabla_{\rho_{\pi^{(i)}}} \mathcal{J}(\policyOcc) \right\rVert_{\infty}^2 \\
  & \text{\textit{from \Cref{assumption:lipschitz_primal}, }} \left\lVert \nabla_{\rho_{\pi^{(i)}}} \mathcal{J}(\policyOcc) \right\rVert_{\infty} \leq L_{\mathcal{J}} \\
  &\text{\textit{assuming }} \policyOccIter{1} \text{\textit{ is uniform over }} \left\lvert \mathcal{S} \right\rvert \left\lvert \mathcal{A} \right\rvert \\
  & \leq \policyLR \log \left( \left\lvert \mathcal{S} \right\rvert \left\lvert \mathcal{A} \right\rvert \right) + \frac{T L_{\mathcal{J}}^2 \log 2}{2 \policyLR}
\end{align*}
\end{proof}
\begin{proof}[\textbf{Proof of \Cref{lemma:H_bound}}]
We begin by bounding the Euclidean norm $\left\lVert \phi^{(i+1)} - \phi^{(i)} \right\rVert_2$. From \Cref{theorem:policyMD-A} and the bound on $\mathcal{K}$ from \Cref{theorem:policyMD},
\begin{align}
    \text{KL} \left(  \policyOccIter{i+1} \left\lvert \right\rvert \policyOccIter{i} \right) &= \text{KL}\left( \policyDistIter{i+1} \left\lvert\right\rvert \policyDistIter{i} \right) + \underset{\mathbf{s}}{\sum} \policyDistIter{i+1} \text{KL}\left( \policyIter{i+1} \left\lvert\right\rvert \policyIter{i} \right) \nonumber \\
    &\leq \frac{\gamma \zeta}{1 - \gamma} + \zeta \nonumber \\
    &= \frac{\zeta}{1 - \gamma} \label{eq:KL_rho_bound} \\
    \text{\textit{Alternatively, from}} &\text{\textit{ \Cref{assumption:boltzmann_occupancies}}} \nonumber \\
    \text{KL} \left( \policyOccIter{i+1} \left\lvert \right\rvert \policyOccIter{i} \right) &= A(\phi^{(i)}) - A(\phi^{(i+1)}) - \left< \nabla_{\phi}A(\phi^{(i+1)}) , \phi^{(i)} - \phi^{(i+1)} \right> \nonumber \\
    \text{\textit{Where }} A(\phi) = \log Z_{\phi} &\text{\textit{ is the log partition function}} \nonumber \\
    \text{\textit{Since }} \nabla_{\phi}^2 A(\phi) \equiv F({\phi}) \succeq mI \; , \; A &\text{\textit{ is }} m \text{\textit{-strongly convex (\Cref{prop:vanillaGradient});}} \nonumber \\
    \text{\textit{assuming \Cref{assumption:uniform_covariate_conditioning}}} &\text{\textit{ also holds on the segment between }} \phi^{(i)} \text{\textit{ and }} \phi^{(i+1)} \nonumber \\ 
    \text{\textit{from the definition}} &\text{\textit{ of strong convexity}} \nonumber \\
    \frac{m}{2} \left\lVert \phi^{(i+1)} - \phi^{(i)} \right\rVert_2^2 &\leq A(\phi^{(i)}) - A(\phi^{(i+1)}) - \left< \nabla_{\phi}A(\phi^{(i+1)}) , \phi^{(i)} - \phi^{(i+1)} \right> \nonumber \\
    &\text{\textit{from \Cref{eq:KL_rho_bound}}} \nonumber \\
    \frac{m}{2} \left\lVert \phi^{(i+1)} - \phi^{(i)} \right\rVert_2^2 &\leq \frac{\zeta}{1 - \gamma} \nonumber \\
    \left\lVert \phi^{(i+1)} - \phi^{(i)} \right\rVert_2 &\leq \sqrt{\frac{2 \zeta}{m(1 - \gamma)}} \label{eq:rho_bound}
\end{align}
Below we bound the norm $\left\lVert H^{(i+1)} - H^{(i)} \right\rVert_2$. From the definition of $H^{(i)}$ and the triangle inequality
\begin{align*}
    \left\lVert H^{(i+1)} - H^{(i)} \right\rVert_2 &\leq \overset{1}{\underset{0}{\int}} \left\lVert F(\phi^{(i+1)}(\tau)) - F(\phi^{(i)}(\tau)) \right\rVert_2 d\tau \\
    &\text{\textit{from \Cref{assumption:lipschitz_FIM}}} \\
    &\leq L_F \overset{1}{\underset{0}{\int}} \left\lVert \phi^{(i+1)}(\tau) - \phi^{(i)}(\tau) \right\rVert_2 d\tau \\
    &\text{\textit{from the definition of }} \phi^{(i)}(\tau) \\
    &= L_F \overset{1}{\underset{0}{\int}} \left\lVert (1 - \tau)\left(\hat{\phi}^{(i+1)} - \hat{\phi}^{(i)}\right) + \tau \left( \phi^{(i+1)} - \phi^{(i)} \right) \right\rVert_2 d\tau \\
    &\text{\textit{triangle inequality and integrating over }} \tau \\
    &\leq L_F \left( \frac{1}{2} \left\lVert \hat{\phi}^{(i+1)} - \hat{\phi}^{(i)} \right\rVert_2 + \frac{1}{2} \left\lVert \phi^{(i+1)} - \phi^{(i)} \right\rVert_2 \right) \\
    &\text{\textit{from the definition of }} \hat{\phi}^{(i)} \text{\textit{ (\Cref{eq:phi_hat})}} \\
    &= \frac{L_F}{2} \left( \frac{1}{\beta} \left\lVert \theta^{(i+1)} - \theta^{(i)} \right\rVert_2 + \left\lVert \phi^{(i+1)} - \phi^{(i)} \right\rVert_2 \right) \\
    &= \frac{L_F}{2} \left( \frac{\epsilon}{\beta} \left\lVert \delta^{(i)} \right\rVert_2 + \left\lVert \phi^{(i+1)} - \phi^{(i)} \right\rVert_2 \right) \\
    &\text{\textit{from \Cref{lemma:pseudograd}}} \\
    &\leq \frac{L_F}{2} \left( \frac{\epsilon}{m} \left\lVert \nabla_{\theta} \mathcal{G} \right\rVert_2 + \left\lVert \phi^{(i+1)} - \phi^{(i)} \right\rVert_2 \right) \\
    &\text{\textit{from \Cref{eq:rho_bound,assumption:lipschitzness_and_boundedness} (bounded gradient from Lipschitzness)}} \\
    &\leq \frac{L_F}{2} \left( \frac{\epsilon L_{\mathcal{G}}}{m} + \sqrt{\frac{2 \zeta}{m(1 - \gamma)}} \right)
\end{align*}
\end{proof}
\begin{proof}[\textbf{Proof of \Cref{theorem:rewardRegret}}]
Given \Cref{theorem:rewardMD}, we start with regret bound for mirror descent under a time-varying proximal term \citep{duchi2011adaptive}
\begin{align*}
    &\overset{T}{\underset{i=1}{\sum}} \mathcal{G}(\policyIter{i}, \rewardIter{i}) - \min_{\theta} \overset{T}{\underset{i=1}{\sum}} \mathcal{G}(\policyIter{i}, r_{\theta}) \\& \leq \frac{D_{\Omega^{(1)}}(\theta^{\star}, \theta^{(1)})}{\epsilon} + \frac{1}{\epsilon} \overset{T}{\underset{i=1}{\sum}} \left[ D_{\Omega^{(i+1)}}(\theta^{\star}, \theta^{(i+1)}) - D_{\Omega^{(i)}}(\theta^{\star}, \theta^{(i+1)}) \right] + \frac{\epsilon}{2} \overset{T}{\underset{i=1}{\sum}} \left\lVert \nabla_{\theta^{(i)}} \mathcal{G} \right\rVert_{\Omega^{\star (i)}}^2 \\
    &\text{\textit{where }} \left\lVert\cdot\right\rVert_{\Omega^{\star (i)}} \text{\textit{ is the dual norm of }} \left\lVert\cdot\right\rVert_{\Omega^{(i)}} \text{\textit{w.r.t which }} \Omega^{(i)} \text{\textit{ is 1-strongly convex}} \\
    &\text{\textit{given a squared primal norm }} \frac{1}{\beta}\left< \nabla_{\theta^{(i)}} \mathcal{G} , H^{(i)} \nabla_{\theta^{(i)}} \mathcal{G} \right> \text{\textit{ the squared dual norm is }} \beta \left< \nabla_{\theta^{(i)}} \mathcal{G} , \left( H^{(i)} \right)^{-1} \nabla_{\theta^{(i)}} \mathcal{G} \right> \\
    &= \frac{\left\lVert \theta^{\star} - \theta^{(1)} \right\rVert_{H^{(1)}}^2}{2 \epsilon \beta}  + \frac{1}{2 \epsilon \beta} \overset{T}{\underset{i=1}{\sum}} \left[ \left\lVert \theta^{\star} - \theta^{(i+1)} \right\rVert_{H^{(i+1)}}^2 - \left\lVert \theta^{\star} - \theta^{(i+1)} \right\rVert_{H^{(i)}}^2 \right] + \frac{\epsilon \beta}{2} \overset{T}{\underset{i=1}{\sum}} \left< \nabla_{\theta^{(i)}} \mathcal{G} , \left( H^{(i)} \right)^{-1} \nabla_{\theta^{(i)}} \mathcal{G} \right> \\
    &\text{\textit{from \Cref{assumption:lipschitzness_and_boundedness,assumption:uniform_covariate_conditioning,assumption:bounded_reward_domain}} and Cauchy-Schwarz} \\
    &\leq \frac{\left\lVert \theta^{\star} - \theta^{(1)} \right\rVert_{H^{(1)}}^2}{2 \epsilon \beta}  + \frac{1}{2 \epsilon \beta} \overset{T}{\underset{i=1}{\sum}} \left[ \left\lVert \theta^{\star} - \theta^{(i+1)} \right\rVert_2^2 \left\lVert H^{(i+1)} - H^{(i)} \right\rVert_2 \right] + \frac{\epsilon \beta}{2 m} \overset{T}{\underset{i=1}{\sum}} \left\lVert \nabla_{\theta^{(i)}} \mathcal{G} \right\rVert_2^2 \\
    &\leq \frac{M B^2}{2 \epsilon \beta} + \frac{B^2}{2 \epsilon \beta} \overset{T}{\underset{i=1}{\sum}} \left[ \left\lVert H^{(i+1)} - H^{(i)} \right\rVert_2 \right] + \frac{\epsilon \beta L_{\mathcal{G}}^2 T}{2 m} \\
    &\text{\textit{from \Cref{lemma:H_bound}}} \\
    &\leq \frac{M B^2}{2 \epsilon \beta} + \frac{T B^2 L_F}{4 \epsilon \beta} \left( \frac{\epsilon L_{\mathcal{G}}}{m} + \sqrt{\frac{2 \zeta}{m(1 - \gamma)}} \right) + \frac{\epsilon \beta L_{\mathcal{G}}^2 T}{2 m}
\end{align*}
\end{proof}
\begin{proof}[\textbf{Proof of \Cref{theorem:SPR_convergence}}]
Since both the policy and reward players are regret-minimizing algorithms that guarantee bounded regret, we can apply standard Nash equilibrium theory \citep[Section~2.1; Section~2]{farina2019optimistic,grand2021conic} to get
\begin{align*}
    \text{SPR}_{T} &\leq \frac{1}{T} \left[ \left( \policyLR \log \left( \left\lvert \mathcal{S} \right\rvert \left\lvert \mathcal{A} \right\rvert \right) + \frac{T L_{\mathcal{J}}^2 \log 2}{2 \policyLR} \right) + \left( \frac{M B^2}{2 \epsilon \beta} + \frac{T B^2 L_F}{4 \epsilon \beta} \left( \frac{\epsilon L_{\mathcal{G}}}{m} + \sqrt{\frac{2 \zeta}{m(1 - \gamma)}} \right) + \frac{\epsilon \beta L_{\mathcal{G}}^2 T}{2 m} \right) \right] \\
    &= \frac{\policyLR}{T} \log \left( \left\lvert \mathcal{S} \right\rvert \left\lvert \mathcal{A} \right\rvert \right) + \frac{L_{\mathcal{J}}^2 \log 2}{2 \policyLR} + \frac{M B^2}{2 \epsilon \beta T} + \frac{B^2 L_F L_{\mathcal{G}}}{4 \beta m} + \frac{B^2 L_F}{4 \epsilon \beta} \sqrt{\frac{2 \zeta}{m(1 - \gamma)}} + \frac{\epsilon \beta L_{\mathcal{G}}^2}{2 m} \\
    &\text{\textit{choose iteration dependent learning rates }} \policyLR = \sqrt{T} \text{\textit{ and }} \epsilon = \frac{1}{\sqrt{T}} \text{\textit{, with trust-region tolerance }} \zeta = \frac{1}{T^2} \\
    \text{SPR}_{T} &\leq \frac{1}{\sqrt{T}} \log \left( \left\lvert \mathcal{S} \right\rvert \left\lvert \mathcal{A} \right\rvert \right) + \frac{L_{\mathcal{J}}^2 \log 2}{2 \sqrt{T}} + \frac{M B^2}{2 \beta \sqrt{T}} + \frac{B^2 L_F L_{\mathcal{G}}}{4 \beta m} + \frac{B^2 L_F}{4 \beta \sqrt{T}} \sqrt{\frac{2}{m(1 - \gamma)}} + \frac{\beta L_{\mathcal{G}}^2}{2 m \sqrt{T}} \\
    &\text{\textit{let }} C_{\text{SPR}} = \log \left( \left\lvert \mathcal{S} \right\rvert \left\lvert \mathcal{A} \right\rvert \right) + \frac{L_{\mathcal{J}}^2 \log 2}{2} + \frac{M B^2}{2 \beta} + \frac{B^2 L_F}{4 \beta} \sqrt{\frac{2}{m(1 - \gamma)}} + \frac{\beta L_{\mathcal{G}}^2}{2 m} \\
    \text{SPR}_{T} &\leq \frac{B^2 L_F L_{\mathcal{G}}}{4 \beta m} + \frac{C_{\text{SPR}}}{\sqrt{T}} \\
    &\text{\textit{therefore, for any }} \xi > \frac{B^2 L_F L_{\mathcal{G}}}{4 \beta m} \text{\textit{, it suffices that}} \\
    \sqrt{T} &\geq \frac{C_{\text{SPR}}}{\xi - \frac{B^2 L_F L_{\mathcal{G}}}{4 \beta m}} \\
    T &= \mathcal{O}\left( \frac{C_{\text{SPR}}^2}{\left( \xi - \frac{B^2 L_F L_{\mathcal{G}}}{4 \beta m} \right)^2} \right)
\end{align*}
\end{proof}


\newpage
\section*{NeurIPS Paper Checklist}

\begin{enumerate}

\item {\bf Claims}
    \item[] Question: Do the main claims made in the abstract and introduction accurately reflect the paper's contributions and scope?
    \item[] Answer: \answerYes{} 
    \item[] Justification: The abstract and the introduction claim that (1) the fisher information along the e-geodesic directly affects the dual gradient of IRL, which can be exploited without computational overhead, (2) that the resulting update performs approximate entropic mirror descent in occupancy space, (3) that we provide convergence and regret bounds of the resulting update. These claims are directly supported by our theorems and lemmas.
    \item[] Guidelines:
    \begin{itemize}
        \item The answer \answerNA{} means that the abstract and introduction do not include the claims made in the paper.
        \item The abstract and/or introduction should clearly state the claims made, including the contributions made in the paper and important assumptions and limitations. A \answerNo{} or \answerNA{} answer to this question will not be perceived well by the reviewers. 
        \item The claims made should match theoretical and experimental results, and reflect how much the results can be expected to generalize to other settings. 
        \item It is fine to include aspirational goals as motivation as long as it is clear that these goals are not attained by the paper. 
    \end{itemize}

\item {\bf Limitations}
    \item[] Question: Does the paper discuss the limitations of the work performed by the authors?
    \item[] Answer: \answerYes{} 
    \item[] Justification: We discuss the following limitations in Section 6: (i) Our derivations do not take errors due to function approximation and limited data into account, (ii) we assume that there exists a finite feature vector in which the occupancies and rewards are linear, (iii) our reward-player regret bound depends on Assumption 5.1, which may not always hold in practice.
    \item[] Guidelines:
    \begin{itemize}
        \item The answer \answerNA{} means that the paper has no limitation while the answer \answerNo{} means that the paper has limitations, but those are not discussed in the paper. 
        \item The authors are encouraged to create a separate ``Limitations'' section in their paper.
        \item The paper should point out any strong assumptions and how robust the results are to violations of these assumptions (e.g., independence assumptions, noiseless settings, model well-specification, asymptotic approximations only holding locally). The authors should reflect on how these assumptions might be violated in practice and what the implications would be.
        \item The authors should reflect on the scope of the claims made, e.g., if the approach was only tested on a few datasets or with a few runs. In general, empirical results often depend on implicit assumptions, which should be articulated.
        \item The authors should reflect on the factors that influence the performance of the approach. For example, a facial recognition algorithm may perform poorly when image resolution is low or images are taken in low lighting. Or a speech-to-text system might not be used reliably to provide closed captions for online lectures because it fails to handle technical jargon.
        \item The authors should discuss the computational efficiency of the proposed algorithms and how they scale with dataset size.
        \item If applicable, the authors should discuss possible limitations of their approach to address problems of privacy and fairness.
        \item While the authors might fear that complete honesty about limitations might be used by reviewers as grounds for rejection, a worse outcome might be that reviewers discover limitations that aren't acknowledged in the paper. The authors should use their best judgment and recognize that individual actions in favor of transparency play an important role in developing norms that preserve the integrity of the community. Reviewers will be specifically instructed to not penalize honesty concerning limitations.
    \end{itemize}

\item {\bf Theory assumptions and proofs}
    \item[] Question: For each theoretical result, does the paper provide the full set of assumptions and a complete (and correct) proof?
    \item[] Answer: \answerYes{} 
    \item[] Justification: All claims made in the paper are accompanied by a corresponding proof. We state all assumptions and remarks for these theoretical results in the main text (and some in the appendix). 
    \item[] Guidelines:
    \begin{itemize}
        \item The answer \answerNA{} means that the paper does not include theoretical results. 
        \item All the theorems, formulas, and proofs in the paper should be numbered and cross-referenced.
        \item All assumptions should be clearly stated or referenced in the statement of any theorems.
        \item The proofs can either appear in the main paper or the supplemental material, but if they appear in the supplemental material, the authors are encouraged to provide a short proof sketch to provide intuition. 
        \item Inversely, any informal proof provided in the core of the paper should be complemented by formal proofs provided in appendix or supplemental material.
        \item Theorems and Lemmas that the proof relies upon should be properly referenced. 
    \end{itemize}

    \item {\bf Experimental result reproducibility}
    \item[] Question: Does the paper fully disclose all the information needed to reproduce the main experimental results of the paper to the extent that it affects the main claims and/or conclusions of the paper (regardless of whether the code and data are provided or not)?
    \item[] Answer: \answerYes{} 
    \item[] Justification: We elaborate on our experimental setup in \Cref{appendix:experiment_details}. Code is also provided in supplementary material.
    \item[] Guidelines:
    \begin{itemize}
        \item The answer \answerNA{} means that the paper does not include experiments.
        \item If the paper includes experiments, a \answerNo{} answer to this question will not be perceived well by the reviewers: Making the paper reproducible is important, regardless of whether the code and data are provided or not.
        \item If the contribution is a dataset and\slash or model, the authors should describe the steps taken to make their results reproducible or verifiable. 
        \item Depending on the contribution, reproducibility can be accomplished in various ways. For example, if the contribution is a novel architecture, describing the architecture fully might suffice, or if the contribution is a specific model and empirical evaluation, it may be necessary to either make it possible for others to replicate the model with the same dataset, or provide access to the model. In general. releasing code and data is often one good way to accomplish this, but reproducibility can also be provided via detailed instructions for how to replicate the results, access to a hosted model (e.g., in the case of a large language model), releasing of a model checkpoint, or other means that are appropriate to the research performed.
        \item While NeurIPS does not require releasing code, the conference does require all submissions to provide some reasonable avenue for reproducibility, which may depend on the nature of the contribution. For example
        \begin{enumerate}
            \item If the contribution is primarily a new algorithm, the paper should make it clear how to reproduce that algorithm.
            \item If the contribution is primarily a new model architecture, the paper should describe the architecture clearly and fully.
            \item If the contribution is a new model (e.g., a large language model), then there should either be a way to access this model for reproducing the results or a way to reproduce the model (e.g., with an open-source dataset or instructions for how to construct the dataset).
            \item We recognize that reproducibility may be tricky in some cases, in which case authors are welcome to describe the particular way they provide for reproducibility. In the case of closed-source models, it may be that access to the model is limited in some way (e.g., to registered users), but it should be possible for other researchers to have some path to reproducing or verifying the results.
        \end{enumerate}
    \end{itemize}

\item {\bf Open access to data and code}
    \item[] Question: Does the paper provide open access to the data and code, with sufficient instructions to faithfully reproduce the main experimental results, as described in supplemental material?
    \item[] Answer: \answerYes{} 
    \item[] Justification: We provide the code used for IRL and to generate \Cref{fig:dual_geometry} in the supplementary material.
    \item[] Guidelines:
    \begin{itemize}
        \item The answer \answerNA{} means that paper does not include experiments requiring code.
        \item Please see the NeurIPS code and data submission guidelines (\url{https://neurips.cc/public/guides/CodeSubmissionPolicy}) for more details.
        \item While we encourage the release of code and data, we understand that this might not be possible, so \answerNo{} is an acceptable answer. Papers cannot be rejected simply for not including code, unless this is central to the contribution (e.g., for a new open-source benchmark).
        \item The instructions should contain the exact command and environment needed to run to reproduce the results. See the NeurIPS code and data submission guidelines (\url{https://neurips.cc/public/guides/CodeSubmissionPolicy}) for more details.
        \item The authors should provide instructions on data access and preparation, including how to access the raw data, preprocessed data, intermediate data, and generated data, etc.
        \item The authors should provide scripts to reproduce all experimental results for the new proposed method and baselines. If only a subset of experiments are reproducible, they should state which ones are omitted from the script and why.
        \item At submission time, to preserve anonymity, the authors should release anonymized versions (if applicable).
        \item Providing as much information as possible in supplemental material (appended to the paper) is recommended, but including URLs to data and code is permitted.
    \end{itemize}

\item {\bf Experimental setting/details}
    \item[] Question: Does the paper specify all the training and test details (e.g., data splits, hyperparameters, how they were chosen, type of optimizer) necessary to understand the results?
    \item[] Answer: \answerYes{} 
    \item[] Justification: Our experiments mainly help illustrate the proposed theoretical mechanism. We provide these details in \Cref{appendix:experiment_details} with specific hyperparameters for IRL provided in the supplementary material.
    \item[] Guidelines:
    \begin{itemize}
        \item The answer \answerNA{} means that the paper does not include experiments.
        \item The experimental setting should be presented in the core of the paper to a level of detail that is necessary to appreciate the results and make sense of them.
        \item The full details can be provided either with the code, in appendix, or as supplemental material.
    \end{itemize}

\item {\bf Experiment statistical significance}
    \item[] Question: Does the paper report error bars suitably and correctly defined or other appropriate information about the statistical significance of the experiments?
    \item[] Answer: \answerNA{} 
    \item[] Justification: Our experiments mainly help illustrate the proposed theoretical mechanism. Hence, error bars are not relevant here.
    \item[] Guidelines:
    \begin{itemize}
        \item The answer \answerNA{} means that the paper does not include experiments.
        \item The authors should answer \answerYes{} if the results are accompanied by error bars, confidence intervals, or statistical significance tests, at least for the experiments that support the main claims of the paper.
        \item The factors of variability that the error bars are capturing should be clearly stated (for example, train/test split, initialization, random drawing of some parameter, or overall run with given experimental conditions).
        \item The method for calculating the error bars should be explained (closed form formula, call to a library function, bootstrap, etc.)
        \item The assumptions made should be given (e.g., Normally distributed errors).
        \item It should be clear whether the error bar is the standard deviation or the standard error of the mean.
        \item It is OK to report 1-sigma error bars, but one should state it. The authors should preferably report a 2-sigma error bar than state that they have a 96\% CI, if the hypothesis of Normality of errors is not verified.
        \item For asymmetric distributions, the authors should be careful not to show in tables or figures symmetric error bars that would yield results that are out of range (e.g., negative error rates).
        \item If error bars are reported in tables or plots, the authors should explain in the text how they were calculated and reference the corresponding figures or tables in the text.
    \end{itemize}

\item {\bf Experiments compute resources}
    \item[] Question: For each experiment, does the paper provide sufficient information on the computer resources (type of compute workers, memory, time of execution) needed to reproduce the experiments?
    \item[] Answer: \answerNA{} 
    \item[] Justification: Our experiments mainly help illustrate the proposed theoretical mechanism. The required compute is negligible, however we briefly mention this in \Cref{appendix:experiment_details}.
    \item[] Guidelines:
    \begin{itemize}
        \item The answer \answerNA{} means that the paper does not include experiments.
        \item The paper should indicate the type of compute workers CPU or GPU, internal cluster, or cloud provider, including relevant memory and storage.
        \item The paper should provide the amount of compute required for each of the individual experimental runs as well as estimate the total compute. 
        \item The paper should disclose whether the full research project required more compute than the experiments reported in the paper (e.g., preliminary or failed experiments that didn't make it into the paper). 
    \end{itemize}
    
\item {\bf Code of ethics}
    \item[] Question: Does the research conducted in the paper conform, in every respect, with the NeurIPS Code of Ethics \url{https://neurips.cc/public/EthicsGuidelines}?
    \item[] Answer: \answerYes{} 
    \item[] Justification: We checked the guidelines and we are compliant with them.
    \item[] Guidelines:
    \begin{itemize}
        \item The answer \answerNA{} means that the authors have not reviewed the NeurIPS Code of Ethics.
        \item If the authors answer \answerNo, they should explain the special circumstances that require a deviation from the Code of Ethics.
        \item The authors should make sure to preserve anonymity (e.g., if there is a special consideration due to laws or regulations in their jurisdiction).
    \end{itemize}

\item {\bf Broader impacts}
    \item[] Question: Does the paper discuss both potential positive societal impacts and negative societal impacts of the work performed?
    \item[] Answer: \answerYes{} 
    \item[] Justification: Although our contribution is theoretical, advances in IRL and imitation learning may affect how AI systems learn from human behavior. These methods could support safer alignment with expert intent, but they may also reproduce or amplify biases, errors, or unsafe behavior present in the demonstrations, underscoring the importance of demonstration quality and appropriate oversight.
    \item[] Guidelines:
    \begin{itemize}
        \item The answer \answerNA{} means that there is no societal impact of the work performed.
        \item If the authors answer \answerNA{} or \answerNo, they should explain why their work has no societal impact or why the paper does not address societal impact.
        \item Examples of negative societal impacts include potential malicious or unintended uses (e.g., disinformation, generating fake profiles, surveillance), fairness considerations (e.g., deployment of technologies that could make decisions that unfairly impact specific groups), privacy considerations, and security considerations.
        \item The conference expects that many papers will be foundational research and not tied to particular applications, let alone deployments. However, if there is a direct path to any negative applications, the authors should point it out. For example, it is legitimate to point out that an improvement in the quality of generative models could be used to generate Deepfakes for disinformation. On the other hand, it is not needed to point out that a generic algorithm for optimizing neural networks could enable people to train models that generate Deepfakes faster.
        \item The authors should consider possible harms that could arise when the technology is being used as intended and functioning correctly, harms that could arise when the technology is being used as intended but gives incorrect results, and harms following from (intentional or unintentional) misuse of the technology.
        \item If there are negative societal impacts, the authors could also discuss possible mitigation strategies (e.g., gated release of models, providing defenses in addition to attacks, mechanisms for monitoring misuse, mechanisms to monitor how a system learns from feedback over time, improving the efficiency and accessibility of ML).
    \end{itemize}
    
\item {\bf Safeguards}
    \item[] Question: Does the paper describe safeguards that have been put in place for responsible release of data or models that have a high risk for misuse (e.g., pre-trained language models, image generators, or scraped datasets)?
    \item[] Answer: \answerNA{} 
    \item[] Justification: As a theoretical contribution, this paper poses no such risks.
    \item[] Guidelines:
    \begin{itemize}
        \item The answer \answerNA{} means that the paper poses no such risks.
        \item Released models that have a high risk for misuse or dual-use should be released with necessary safeguards to allow for controlled use of the model, for example by requiring that users adhere to usage guidelines or restrictions to access the model or implementing safety filters. 
        \item Datasets that have been scraped from the Internet could pose safety risks. The authors should describe how they avoided releasing unsafe images.
        \item We recognize that providing effective safeguards is challenging, and many papers do not require this, but we encourage authors to take this into account and make a best faith effort.
    \end{itemize}

\item {\bf Licenses for existing assets}
    \item[] Question: Are the creators or original owners of assets (e.g., code, data, models), used in the paper, properly credited and are the license and terms of use explicitly mentioned and properly respected?
    \item[] Answer: \answerNA{} 
    \item[] Justification: Apart from common Python libraries like Numpy/Jax, we do not use any other code, data, models, or other assets.
    \item[] Guidelines:
    \begin{itemize}
        \item The answer \answerNA{} means that the paper does not use existing assets.
        \item The authors should cite the original paper that produced the code package or dataset.
        \item The authors should state which version of the asset is used and, if possible, include a URL.
        \item The name of the license (e.g., CC-BY 4.0) should be included for each asset.
        \item For scraped data from a particular source (e.g., website), the copyright and terms of service of that source should be provided.
        \item If assets are released, the license, copyright information, and terms of use in the package should be provided. For popular datasets, \url{paperswithcode.com/datasets} has curated licenses for some datasets. Their licensing guide can help determine the license of a dataset.
        \item For existing datasets that are re-packaged, both the original license and the license of the derived asset (if it has changed) should be provided.
        \item If this information is not available online, the authors are encouraged to reach out to the asset's creators.
    \end{itemize}

\item {\bf New assets}
    \item[] Question: Are new assets introduced in the paper well documented and is the documentation provided alongside the assets?
    \item[] Answer: \answerYes{} 
    \item[] Justification: In order to run the experiments, we implemented a detailed code-base. This asset is provided in an anonymized zip file. It will be made public after paper decisions. The zip file includes a README explaining how to employ the code-base.
    \item[] Guidelines:
    \begin{itemize}
        \item The answer \answerNA{} means that the paper does not release new assets.
        \item Researchers should communicate the details of the dataset\slash code\slash model as part of their submissions via structured templates. This includes details about training, license, limitations, etc. 
        \item The paper should discuss whether and how consent was obtained from people whose asset is used.
        \item At submission time, remember to anonymize your assets (if applicable). You can either create an anonymized URL or include an anonymized zip file.
    \end{itemize}

\item {\bf Crowdsourcing and research with human subjects}
    \item[] Question: For crowdsourcing experiments and research with human subjects, does the paper include the full text of instructions given to participants and screenshots, if applicable, as well as details about compensation (if any)? 
    \item[] Answer: \answerNA{} 
    \item[] Justification: The paper does not involve crowdsourcing nor research with human subjects.
    \item[] Guidelines:
    \begin{itemize}
        \item The answer \answerNA{} means that the paper does not involve crowdsourcing nor research with human subjects.
        \item Including this information in the supplemental material is fine, but if the main contribution of the paper involves human subjects, then as much detail as possible should be included in the main paper. 
        \item According to the NeurIPS Code of Ethics, workers involved in data collection, curation, or other labor should be paid at least the minimum wage in the country of the data collector. 
    \end{itemize}

\item {\bf Institutional review board (IRB) approvals or equivalent for research with human subjects}
    \item[] Question: Does the paper describe potential risks incurred by study participants, whether such risks were disclosed to the subjects, and whether Institutional Review Board (IRB) approvals (or an equivalent approval/review based on the requirements of your country or institution) were obtained?
    \item[] Answer: \answerNA{} 
    \item[] Justification: The paper does not involve crowdsourcing nor research with human subjects.
    \item[] Guidelines:
    \begin{itemize}
        \item The answer \answerNA{} means that the paper does not involve crowdsourcing nor research with human subjects.
        \item Depending on the country in which research is conducted, IRB approval (or equivalent) may be required for any human subjects research. If you obtained IRB approval, you should clearly state this in the paper. 
        \item We recognize that the procedures for this may vary significantly between institutions and locations, and we expect authors to adhere to the NeurIPS Code of Ethics and the guidelines for their institution. 
        \item For initial submissions, do not include any information that would break anonymity (if applicable), such as the institution conducting the review.
    \end{itemize}

\item {\bf Declaration of LLM usage}
    \item[] Question: Does the paper describe the usage of LLMs if it is an important, original, or non-standard component of the core methods in this research? Note that if the LLM is used only for writing, editing, or formatting purposes and does \emph{not} impact the core methodology, scientific rigor, or originality of the research, declaration is not required.
    \item[] Answer: \answerNA{} 
    \item[] Justification: The core method development in this research does not involve LLMs as any important, original, or non-standard components.    \item[] Guidelines:
    \begin{itemize}
        \item The answer \answerNA{} means that the core method development in this research does not involve LLMs as any important, original, or non-standard components.
        \item Please refer to our LLM policy in the NeurIPS handbook for what should or should not be described.
    \end{itemize}

\end{enumerate}

\end{document}